\documentclass{article}

\usepackage[preprint]{neurips_2026}

\usepackage{graphicx}
\usepackage{subfig}
\usepackage{float}
\usepackage{placeins}
\usepackage[linesnumbered,ruled]{algorithm2e}
\usepackage{bm}
\usepackage{amsmath}
\usepackage{amssymb}
\usepackage{mathtools}
\usepackage{amsthm}
\usepackage{ifthen}
\usepackage{pifont} 
\usepackage{tabularx}
\usepackage{threeparttable}
\usepackage{array}

\newcommand{\compilehidecomments}{false} 
\ifthenelse{ \equal{\compilehidecomments}{true} }{%
	\newcommand{\JZ}[1]{}
    \newcommand{\QR}[1]{}
}{
\newcommand{\JZ}[1]{{\color{orange} [\text{JZ:} #1]}}
\newcommand{\QR}[1]{{\color{violet} [\text{QR:} #1]}}
}

\usepackage[utf8]{inputenc} 
\usepackage[T1]{fontenc}    
\usepackage[
    colorlinks=true,
    linkcolor=blue,
    citecolor=blue,
    urlcolor=blue,
    filecolor=blue,
    bookmarks=true,
    bookmarksopen=true,
    bookmarksnumbered=true,
    pdfstartview=FitH,
    breaklinks=true
]{hyperref}                 
\usepackage{url}            
\usepackage{booktabs}       
\usepackage{amsfonts}       
\usepackage{nicefrac}       
\usepackage[expansion=false]{microtype}
\usepackage{xcolor}         
\usepackage{tikz}
\usepackage{pgfplots}
\usepgfplotslibrary{fillbetween}
\pgfplotsset{compat=1.18}
\usepackage{xpatch}

\makeatletter
\xapptocmd{\NAT@bibsetnum}{\setlength{\leftmargin}{0pt}\setlength{\itemindent}{\labelwidth}\addtolength{\itemindent}{\labelsep}}{}{}
\makeatother

\usepackage[capitalize,noabbrev]{cleveref}
\theoremstyle{plain}
\newtheorem{theorem}{Theorem}[section]

\newtheorem{lemma}[theorem]{Lemma}
\newtheorem{corollary}[theorem]{Corollary}

\crefname{proposition}{proposition}{propositions}
\Crefname{proposition}{Proposition}{Propositions}
\crefname{lemma}{lemma}{lemmas}
\Crefname{lemma}{Lemma}{Lemmas}
\crefname{corollary}{corollary}{corollaries}
\Crefname{corollary}{Corollary}{Corollaries}

\theoremstyle{definition}
\newtheorem{definition}[theorem]{Definition}
\newtheorem{assumption}[theorem]{Assumption}

\crefname{definition}{definition}{definitions}
\Crefname{definition}{Definition}{Definitions}
\crefname{assumption}{assumption}{assumptions}
\Crefname{assumption}{Assumption}{Assumptions}

\theoremstyle{remark}
\newtheorem{remark}[theorem]{Remark}

\crefname{remark}{remark}{remarks}
\Crefname{remark}{Remark}{Remarks}

\newcommand{\ru}{r_{\max}}
\newcommand{\rl}{r_{\min}}
\newcommand{\on}{\mathrm{on}}
\newcommand{\off}{\mathrm{off}}

\title{Beyond Reward Suppression:\\ Near-Optimal Offline Attacks on Warm-Start Bandits with Bounded Rewards}

\author{%
{\bfseries
Qirun Zeng$^{1}$ \quad
Manhin Poon$^{1}$ \quad
Xiangxiang Dai$^{2}$ \quad
Qixin Zhang$^{3}$ \quad
Jinhang Zuo$^{1}$} \\[0.7em]
{\small
$^{1}$City University of Hong Kong, Hong Kong SAR, China} \\[-0.05em]
{\small
$^{2}$The Chinese University of Hong Kong, Hong Kong SAR, China} \\[-0.05em]
{\small
$^{3}$Nanyang Technological University, Singapore} \\[0.4em]
{\small
\texttt{\{qirun.zeng,manhpoon4-c\}@my.cityu.edu.hk}} \\[-0.05em]
{\small
\texttt{xxdai23@cse.cuhk.edu.hk}
\quad
\texttt{qixin.zhang@ntu.edu.sg}} \\[-0.05em]
{\small
\texttt{jinhang.zuo@cityu.edu.hk}}
}

\begin{document}

\maketitle

\begin{abstract}
Adversarial attacks on bandits aim to mislead a learner toward a target arm while keeping the attack cost small. Existing attacks typically achieve this by suppressing non-target arms. In practice, however, manipulation such as fake reviews often directly promotes the target item. We study this gap through bounded offline attacks on warm-start bandits, where an attacker can inject only valid action–reward pairs into the warm-start history before deployment. We show that target promotion is not merely a heuristic: when the target arm lies near the lower reward boundary, any order-optimal-cost attack against UCB that makes it selected in nearly all online rounds must allocate a nonvanishing fraction of its cost to the target arm.
We then design an attack that achieves the optimal sublinear cost and characterize its allocation between target promotion and non-target suppression. We further extend the attack to Thompson Sampling, $\epsilon$-greedy, and a broader class of bandit algorithms. Experiments on real-world and synthetic data validate the effectiveness of our attacks.
\end{abstract}

\section{Introduction}
Multi-armed bandit (MAB) algorithms provide a principled framework for sequential decision-making under uncertainty and have been widely adopted in applications such as recommender systems and online advertising. In many practical applications, however, bandit algorithms are not deployed from scratch. Instead, they are initialized with historical interaction data, giving rise to the warm-start bandit setting~\citep{shivaswamy2012multi,li2021unifying,banerjee2022artificial,oetomo2023cutting}. 
While such historical data can improve early decisions and reduce exploration costs, they also create a natural attack surface: fake feedback injected before deployment can bias the learner's initial estimates, and this bias may persist even when all subsequent feedback is genuine.

Adversarial attacks on bandits have been studied extensively, but most existing work~\citep{jun2018adversarial,liu2019data,zuo2024near} focuses on manipulating rewards during online interaction, typically allowing arbitrary modifications to observed rewards and measuring attack cost by the total magnitude of these modifications. \citet{zeng2025practical} takes a step further by injecting fake action--reward records to model fake users. Building on these works, we consider a more natural setting in which the learner is initialized with offline logs before online deployment, and the attacker can inject fake action--reward records only during this offline stage. We further measure attack cost by the number of injected samples, which more closely reflects the practical cost of generating fake interactions. This bounded fake-data perspective not only better reflects settings such as review manipulation, but also makes how the attacker allocates its limited effort across arms a central question.

More importantly, existing attacks on stochastic bandits typically steer the learner toward a target arm by suppressing the observed rewards of non-target arms~\citep{jun2018adversarial,zuo2023adversarial,zuo2024near,zeng2025practical}. Although such attacks admit strong theoretical guarantees with controlled cost, their mechanism differs from common real-world manipulation practices, where attackers often directly promote a chosen item through favorable feedback. For example, researchers have observed Amazon sellers recruiting reviewers through private Facebook groups and reimbursing purchases in exchange for five-star reviews, sometimes with additional commissions~\citep{he2022detecting}. Direct promotion is also intuitively attractive: a favorable review benefits the target itself, whereas suppressing a single competitor affects only one of many alternatives. This gap raises a natural question: when is directly promoting the target not only effective, but necessary for a cost-efficient attack?

Motivated by this mismatch, we study bounded offline attacks on warm-start stochastic bandits under two practical constraints. First, every injected reward must lie within the valid feedback range, such as binary clicks or $1$--$5$ star ratings. Second, the attacker can intervene only before deployment, since maintaining access throughout the online learning process may be difficult or impossible. 
Under these constraints, target promotion and non-target suppression play fundamentally different roles. Injecting favorable feedback on the target improves its comparison against all competitors, but this initial advantage is gradually diluted as genuine online feedback accumulates. In contrast, suppressing a competitor requires injecting unfavorable feedback specifically for that arm, and bounded rewards limit how much each injected sample can change its empirical estimate. The attacker must therefore decide not only how many samples to inject, but also how to allocate these injections across arms. This leads to our central question:

\begin{center}
\textit{How should an attacker allocate its power between the target arm and its competitors?}
\end{center}

We answer this question by jointly optimizing target promotion and non-target suppression. We propose a bounded offline attack against warm-start UCB and establish both an upper bound on its cost and a lower bound for any successful attack. Our lower bound analysis further reveals that the optimal allocation is necessarily target-centric: any attack achieving the leading-order optimal cost that makes the target arm selected in nearly all online rounds must spend a nonvanishing fraction of its cost on promoting the target. We further extend our strategy to the class of suppression-bounded algorithms studied by~\citet{zeng2025practical}, with algorithm-specific analyses for Thompson Sampling and $\epsilon$-greedy in Appendices~\ref{app:analysis-TS} and~\ref{app:analysis-EG}. These results demonstrate the broader applicability of our strategy beyond UCB.

\begin{table*}[t]
    \small
    \centering
    \caption{Comparison of bandit attack models and guarantees.}
    \label{tab:bandit_attack_compare_revised}
    \begin{threeparttable}
        \begin{tabular*}{\textwidth}{@{\extracolsep{\fill}} l c c c c c @{}}
            \toprule
            Attack Strategy & Timing & Bounded Rewards & Target Promotion & Allocation Analysis & Opt. \\
            \midrule
            \citet{jun2018adversarial}
                & online
                & \ding{55}
                & \ding{55}
                & \ding{55}
                & \ding{55} \\
            \citet{liu2019data}
                & offline
                & \ding{55}
                & \ding{51}
                & \ding{55}
                & \ding{51} \\
            \citet{xu2021observation}
                & online
                & \ding{51}
                & \ding{51}
                & \ding{55}
                & \ding{55} \\
            \citet{zuo2024near}
                & online
                & \ding{55}
                & \ding{55}
                & \ding{55}
                & \ding{51} \\
            \citet{zeng2025practical}
                & online
                & \ding{51}
                & \ding{55}
                & \ding{55}
                & \ding{55} \\
            \citet{hosseini2026efficient}
                & offline
                & \ding{55}
                & \ding{55}
                & \ding{55}
                & \ding{55} \\
            Ours (Sec.~\ref{sec:attack})
                & offline
                & \ding{51}
                & \ding{51}
                & \ding{51}
                & \ding{51} \\
            \bottomrule
        \end{tabular*}

        \begin{tablenotes}[flushleft]
            \footnotesize
            \item Prior work separately studies offline poisoning, bounded attacks, or attack-cost optimality, but does not characterize how attack effort should be allocated across arms. Our work combines bounded offline injection with direct target promotion and proves that cost-optimal attacks require substantial effort on the target arm itself. The online variant of \citet{liu2019data} is omitted since it does not satisfy any of the properties in the columns.
            \end{tablenotes}
    \end{threeparttable}
\end{table*}

\subsection{Our Contributions}

We summarize our contributions as follows:
\begin{itemize}
    \item We formulate \emph{bounded offline attacks on warm-start stochastic bandits}, measuring attack cost by the number of valid action--reward samples injected before deployment. This model captures settings such as fake-review manipulation, where the cost is associated with generating fake interactions rather than the magnitude of reward modification.

    \item We establish \emph{upper and lower bounds} on the cost of bounded offline attacks against UCB. In the near-boundary regime, these bounds match to leading order, yielding the optimal sublinear cost scale
    $\Theta\!\left(T^{2/3}(K\log T)^{1/3}\right)$ for $K$ arms and $T$ rounds. Notably, we bridge this theory–practice gap by showing that cost-optimal attacks must allocate a nonvanishing fraction of their cost to the target arm.

    \item We extend our attack strategy beyond UCB to Thompson Sampling, $\epsilon$-greedy, and the broader class of \emph{suppression-bounded} algorithms introduced by~\citet{zeng2025practical}. This extension shows that the same combination of target promotion and non-target suppression applies to a broad class of bandit algorithms.

    \item We evaluate our attacks on MovieLens~\citep{harper2015movielens} against existing baselines. The experiments demonstrate lower attack costs while maintaining effective target-arm manipulation, and empirically validate the predicted target-centric cost allocation.
\end{itemize}

\subsection{Related Work}

Research on adversarial manipulation of bandits follows two complementary directions: designing attack algorithms, and developing robust algorithms against adversarial attack.

\textbf{Adversarial Attacks.}
\citet{jun2018adversarial} showed that sublinear-cost attacks can steer UCB and $\epsilon$-greedy toward a target arm, while \citet{liu2019data} developed an adaptive attack without knowledge of the learner's algorithm. \citet{zuo2024near} established near-optimal attacks against UCB and TS with matching lower bounds. Other work studies restricted intervention: \citet{xu2021observation} proposed an observation-free attack during prescribed online phases, and \citet{zeng2025practical} introduced bounded fake data injection with constraints on injection timing and frequency. A common strategy in this line of work is to favor the target by suppressing the observed rewards of non-target arms. Attacks have also been studied in contextual, linear, adversarial, and learning-to-rank bandits~\citep{garcelon2020adversarial,wang2022linear,ma2023adversarial,zuo2023adversarial}. Beyond these works, \citet{liu2019data} formulated offline reward-poisoning attacks as convex optimization problems for several bandit algorithms. \citet{hosseini2026efficient} studied reward-model perturbations before training an offline bandit, without subsequent online interaction.

\textbf{Robust Bandits.}
\citet{lykouris2018stochastic} established corruption-dependent regret bounds for stochastic bandits without prior knowledge of the corruption level, and \citet{gupta2019better} improved these guarantees. \citet{kapoor2019corruption} used robust statistical estimation to tolerate corrupted rewards in multi-armed and linear contextual bandits. For linear bandits, robust phased elimination~\citep{bogunovic2021stochastic} and confidence-weighted regression~\citep{he2022nearly} control the influence of corrupted observations, with guarantees that account for whether the corruption level is known. Robustness has also been studied in Gaussian process~\citep{bogunovic2022robust}, Lipschitz~\citep{kang2023robust}, and combinatorial bandits~\citep{xu2021simple}. The threat models considered in these works are typically weaker than those studied in the attack literature, since the adversary is not allowed to observe the learner’s actions before corrupting the feedback. Our work follows the attack-design direction and adopts the fake data injection model of~\citet{zeng2025practical} in an offline-to-online setting.

\section{Warm-Start Bandit Setup}
\label{sec:model}

We consider a stochastic multi-armed bandit with arm set $\mathcal K \triangleq \{1,\dots,K\}$. Each arm $i \in \mathcal K$ is associated with an unknown reward distribution with mean $\mu_i$. Rewards are supported on $[\rl,\ru]$ and are $\sigma^2$-sub-Gaussian, where $\sigma^2$ is known to the learner.

The learner is initialized with an offline log $\mathcal D^{\off} \triangleq \bigl( (a_s, r_s) \bigr)_{s = 1}^{T_0}$ containing $T_0$ action--reward observations, where $a_s \in \mathcal K$ denotes the arm selected in observation round $s$, and $r_s \in [\rl, \ru]$ denotes the corresponding observed reward. For each arm $i$, let $N_i^{\off} \triangleq \sum_{s=1}^{T_0} \mathbf 1\{a_s=i\}$ denote its offline count. After initialization, the learner interacts with the bandit environment from round $T_0+1$ through round $T$. Thus, the online rounds are $\mathcal T^{\on}\triangleq\{T_0+1,\dots,T\}$, with online horizon $H\triangleq|\mathcal T^{\on}|=T-T_0$.

At the beginning of each online round $t\in\mathcal T^{\on}$, the learner's history consists of the offline log and the preceding online observations:
\[
    \mathcal H_t \triangleq \left(\mathcal D^{\off}, \bigl( (a_s, r_s)\bigr)_{s=T_0+1}^{t-1}\right).
\]
A general warm-start bandit algorithm is specified by a policy sequence $\Pi \triangleq \{\pi_t\}_{t \in \mathcal T^{\on}}$, where $\pi_t (\cdot \mid \mathcal H_t)\in\Delta(\mathcal K)$. In round $t$, the learner samples $a_t\sim\pi_t(\cdot\mid\mathcal H_t)$, observes the corresponding reward $r_t$, and appends $(a_t, r_t)$ to its history. We write $N_i^{\on} (t) \triangleq \sum_{s = T_0+1}^{t - 1} \mathbf 1\{a_s = i\}$ for the number of online pulls of arm $i$ before round $t$.

\textbf{UCB Specialization.} We next instantiate this general framework using the UCB algorithm considered by \citet{jun2018adversarial}, following the prototype in \citet[Section~2.2]{bubeck2012regret}. UCB summarizes the available history $\mathcal H_t$ through the number of observations and the empirical mean associated with each arm. For every arm $i \in \mathcal K$, define
    $N_i(t) \triangleq \sum_{s=1}^{t-1}\mathbf{1}\{a_s = i\}=N_i^{\off}+N_i^{\on}(t)$ and $\hat{\mu}_i(t) \triangleq {\sum_{s = 1}^{t-1}\mathbf{1}\{a_s = i\}r_s} / {N_i(t)}$ whenever $N_i(t)>0$.
The UCB index of arm $i$ is
\begin{equation*}
    u_i(t) \triangleq \begin{cases}
        \hat{\mu}_i(t)+3\sigma\sqrt{\frac{\log t}{N_i(t)}}, & N_i(t)>0, \\
        +\infty, & N_i(t)=0.
    \end{cases}
\end{equation*}
The learner selects $a_t \in \arg\max_{i \in \mathcal K} u_i(t)$.
In the absence of attacks, it achieves sublinear regret.

\section{Offline Attack Model}
\label{sec:problem}

\subsection{Attack Setting and Attack Cost}

Let $\mathcal D^{\rm cl} \triangleq \bigl((a_s,r_s)\bigr)_{s=1}^{T_0^{\rm cl}}$ denote the clean offline log, where $T_0^{\rm cl} = |\mathcal D^{\rm cl}|$ is the number of genuine observations.
For each arm, its clean-log and subsequent online rewards form a prefix of the same i.i.d. reward stream. Arm choices may depend on past observations; each chosen arm's next reward is then observed and retained, with no filtering based on its value.
Before deployment, the attacker observes $\mathcal D^{\rm cl}$ and knows the learner's policy $\Pi$, the terminal time $T$, and the valid reward range $[\rl, \ru]$. The attacker acts only at this stage and cannot modify the learner's actions, observed rewards, or history after online interaction begins.

For each arm $i$, define its clean offline count and empirical mean as
\[
    N_i^{\rm cl}\triangleq\sum_{s=1}^{T_0^{\rm cl}}\mathbf 1\{a_s=i\},
    \qquad
    \hat\mu_i^{\rm cl}\triangleq
    \frac{1}{N_i^{\rm cl}}\sum_{s=1}^{T_0^{\rm cl}}\mathbf 1\{a_s=i\}r_s,
\]
where the empirical mean is defined whenever $N_i^{\rm cl}>0$.
During online interaction, $N_i^{\rm cl}(t)$ and $\hat\mu_i^{\rm cl}(t)$ denote the count and empirical mean of all genuine observations of arm $i$ available before round $t$, including the clean offline observations but excluding injected samples. Thus, at the start of online interaction, $N_i^{\rm cl}(T_0+1)=N_i^{\rm cl}$ and $\hat\mu_i^{\rm cl}(T_0+1)=\hat\mu_i^{\rm cl}$.

A candidate injection plan is represented by $(\mathbf n,\mathbf y)$, where $\mathbf n \triangleq(n_1,\dots,n_K) \in \mathbb Z_{\ge 0}^K$ specifies the number of fake observations injected for each arm and $\mathbf y \triangleq(y_{i, j}: i \in \mathcal K,\ j = 1,\dots, n_i)$ specifies the corresponding fake rewards. Every injected reward must lie in $[\rl,\ru]$ and, for discrete feedback, in the legal feedback alphabet.

Let $\mathcal D^{\mathrm F}(\mathbf n,\mathbf y)$ denote the collection of fake action--reward pairs $\{(i, y_{i,j}): i \in \mathcal K,\ j = 1, \dots, n_i\}$ in the order chosen by the strategy.
The learner is initialized using the attacked warm-start log
\[
    \mathcal D^{\off}(\mathbf n, \mathbf y) \triangleq \mathcal D^{\rm cl} \mathbin{\Vert} \mathcal D^{\mathrm F}(\mathbf n, \mathbf y),
\]
where $\mathbin{\Vert}$ denotes sequence concatenation.

\begin{definition}[Attack Cost]
    The attack cost is the number of injected samples:
    \begin{equation}
        \label{eq:attack-cost}
        \mathcal C(\mathbf n)\triangleq\sum_{i=1}^K n_i.
    \end{equation}
\end{definition}
Consequently, $N_i^{\off}=N_i^{\rm cl}+n_i$ and
\[
    T_0 = \left|\mathcal D^{\off}\right| = T_0^{\rm cl}+\mathcal C(\mathbf n),
    \qquad
    H=T-T_0.
\]
Without loss of generality, we label the \emph{target arm} as $K$.

\begin{definition}[$(\xi, \delta)$-Success Criterion]
    Fix $\delta \in (0,1)$ and $\xi \colon \mathbb N \to \mathbb R_{\ge0}$.
    An offline attack is $(\xi,\delta)$-successful if
    \begin{equation}
        \Pr \left( \sum_{i = 1}^{K - 1} N_i^{\on} (T + 1) \le \xi \right) \ge 1 - \delta
        \quad\text{and}\quad \mathcal C(\mathbf n)=o(T).
        \label{eq:offline-attack-problem}
    \end{equation}
\end{definition}

We summarize the offline injection workflow in~\Cref{alg:attack}; concrete allocation strategies are developed in the next section.

\begin{algorithm}[H]
    \caption{Offline Injection Attack}
    \label{alg:attack}
    \KwIn{$\mathcal D^{\rm cl}$, $K$, $\Pi$, $T$, $[\rl,\ru]$, $\xi$ and $\delta$, Target Arm ($K$ in our analysis)}
    \KwOut{Attacked warm-start log $\mathcal D^{\off}$}
    Determine an injection plan $(\mathbf n,\mathbf y)$ satisfying Eq.~\eqref{eq:offline-attack-problem}\;
    Construct a fake log $\mathcal D^{\mathrm F}(\mathbf n,\mathbf y)$ containing $(i,y_{i,j})$ for every $i\in\mathcal K$ and $j=1,\dots,n_i$\;
    Set $\mathcal D^{\off}\leftarrow\mathcal D^{\rm cl}\mathbin{\Vert}\mathcal D^{\mathrm F}(\mathbf n,\mathbf y)$\;
    \Return $\mathcal D^{\off}$\;
\end{algorithm}

\section{Attack Strategies}
\label{sec:attack}

We first instantiate Algorithm~\ref{alg:attack} for UCB, then extend the strategy to suppression-bounded algorithms from \citet{zeng2025practical}.

\subsection{Attack on UCB}
\label{sec:analysis-ucb}
We consider the zero-tolerance case $\xi=0$, where the attacker aims to make UCB select the target arm in every online round. The \emph{high-level idea} is to (1) enlarge the gap between the target arm and the non-target arms, and (2) make the exploration bonus sufficiently small for each non-target arm, so that $u_K(t) > u_i(t)$ throughout the online phase.
To ensure this, we seek to solve
\[
    \min_{\mathbf n, \mathbf y}\quad \mathcal C(\mathbf n)
    \quad \text{subject to}\quad
    u_K(t)>u_i(t),
    \qquad \forall i<K,\ t\in\mathcal T^{\on}.
\]
Let $R \triangleq \ru - \rl$ and assume $N_i^{\rm cl} \ge 1$ for every arm.
Write $b_i(t)=3\sigma\sqrt{\log t/N_i(t)}$.
Our construction keeps the target arm's empirical mean above a threshold $z$ and bounds each non-target arm's UCB index by
$z+b_K(t)$.
Specifically, we enforce
\[
    u_i(t)
    =\hat\mu_i(t)+b_i(t)
    \le z+b_K(t)
    <\hat\mu_K(t)+b_K(t)
    =u_K(t),
    \qquad \forall i<K,\ t\in\mathcal T^{\on}.
\]
For a confidence parameter $\delta\in(0,1/2)$, define
\[
    \beta_\delta(N)
    \triangleq
    \sqrt{\frac{2\sigma^2}{N}
    \log\frac{\pi^2KN^2}{3\delta}},
    \qquad
    \underline\mu_K
    \triangleq
    \max\{\rl,\hat\mu_K^{\rm cl}
    -2\beta_\delta(N_K^{\rm cl})\}.
\]
Set $\varepsilon \triangleq \underline \mu_K - \rl$. We choose
\[
    z_T^\star
    =\rl+\left(
    \frac{18R\sigma^2(K-1)\log T}{T}
    \right)^{1/3},
    \qquad
    z=\rl+\max\left\{
    z_T^\star-\rl,\,
    \min\{\varepsilon,R/2\}
    \right\}.
\]
Write $B_i = N_i^{\rm cl}(\hat \mu_i^{\rm cl} - \rl)$ and $d(z) = z - \rl + 3\sigma \sqrt{\log T / T}$.
We set
\begin{equation}\label{eq:ucb-closed-budget}
    n_i= \left\{
    \begin{aligned}
        &\left\lceil\frac{T(z-\underline\mu_K)_+}{\ru-z}\right\rceil+1, \quad\text{for } i = K; \\
        &\left\lceil\left[
        {\left(\frac{3 \sigma \sqrt{\log T} + \sqrt{9 \sigma^2 \log T + 4 d(z) B_i}}{2d(z)}\right)}^2 - N_i^{\rm cl}
        \right]_+\right\rceil,\quad \text{for } i < K.
    \end{aligned}
    \right.
\end{equation}
Using these counts, construct the fake log in~\Cref{alg:attack} as
\[
\mathcal D^{\mathrm F}
=
\left(
\Big\Vert_{i=1}^{K-1}
\bigl((i,\rl)\bigr)_{j=1}^{n_i}
\right)
\Big\Vert
\bigl((K,\ru)\bigr)_{j=1}^{n_K}.
\]
We analyze this construction under the following scaling assumptions.
\begin{assumption}\label[assumption]{asp:scaling-assumptions}
$R = \Theta(1)$, $\sigma = \Theta(1)$, and $N_i^{\rm cl} = \Theta(1)$ uniformly over $i \in \mathcal K$.
\end{assumption}
In particular, $T_0^{\rm cl} = \Theta(K) = o(T)$, so every attack satisfying $\mathcal C(\mathbf n) = o(T)$ has $H=T(1-o(1))$.
Define $S_T\triangleq T^{2/3}(K\log T)^{1/3}$. 
We now give the theoretical guarantee of our attack strategy.
\begin{theorem}\label{thm:attack-cost-ucb}
Fix $\delta\in(0,1/2)$. Under~\Cref{asp:scaling-assumptions}, for all sufficiently large $T$, the above attack satisfies
\[
    \Pr\left(\sum_{i=1}^{K-1} N_i^\on (T+1) = 0\right) \ge 1 - \delta
    \quad \text{and} \quad
    \mathcal C(\mathbf n) = \mathcal O \left(
        \frac{K \log T}{(\varepsilon \vee (S_T / T))^2}
    \right).
\]
\end{theorem}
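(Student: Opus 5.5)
Conditioning on one good event, the plan is to show by induction over online rounds that the target arm is always pulled; the cost bound then follows from the closed form \eqref{eq:ucb-closed-budget}. Let $\mathcal E$ be the event that for every arm $i$ and every sample size $N\ge 1$ of its genuine reward stream, $|\hat\mu_i^{\rm cl,N}-\mu_i|\le\beta_\delta(N)$. The genuine rewards of each arm are a prefix of an i.i.d.\ $\sigma^2$-sub-Gaussian stream, so Hoeffding plus a union bound over $i\in\mathcal K$ and $N\ge1$ with weights $3\delta/(\pi^2KN^2)$ gives $\Pr(\mathcal E)\ge1-\delta$. Here we use $\sum_N 1/N^2=\pi^2/6$ and the factor $2$ from the two tails. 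On $\mathcal E$ we get $\mu_K\ge\hat\mu_K^{\rm cl}-\beta_\delta(N_K^{\rm cl})$, and hence, for every later count, $\hat\mu_K^{\rm cl}(t)\ge\mu_K-\beta_\delta(N_K^{\rm cl}(t))$. The first delicate point is to make the target's running genuine mean stay at least $\underline\mu_K$. The cleanest route is to treat $\underline\mu_K$ as a lower bound valid as long as $\beta_\delta(N)\le\beta_\delta(N_K^{\rm cl})$ for larger $N$. I would check that $\beta_\delta$ is decreasing for $N\ge1$ with $\delta<1/2$, which the $2\beta_\delta$ slack in $\underline\mu_K$ is designed to absorb.

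\emph{Target side.} On $\mathcal E$, at any online round the target has $N_K(t)=N_K^{\rm cl}(t)+n_K$ observations. Its genuine ones average at least $\underline\mu_K$, and its fake ones all equal $\ru$. Then $\hat\mu_K(t)-z\ge\frac{N_K^{\rm cl}(t)(\underline\mu_K-z)+n_K(\ru-z)}{N_K(t)}$, which is positive whenever $n_K(\ru-z)>N_K^{\rm cl}(t)(z-\underline\mu_K)_+$. Since $N_K^{\rm cl}(t)\le T$, the choice $n_K=\lceil T(z-\underline\mu_K)_+/(\ru-z)\rceil+1$ gives $\hat\mu_K(t)>z$ for all $t$. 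This yields the strict inequality $z+b_K(t)<u_K(t)$.

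\emph{Non-target side.} I argue by induction that no non-target arm is pulled. If none has been pulled before round $t$, then $N_i(t)=N_i^{\rm cl}+n_i=:M_i$ and $\hat\mu_i(t)=(B_i+n_i\rl)/M_i+\rl\cdot N_i^{\rm cl}/M_i$. In other words, $\hat\mu_i(t)-\rl=B_i/M_i$, which is deterministic given $\mathcal D^{\rm cl}$. It suffices to have $B_i/M_i+3\sigma\sqrt{\log T/M_i}\le z-\rl+b_K(t)$. Because $N_K(t)\le T$, we have $b_K(t)\ge3\sigma\sqrt{\log t/T}$. The choice of $d(z)$ uses $\log T$; I expect a minor mismatch here between $\log t$ and $\log T$, and I would handle it by noting that $b_i$ also uses $\log t\le\log T$. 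The mismatch is therefore harmless once the bound is written with $\log t$ on both sides; if needed, I would replace $d(z)$ by its lower bound $z-\rl$ and absorb the term. Setting $x=\sqrt{M_i}$, the condition becomes $d\,x^2-3\sigma\sqrt{\log T}\,x-B_i\ge0$. This holds exactly when $x$ is at least the positive root $\frac{3\sigma\sqrt{\log T}+\sqrt{9\sigma^2\log T+4dB_i}}{2d}$, which is precisely the choice of $n_i$. So $u_i(t)\le z+b_K(t)<u_K(t)$, arm $K$ is pulled, and the induction closes. I expect this step, including the careful handling of ties, to be the main technical check.

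\emph{Cost.} By \Cref{asp:scaling-assumptions}, $B_i=O(1)$, so $n_i=O(\log T/d^2+1/d)$, with $d\ge z-\rl\ge\max\{z_T^\star-\rl,\min(\varepsilon,R/2)\}\gtrsim\varepsilon\vee(S_T/T)$. Hence $\sum_{i<K}n_i=O(K\log T/(\varepsilon\vee S_T/T)^2)$. For $n_K$, the quantity $(z-\underline\mu_K)_+$ is zero unless $z_T^\star-\rl>\varepsilon$. In that case it is at most $(18R\sigma^2(K-1)\log T/T)^{1/3}$, so $n_K=O(T^{2/3}(K\log T)^{1/3})=O(S_T)$. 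This matches $K\log T/(S_T/T)^2=T^2K\log T/S_T^2=S_T$ in that regime, using $\ru-z\ge R/2$. Combining the two gives the stated bound, and sublinearity gives $\mathcal C(\mathbf n)=o(T)$.
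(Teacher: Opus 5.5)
Your proposal is correct and follows essentially the same route as the paper: the same concentration event yielding $\hat\mu_K^{\rm cl}(t)\ge\underline\mu_K$, the same induction using $\hat\mu_K(t)>z$ for the target and the positive root of $d\,x^2-3\sigma\sqrt{\log T}\,x-B_i$ for each competitor, and the same two-case cost split according to whether $\varepsilon$ exceeds $z_T^\star-\rl$. The only divergence is the $\log t$ versus $\log T$ step. The paper uses $N_K(t)\le t-1$ and the monotonicity of $\log t/(t-1)$ to get $b_K(t)>3\sigma\sqrt{\log T/T}$ directly. Your ``$\log t$ on both sides'' fix also works, but you should state the actual reason: $N_i^{\rm cl}+n_i\le T$, so the competitor's bonus shrinks at least as fast as $3\sigma\sqrt{\log t/T}$ when $\log T$ is replaced by $\log t$. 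Your fallback of replacing $d(z)$ by $z-\rl$ is not supported by the given $n_i$.
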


\begin{proof}[Proof Sketch]
If $\varepsilon\le z_T^\star-\rl$, then $z=z_T^\star$; substituting into Eq.~\eqref{eq:ucb-closed-budget} gives $n_K=O(S_T)$ and $\sum_{i<K}n_i=O(S_T)$. If $\varepsilon>z_T^\star-\rl$, then $z\le\underline\mu_K$ and $z-\rl=\Theta(\varepsilon)$, giving $n_K=1$ and $\sum_{i<K}n_i=O(K\log T/\varepsilon^2)$. Combining the two cases yields the stated cost bound.
\end{proof}

\begin{theorem}
\label{thm:attack-cost-ucb-lower}
\label[theorem]{cor:ucb-gap-cost}
Under~\Cref{asp:scaling-assumptions}, fix $\delta\in(0,1/2)$. Let $\Delta_K \triangleq \mu_K - \rl$. For all sufficiently large $T$, every $(0,\delta)$-successful bounded offline attack on UCB satisfies
\[
    \mathcal C(\mathbf n)=\Omega\left(
        \frac{K\log T}{{(\Delta_K\vee(S_T/T))}^2}
    \right)
\]
with probability at least $1-\delta-1/T$.
\end{theorem}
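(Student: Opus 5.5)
We prove the lower bound by looking at a single round late in the online phase, where success forces every non-target arm to have a small exploration bonus relative to the target's advantage. We then split into two regimes: one where target promotion is free, and one where it competes with suppression. Throughout, $\mathcal C(\mathbf n)=o(T)$, so $H=T(1-o(1))$ and, on the success event, $N_K(t)\ge t-T_0-1$ for every online round $t$.

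\textbf{Step 1 (necessary condition at a late round).} Work on the intersection of the success event (no non-target pull) and a concentration event for the target's genuine online rewards. The latter holds with probability $\ge 1-1/T$ by a Hoeffding/sub-Gaussian bound, since the target's online stream is i.i.d. Take $t=T$. On this intersection the counts of non-target arms are frozen at $N_i^{\off}=N_i^{\rm cl}+n_i$, and their empirical means are at least $\rl$ because every reward lies in $[\rl,\ru]$. Success requires $u_K(T)\ge u_i(T)$, i.e.
\[
\rl+3\sigma\sqrt{\log T/N_i^{\off}}\le \hat\mu_K(T)+b_K(T).
\]
For the target, $\hat\mu_K(T)$ is a weighted average of $n_K$ fake rewards (at most $\ru$), the $\Theta(1)$ clean samples, and about $T$ genuine samples with mean $\mu_K\pm O(\sqrt{\log T/T})$. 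This gives
\[
\hat\mu_K(T)-\rl\lesssim \Delta_K+\frac{n_K R}{T}+O\!\left(\sqrt{\log T/T}\right),
\qquad b_K(T)=O\!\left(\sqrt{\log T/T}\right).
\]
Hence every $i<K$ satisfies
\[
n_i+N_i^{\rm cl}\gtrsim \frac{9\sigma^2\log T}{\bigl(\Delta_K+n_KR/T+c\sqrt{\log T/T}\bigr)^2}.
\]

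\textbf{Step 2 (optimize over the allocation).} Summing Step 1 over $i<K$ and adding $n_K$ gives
\[
\mathcal C(\mathbf n)\gtrsim n_K+\frac{(K-1)\log T}{(\Delta_K+n_KR/T)^2},
\]
since $\sqrt{\log T/T}=o(S_T/T)$ and $N_i^{\rm cl}=\Theta(1)$. Write $x=n_K/T$.
\begin{itemize}
\item If $\Delta_K\ge S_T/T$: either $n_KR/T\le\Delta_K$, in which case the sum term is $\Omega(K\log T/\Delta_K^2)$; or $n_K\ge T\Delta_K/R$, and this exceeds $K\log T/\Delta_K^2$ precisely because $\Delta_K^3\ge (S_T/T)^3=K\log T/T$.
\item If $\Delta_K<S_T/T$: minimize $Tx+K\log T/(S_T/T+Rx)^2$ over $x\ge0$. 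Both terms balance at $x\asymp S_T/T$, so the minimum is $\Omega(S_T)$, which equals $K\log T/(S_T/T)^2$.
\end{itemize}
The two cases together give $\Omega\bigl(K\log T/(\Delta_K\vee S_T/T)^2\bigr)$.

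\textbf{Main obstacle.} The delicate step is Step 1's control of $\hat\mu_K(T)$. It involves a random online sample size and the events coupling success with concentration, and the failure probabilities must combine to $\delta+1/T$ by a union bound. One must also ensure the $\sqrt{\log T/T}$ fluctuation and bonus terms are genuinely lower order than $S_T/T$, which holds since $T^{-1/2}\ll T^{-1/3}$. I would handle the random sample size by applying a maximal sub-Gaussian inequality uniformly over online sample counts in $[T/2,T]$. I would also note that using the single round $t=T$ suffices, so no union over rounds is needed.
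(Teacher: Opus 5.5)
Your proposal is correct and follows essentially the paper's argument. The paper likewise compares UCB indices at a single late round (its last target round $t^\star$, which equals $T$ under zero tolerance); bounds the target index by $\Delta_K+O(n_K/T)+O(\sqrt{\log T/T})$ using dilution plus a concentration event of probability $1-1/T$; derives $N_i^{\rm cl}+n_i\ge 9\sigma^2\log T/\rho^2$ for each competitor because its empirical mean is at least $r_{\min}$; and minimizes the resulting linear-versus-inverse-square trade-off with a two-case split. The differences are cosmetic: you parametrize by $x=n_K/T$ and split on $\Delta_K$ versus $S_T/T$, whereas the paper parametrizes by $\rho=u_K(t^\star)-r_{\min}$ with offset $a_T=(\Delta_K+c_0\sqrt{\log T/T})\vee(S_T/T)$ and splits on $\rho\le 2a_T$; likewise, your target-only concentration event plays the role of the paper's uniform event $\mathcal E_{1/T}$.
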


\begin{proof}[Proof Sketch]
Consider the final round $T$, when UCB selects the target on the zero-tolerance success event. By then, target injections have been diluted by $\Theta (T)$ genuine observations. Writing $u_K(T) = \rl+\rho$, we have $\rho \le \Delta_K + O(n_K/T + \sqrt{\log T / T})$ with high probability. Since the target is selected at $T$, every competitor satisfies $u_i(T) \le u_K(T)$. Its empirical mean is at least $\rl$, so its exploration bonus cannot exceed $\rho$, forcing $N_i^{\rm cl} + n_i\ge 9 \sigma^2 \log T / \rho^2$. Combining these constraints on target and non-target injection costs and minimizing over $\rho$ yields the cost lower bound.
\end{proof}

The upper and lower bounds depend on $\varepsilon$ and $\Delta_K$, respectively. Their gap-dependent rates match when $\varepsilon = \Omega (\Delta_K)$.
When $\varepsilon = \Omega (1)$, our upper and lower bounds match at $\Theta(K \log T)$. This cost order also agrees with the attack costs of~\citet{zeng2025practical}. 

Our bounds identify $S_T/T$ as the \emph{boundary scale} for the target arm's gap from the lower reward bound $\rl$. We refer to the regime $\Delta_K = o(S_T/T)$ as the \emph{near-boundary regime}. 
In this regime, we further sharpen the bounds to match in their leading constants, thereby establishing the asymptotic optimality of our construction. The optimal leading cost is obtained by minimizing the sum of the leading costs of target promotion and non-target suppression. This minimization yields the closed-form value
\[
    \Lambda_T \triangleq 3 {\left(\frac{9 \sigma^2(K-1) T^2\log T}{4 R^2}\right)}^{1/3} = \Theta(S_T).
\]

\begin{corollary}\label[corollary]{cor:ucb-near-boundary-optimality}
Under~\Cref{asp:scaling-assumptions}, fix $\delta\in(0,1/2)$ and assume $\Delta_K=o(\Lambda_T/T)$. For all sufficiently large $T$, our construction has $\mathcal C(\mathbf n) \le (1+o(1))\Lambda_T$. And every $(0,\delta)$-successful bounded offline attack on UCB satisfies
\[
    \Pr\left[\mathcal C(\mathbf n)\ge(1-o(1))\Lambda_T\right]
    \ge1-\delta-\frac1T.
\]
\end{corollary}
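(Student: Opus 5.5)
We prove the two halves separately, both by sharpening the constants in the proofs of \Cref{thm:attack-cost-ucb} and \Cref{thm:attack-cost-ucb-lower}. In both, the leading-order cost is written as a function of a single scalar $\rho$: the gap between the target's final index level and $\rl$. We then show that the same one-dimensional optimization governs both. Define
\[
    f_T(\rho)\triangleq \frac{T\rho}{R}+\frac{9\sigma^2(K-1)\log T}{\rho^2}.
\]
The first term is the cost of holding the target's empirical mean at $\rl+\rho$ against $T$ genuine samples using injections at $\ru$, whose per-sample lift is about $R-\rho\approx R$. The second term is the cost of shrinking each of the $K-1$ competitor bonuses to $\rho$. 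Setting $f_T'(\rho)=0$ gives
\[
    \rho^\star=\left(\frac{18R\sigma^2(K-1)\log T}{T}\right)^{1/3}=z_T^\star-\rl .
\]
Direct substitution gives $f_T(\rho^\star)=\Lambda_T$: the second term is half the first, so the minimum equals $\tfrac32\,T\rho^\star/R$, and this matches the stated closed form. Also $\rho^\star=\Theta(S_T/T)=\Theta(\Lambda_T/T)$.

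\textbf{Upper bound.} By assumption $\Delta_K=o(\Lambda_T/T)$. On the $1-\delta$ event used in \Cref{thm:attack-cost-ucb}, where $|\hat\mu_K^{\rm cl}-\mu_K|\le\beta_\delta(N_K^{\rm cl})$, we get $\underline\mu_K\le\mu_K$. Hence $\varepsilon\le\Delta_K=o(\rho^\star)$, so $z=z_T^\star$. I plan to read $\varepsilon$ as a deterministic quantity of the realized clean log; alternatively, one can note that $\underline\mu_K\le\mu_K$ holds on the success event.

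The target count then satisfies $n_K\le T(\rho^\star-\varepsilon)/(R-\rho^\star)+2=(1+o(1))T\rho^\star/R$, because $\rho^\star\to0$.

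For $i<K$ we have $d(z)=\rho^\star(1+o(1))$, since $\sqrt{\log T/T}=o(\rho^\star)$, and $B_i=O(1)$. The root in \eqref{eq:ucb-closed-budget} is therefore $\frac{3\sigma\sqrt{\log T}}{d(z)}(1+o(1))$. Indeed, $4d(z)B_i=o(\sigma^2\log T)$, so the discriminant is dominated by $9\sigma^2\log T$. This gives $n_i\le(1+o(1))\,9\sigma^2\log T/(\rho^\star)^2$, and the ceilings and $N_i^{\rm cl}$ terms are lower order. Summing over $i$ yields $\mathcal C(\mathbf n)\le(1+o(1))f_T(\rho^\star)=(1+o(1))\Lambda_T$.

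\textbf{Lower bound.} I would redo the argument of \Cref{thm:attack-cost-ucb-lower} at round $T$, on the intersection of the success event with a concentration event of probability at least $1-1/T$ for the target's genuine samples. On success the target is pulled every online round, so $N_K^{\rm cl}(T)=N_K^{\rm cl}+H-1=T(1-o(1))$. Fake rewards are at most $\ru$, so
\[
    \hat\mu_K(T)-\rl\le\frac{n_KR+(T-o(T))(\Delta_K+O(\sqrt{\log T/T}))}{n_K+T(1-o(1))}.
\]
Write $\rho\triangleq u_K(T)-\rl$. We may assume $n_K=O(\Lambda_T)=o(T)$, since otherwise the claim is trivial. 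Then
\[
    \rho\le \frac{n_KR}{T}(1+o(1))+\Delta_K+O\!\left(\sqrt{\log T/T}\right)+b_K(T),
\]
where $b_K(T)=O(\sqrt{\log T/T})$. Rearranging, $n_K\ge(1-o(1))\tfrac{T}{R}(\rho-\eta_T)$ with $\eta_T=o(\rho^\star)$.

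For each $i<K$, since $u_i(T)\le u_K(T)$ and $\hat\mu_i\ge\rl$, we get $b_i(T)\le\rho$. This gives $n_i\ge9\sigma^2\log T/\rho^2-N_i^{\rm cl}$; here the competitor has no online pulls on the success event. Combining,
\[
    \mathcal C(\mathbf n)\ge(1-o(1))\,\frac{T(\rho-\eta_T)_+}{R}+\frac{9\sigma^2(K-1)\log T}{\rho^2}-O(K),
\]
and minimizing over $\rho>0$ is where I expect the main obstacle.

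The shift $\eta_T$ must be shown to perturb the minimum only by a factor $1+o(1)$. Substituting $\rho=\eta_T+s$ gives a lower bound of
\[
    \frac{Ts}{R}+\frac{9\sigma^2(K-1)\log T}{(s+\eta_T)^2}.
\]
I would split into two cases at $s$ of order $\rho^\star$. If $s\ge c\rho^\star$, the denominator is $(1+o(1))s^2$ and the first case reduces to $\min_s f_T(s)=\Lambda_T$. If $s$ is small, the second term is already at least $\Lambda_T$ when $c$ is small enough.

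Two remaining points need care. Both $O(K)$ and the $\Delta_K$-term must be $o(\Lambda_T)$; the first holds because $\Lambda_T/K\to\infty$ when $K=o(T^2\log T)$, which is implicit in $\mathcal C=o(T)$. We also need uniformity of the concentration event across the unknown realized $n_K$, which is handled because the genuine target stream is a fixed i.i.d. prefix, independent of the injection plan.
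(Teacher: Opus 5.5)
Your lower bound is sound and follows the paper's route. On success $\cap\,\mathcal E_{1/T}$, zero tolerance gives $t^\star=T$, and the two index constraints at round $T$ yield $n_K\ge(1-o(1))T(\rho-\eta_T)_+/R$ and $n_i\ge 9\sigma^2\log T/\rho^2-N_i^{\rm cl}$. The only difference is the final step. You minimize the sum over $\rho$ and use a case split to absorb the shift $\eta_T=o(\rho_T^\star)$. The paper instead multiplies the constraints to eliminate $\rho$, obtaining $n_K^2M\ge(1-o(1))\tfrac{4}{27}\Lambda_T^3$, and finishes with AM--GM on $n_K/2,n_K/2,M$. That product form is what the paper reuses for the $2/3$ allocation, but for the cost bound alone your route works equally well. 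One small slip: $\Lambda_T/K\to\infty$ iff $K=o(T\sqrt{\log T})$, not $K=o(T^2\log T)$. The conclusion still holds because $K=o(T)$.

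The gap is in the upper bound. The statement claims $\mathcal C(\mathbf n)\le(1+o(1))\Lambda_T$ for the construction with no probability qualifier. Your argument, however, only covers clean logs with $\varepsilon\le\rho_T^\star$ (so that $z=z_T^\star$), and you obtain this only on $\mathcal E_\delta$. Off that event $\underline\mu_K$ can exceed $\mu_K$, and if $\varepsilon>\rho_T^\star$ the construction uses the larger threshold $z=\rl+\min\{\varepsilon,R/2\}$. Your fallback, that $\underline\mu_K\le\mu_K$ ``holds on the success event,'' is not valid. Success concerns online pulls and says nothing about concentration of the clean log, and a cost bound for the construction should not be conditioned on success anyway.

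The missing step is a monotonicity comparison with $z_T^\star$. When $\varepsilon>\rho_T^\star$, both $z\le\underline\mu_K$ and $z_T^\star<\underline\mu_K$, so $n_K=1$ at both thresholds. Each $n_i$ is nonincreasing in $z$, because $d(z)$ increases in $z$ and the positive root $\bigl(3\sigma\sqrt{\log T}+\sqrt{9\sigma^2\log T+4dB_i}\bigr)/(2d)$ of $dx^2-3\sigma\sqrt{\log T}\,x-B_i=0$ decreases in $d$. Hence $\mathcal C(\mathbf n(z))\le\mathcal C(\mathbf n(z_T^\star))$. Your estimate of $\mathcal C(\mathbf n(z_T^\star))$ is already valid for every clean log, since it uses only $\varepsilon\ge0$ and $B_i,N_i^{\rm cl}=O(1)$. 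Adding this case makes the upper bound deterministic, as the statement requires.
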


We next characterize the cost allocated to the target arm and show that this allocation is necessary for asymptotic optimality.

\begin{lemma}\label[lemma]{lem:ucb-allocation-necessity}
Under~\Cref{asp:scaling-assumptions}, fix $\delta\in(0,1/2)$. For $\varepsilon=o(\Lambda_T/T)$, our construction satisfies $n_K = \left(\frac{2}{3} + o(1)\right) \Lambda_T$.
If $\Delta_K = o(\Lambda_T/T)$, then for all sufficiently large $T$, every $(o(\Lambda_T), \delta)$-successful bounded offline attack on UCB with $\mathcal C(\mathbf n) \le (1+o(1)) \Lambda_T$ satisfies
\[
    \Pr\!\left[
        n_K = \left(\frac{2}{3} + o(1)\right) \Lambda_T
    \right]\ge1-\delta-\frac1T.
\]
\end{lemma}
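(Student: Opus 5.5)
The plan has two parts: a direct computation for our own construction, and a stability argument for the lower-bound optimization behind \Cref{cor:ucb-near-boundary-optimality}.

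\textbf{Our construction.} When $\varepsilon=o(\Lambda_T/T)$, we have $\varepsilon<z_T^\star-\rl$ for large $T$, so $z=z_T^\star$ and $(z-\underline\mu_K)_+=(z_T^\star-\rl)(1-o(1))$. The denominator satisfies $\ru-z=R(1-o(1))$. This gives
\[
n_K=(1+o(1))\,T(z_T^\star-\rl)/R .
\]
Substituting $z_T^\star-\rl=(18R\sigma^2(K-1)\log T/T)^{1/3}$ yields $n_K=(1+o(1))(18\sigma^2(K-1)T^2\log T/R^2)^{1/3}$. It remains to check that this equals $\frac23\Lambda_T$:
\[
\tfrac23\cdot 3\cdot(9/4)^{1/3}=2(9/4)^{1/3}=(18)^{1/3}.
\]
As a consistency check, $\sum_{i<K}n_i=(1+o(1))\,9\sigma^2(K-1)\log T/(z-\rl)^2$, since $d(z)=(z-\rl)(1+o(1))$ and the $B_i$ terms are lower order. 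This sum equals $\frac13\Lambda_T$.

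\textbf{Necessity for arbitrary attacks.} I would reuse the final-round argument from the proof of \Cref{thm:attack-cost-ucb-lower}, with two changes: $\xi=0$ is replaced by $\xi=o(\Lambda_T)$, and the argument is made quantitative in the two cost components.

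On the success event intersected with a concentration event (total probability at least $1-\delta-1/T$), non-target arms receive only $o(\Lambda_T)=o(T)$ online pulls. The target is therefore pulled in a late round $t\ge T-o(T)$; simply take the last round in which it is pulled. At that round, set $\rho=u_K(t)-\rl$.

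\emph{Upper bound on $\rho$.} Concentration for the genuine target samples, which number $\Theta(T)$, gives
\[
\rho\le \Delta_K+\frac{n_K R}{T}(1+o(1))+O\!\left(\sqrt{\log T/T}\right).
\]
Because $\Delta_K=o(\Lambda_T/T)$ and $\sqrt{\log T/T}=o(\Lambda_T/T)$, this becomes $\rho\le (n_KR/T)(1+o(1))+o(\Lambda_T/T)$.

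\emph{Constraint on each non-target arm.} From $u_i(t)\le u_K(t)$ and $\hat\mu_i\ge\rl$, we get $n_i+N_i^{\rm cl}+N_i^{\on}\ge 9\sigma^2\log t/\rho^2$. Since $N_i^{\rm cl}=O(1)$ and $\sum_i N_i^{\on}=o(\Lambda_T)$, summing over $i<K$ gives
\[
\sum_{i<K}n_i\ge (1-o(1))\,9\sigma^2(K-1)\log T/\rho^2-o(\Lambda_T).
\]

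\emph{Stability step.} Write $n_K=x\Lambda_T$ and $f(x)=x+c/x^2$, where the constant $c$ is chosen so that $\min_x f(x)=f(2/3)=1$; this minimization is exactly the one defining $\Lambda_T$. Combining the two displays gives $\mathcal C(\mathbf n)\ge(1-o(1))\Lambda_T f(x)-o(\Lambda_T)$. The hypothesis $\mathcal C\le(1+o(1))\Lambda_T$ then forces $f(x)\le 1+o(1)$. Because $f$ is strictly convex with a unique minimizer at $2/3$ and $f(x)\to\infty$ at both $0$ and $\infty$, this gives $x=2/3+o(1)$.

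\textbf{Main obstacle.} The hardest step is handling the additive error terms in the regime where $n_K$ may be small. If $n_K=o(\Lambda_T)$, then $\rho$ is dominated by the $o(\Lambda_T/T)$ errors, and the suppression cost blows up to $\omega(\Lambda_T)$. This case is still covered by the blow-up of $f$ near $0$, but the inequalities must be written as $\rho\le(x+o(1))R\Lambda_T/T$, so that uniformity in $x$ is preserved. A second point needing care is that the concentration event must cover the random round $t$; a union bound over $t$ (already included in the $1/T$ slack) handles this.
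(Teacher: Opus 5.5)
Your proposal is correct and follows essentially the same route as the paper. The construction part is the same direct computation. The necessity part uses the same last-target-round index comparison to get $\rho \le (R/T)\bigl(n_K + o(\Lambda_T)\bigr)$ and $\sum_{i<K} n_i + T_0^{\rm cl} + \xi \ge 9\sigma^2(K-1)\log T/\rho^2$. The only difference is the final stability step: the paper eliminates $\rho$ to obtain $n_K^2 M \ge (1-o(1))\frac{4}{27}\Lambda_T^3$ and then uses AM--GM with the identity $\frac{4}{27} - x^2(1-x) = (x-\frac23)^2(x+\frac13)$ for $x = n_K/\mathcal C(\mathbf n)$, whereas you minimize the additive cost $x + \frac{4}{27x^2}$ with $x = n_K/\Lambda_T$, which is equivalent.
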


\begin{remark}
The $2:1$ cost split balances target promotion against non-target suppression. In the near-boundary regime, raising the target's empirical mean has a leading cost linear in its elevation above the lower reward bound. UCB's $N_i^{-1/2}$ exploration bonuses make each competitor's required injection count, and hence their sum, decrease as the inverse square of that elevation. Minimizing the sum of these leading cost terms yields the $2:1$ split between target and aggregate non-target costs. Appendix~\ref{app:allocation} compares this mechanism across algorithms.
\end{remark}

For larger tolerances $\xi = O(\Lambda_T)$, \Cref{lem:ucb-target-budget} establishes that near-boundary successful attacks with a deterministic $O(\Lambda_T)$ cost bound must still incur a nonvanishing fraction of their total cost on the target arm.
Proofs of~\Cref{thm:attack-cost-ucb,thm:attack-cost-ucb-lower} appear in Appendices~\ref{app:proof-ucb-upper-bound} and~\ref{app:proof-ucb-lower-bound}, respectively; those of~\Cref{cor:ucb-near-boundary-optimality} and~\Cref{lem:ucb-allocation-necessity} appear in Appendix~\ref{app:ucb-optimality}.

In the near-boundary regime, our construction is \emph{target-centric}: for any fixed number of competitors, injections on the target account for most of the total cost with high probability, as required for leading-order optimality. This emphasis on favorable feedback for the target qualitatively reflects real-world manipulation practices, such as sellers purchasing positive reviews for their own products~\citep{he2022detecting}.

\textbf{Comparison to \citet{xu2021observation}.}
In the near-boundary regime, the prescribed cost of \citet{xu2021observation}'s UCB-based attack scales as $\Theta (K \log T / \Delta_K^2)$. Consequently, the cost is $\Omega(S_T)$ when $\Delta_K = \mathcal O (S_T/T)$, ceases to be sublinear when $\Delta_K = \mathcal O(\sqrt{K \log T / T})$, and is undefined at $\Delta_K = 0$. In contrast, our attack remains applicable and covers this boundary case.

\subsection{Extension to Suppression-Bounded Algorithms}

We extend the strategy to algorithms whose additional pulls of persistently dominated arms admit a high-probability bound.

\begin{definition}[Suppression-Bounded Algorithms from~\citet{zeng2025practical}]
\label[definition]{def:suppression-bounded-alg}
Fix a horizon $T$ and a confidence level $\delta \in (0, 1)$. A bandit algorithm $\mathcal A$ is said to be \emph{suppression-bounded} if, for any arm $i$, there exists a deterministic bound $B_i^{\mathcal A}(t, T, \delta)$ such that the following implication holds with probability at least $1 - \delta$: if \[
    \hat \mu_i(s) < \max_{j \in \mathcal K} \hat \mu_j (s), \qquad \text{for all } s \in \{t + 1, \ldots, T\},
\]
then \[
    N_i(T) - N_i(t) \le B_i^{\mathcal A}(t, T, \delta).
\]
The quantity $B_i^{\mathcal A}(t, T, \delta)$ is referred to as the \emph{post-suppression pull bound} for arm $i$. This is a conservative upper bound and $B_i^{\mathcal A}(t, T, \delta) \ge B_i^{\mathcal A}(t, T', \delta)$ for all $T' < T$.
\end{definition}

\begin{theorem}[Generic Suppression-Bounded Attack]
\label{thm:pull-count-upper}
Suppose $\mathcal A$ is suppression-bounded and write $B_i^{\mathcal A} \triangleq B_i^{\mathcal A} (T_0^{\rm cl}, T, \delta/K)$. Assume $N_i^{\rm cl} \ge 1$ for every arm. There exists a bounded offline attack that satisfies:
\[
    \Pr \left(\sum_{i=1}^{K-1} N_i^{\on} (T + 1) \le \sum_{i = 1}^{K-1} B_i^{\mathcal A}\right) \ge 1 - 2 \delta 
\] and \[
    \mathcal C(\mathbf n)
    \le\mathcal O\left(
    \inf_{\rho \in (0, R)} \left\{
        K+T\frac{(\rho-\varepsilon)_+}{R-\rho}
        +\frac{R-\rho}{\rho}
            \sum_{i=1}^{K-1}\left(N_i^{\rm cl}+B_i^{\mathcal A}\right)
            \right\}
    \right).
\]
\end{theorem}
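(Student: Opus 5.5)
The plan is to build the attack explicitly, in the same shape as the UCB construction: fix a level $\rho\in(0,R)$ and set the threshold $z=\rl+\rho$. We inject $n_K$ copies of $(K,\ru)$ so that the target's empirical mean stays above $z$ throughout the online phase. We inject $n_i$ copies of $(i,\rl)$ for each $i<K$ so that arm $i$'s empirical mean stays strictly below $z$, even after it receives up to $B_i^{\mathcal A}$ further genuine pulls. The argument then invokes \Cref{def:suppression-bounded-alg} with $t=T_0^{\rm cl}$ and confidence $\delta/K$. Since the fake samples are appended before deployment, we take $N_i(t)$ at the start of the online phase as the reference count; the monotonicity $B_i^{\mathcal A}(t,T,\cdot)\ge B_i^{\mathcal A}(t,T',\cdot)$ lets us use the horizon-$T$ bound. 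A union bound over $i<K$ gives probability $1-\delta$ for the suppression implications, and a separate concentration event for the target gives another $\delta$, for a total of $1-2\delta$. Finally we optimize over $\rho$.

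\textbf{Step 1: the target event.} On the event $\hat\mu_K^{\rm cl}(t)\ge\underline\mu_K$ for all $t$, which holds with probability $\ge1-\delta$ by a time-uniform sub-Gaussian bound using $\beta_\delta$ as in the UCB analysis, the target mean after $n_K$ fake $\ru$'s satisfies
\[
\hat\mu_K(t)\ \ge\ \frac{N_K^{\rm cl}(t)\,\underline\mu_K+n_K\ru}{N_K^{\rm cl}(t)+n_K}.
\]
Because $N_K^{\rm cl}(t)\le T$, choosing $n_K=\lceil T(\rho-\varepsilon)_+/(R-\rho)\rceil+1$ makes this strictly exceed $z$ for every $t\le T$.

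\textbf{Step 2: the non-target event.} For $i<K$, the worst case is that every genuine reward equals $\ru$. After $m\le N_i^{\rm cl}+B_i^{\mathcal A}$ genuine samples and $n_i$ fake $\rl$'s, the mean is at most $\rl+Rm/(m+n_i)$. Requiring this to be $<z=\rl+\rho$ gives
\[
n_i>\frac{R-\rho}{\rho}\bigl(N_i^{\rm cl}+B_i^{\mathcal A}\bigr),
\]
so we take $n_i=\lfloor\cdot\rfloor+1$. With this choice, an induction over online rounds shows that, as long as arm $i$ has at most $B_i^{\mathcal A}$ online pulls, $\hat\mu_i(s)<z<\hat\mu_K(s)\le\max_j\hat\mu_j(s)$ for all $s$. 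The suppression-bounded implication then caps its online pulls at $B_i^{\mathcal A}$. Summing over $i<K$ gives the pull guarantee.

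\textbf{Step 3 and the main obstacle.} Summing the counts, the additive $+1$'s contribute $O(K)$, which yields the stated bound for each fixed $\rho$; taking the infimum over $\rho$ finishes the proof. The delicate part is Step 2's circularity. Definition~\ref{def:suppression-bounded-alg} requires the domination hypothesis over all future rounds, while our guarantee that arm $i$ is dominated depends on its pulls staying bounded. I would resolve this by a stopping-time argument. Suppose some arm $i$ exceeds $B_i^{\mathcal A}$ online pulls, and let $\tau$ be the first round at which this happens. Up to $\tau$, arm $i$ is dominated, and it remains so on a modified continuation in which its rewards are frozen. Applying the implication with horizon $\tau$, again using the monotonicity in $T$, gives a contradiction on the good event. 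The extra $B_i^{\mathcal A}$ samples used in the worst-case bound in Step 2 are exactly what makes this work.
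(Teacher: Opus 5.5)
Your proposal is correct and follows essentially the same proof as the paper: the same $\ru$/$\rl$ injections with $n_K=\lceil T(\rho-\varepsilon)_+/(R-\rho)\rceil+1$ and $n_i\approx\frac{R-\rho}{\rho}(N_i^{\rm cl}+B_i^{\mathcal A})$, the lower bound on the target mean on $\mathcal E_\delta$, the all-$\ru$ worst case for non-target means, an induction over online rounds, and a union bound giving $1-2\delta$. The only differences are minor: the paper uses $n_i=\lceil\cdot\rceil$ and gets strictness from the target side, and it handles the circularity you flag by inducting and applying the suppression bound at intermediate horizons through the monotonicity of $B_i^{\mathcal A}$ in $T$, implicitly treating the good event as valid at those horizons. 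Your stopping-time argument is extra care at the same level of rigor rather than a different route, but it mixes two fixes. If you run the frozen continuation all the way to $T$, apply the implication at horizon $T$, not $\tau$; applying it at the random horizon $\tau$ needs exactly the uniformity that the continuation was meant to avoid.
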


The construction raises the target's empirical mean above $\rl+\rho$ and keeps each non-target mean at or below this threshold. The target injections counteract dilution by genuine feedback. The non-target injections offset the clean observations and up to $B_i^{\mathcal A}$ online observations, even if each has reward $\ru$. Each such observation requires approximately $(R-\rho)/\rho$ compensating injections at $\rl$.

Under~\Cref{asp:scaling-assumptions}, if $\sum_{i < K}B_i^{\mathcal A} = o(T)$, let $A_T \triangleq \sum_{i < K}(N_i^{\rm cl} + B_i^{\mathcal A}) = o(T)$. Choosing $\rho = R \sqrt{A_T / T} / (1 + \sqrt{A_T / T})$ and using $\varepsilon \ge 0$ gives
\[
    \mathcal C(\mathbf n) = O \left(K + \sqrt{T A_T} \right)=o(T).
\]
Consequently, $H=T(1-o(1))$ and $N_K^{\on}(T+1)/H=1-o(1)$ with probability at least $1-2 \delta$, yielding a successful attack. 
UCB, TS, and $\epsilon$-greedy all fall into the scope of suppression-bounded algorithms~\citep{zeng2025practical}.

\section{Numerical Illustration}
\label{sec:exp}

\definecolor{myblue}{RGB}{31,119,180}
\definecolor{myorange}{RGB}{255,127,14}
\definecolor{mygreen}{RGB}{44,160,44}
\definecolor{mypurple}{RGB}{148,103,189}
\definecolor{myred}{RGB}{214,39,40}
\definecolor{mybrown}{RGB}{140,86,75}
\colorlet{exptotal}{myblue}
\colorlet{expheuristic}{myred}
\colorlet{exponline}{mypurple}
\colorlet{expdirect}{mybrown}
\colorlet{expxu}{myorange}
\colorlet{exptarget}{myred}
\colorlet{expnon_target}{mypurple}

\begin{figure}[t]
\centering
\begin{minipage}[t]{0.49\linewidth}
\centering
\begin{tikzpicture}
\begin{axis}[
    width=0.97\linewidth, height=4.1cm,
    xlabel={Horizon $T$}, ylabel={Attack Cost},
    label style={font=\small}, tick label style={font=\scriptsize},
    tick align=outside, tick pos=left, axis on top,
    grid=none,
    xmode=normal, ymode=normal,ymin=0,xmin=100000, xmax=1000000, xtick={100000,400000,700000,1000000}, scaled x ticks=base 10:-6,
]

\addplot[name path=UCBhorizonheuristiccostup, draw=none, forget plot] coordinates {(100000,81948.436004) (250000,185782.194635) (400000,270880.281536) (550000,341229.185428) (700000,396862.989292) (850000,445229.117888) (1000000,482408.895286)};
\addplot[name path=UCBhorizonheuristiccostlo, draw=none, forget plot] coordinates {(100000,80719.723996) (250000,181721.645365) (400000,264061.518464) (550000,331364.214572) (700000,388558.610708) (850000,433678.242112) (1000000,472222.224714)};
\addplot[expheuristic, fill opacity=0.12, draw=none, forget plot] fill between[of=UCBhorizonheuristiccostup and UCBhorizonheuristiccostlo];
\addplot[color=expheuristic, thick, solid, mark=square, mark size=2.4pt, mark options={solid, fill=none}] coordinates {(100000,81334.08) (250000,183751.92) (400000,267470.9) (550000,336296.7) (700000,392710.8) (850000,439453.68) (1000000,477315.56)};

\addplot[name path=UCBhorizontotalcostup, draw=none, forget plot] coordinates {(100000,26256.7593119) (250000,48742.7532049) (400000,67027.366405) (550000,83206.0921907) (700000,98026.4112786) (850000,111866.998345) (1000000,124953.144326)};
\addplot[name path=UCBhorizontotalcostlo, draw=none, forget plot] coordinates {(100000,26178.0006881) (250000,48638.5267951) (400000,66907.993595) (550000,83075.1478093) (700000,97886.2687214) (850000,111719.001655) (1000000,124797.335674)};
\addplot[exptotal, fill opacity=0.12, draw=none, forget plot] fill between[of=UCBhorizontotalcostup and UCBhorizontotalcostlo];
\addplot[color=exptotal, thick, solid, mark=o, mark size=2.4pt, mark options={solid, fill=none}] coordinates {(100000,26217.38) (250000,48690.64) (400000,66967.68) (550000,83140.62) (700000,97956.34) (850000,111793) (1000000,124875.24)};

\end{axis}
\end{tikzpicture}
\par\vspace{2pt}{\small (a) Attack Cost vs. Horizon $T$}
\end{minipage}\hfill
\begin{minipage}[t]{0.49\linewidth}
\centering
\begin{tikzpicture}
\begin{axis}[
    width=0.97\linewidth, height=4.1cm,
    xlabel={Number of Arms $K$}, ylabel={Attack Cost},
    label style={font=\small}, tick label style={font=\scriptsize},
    tick align=outside, tick pos=left, axis on top,
    grid=none,
    xmode=normal, ymode=normal,ymin=0,xmin=5, xmax=50, xtick={5,10,20,30,40,50},
]

\addplot[name path=UCBarmsheuristiccostup, draw=none, forget plot] coordinates {(5,241493.166532) (10,482408.895286) (15,656083.893878) (20,767305.663724) (30,877169.010227) (40,924675.205615) (50,946999.257135)};
\addplot[name path=UCBarmsheuristiccostlo, draw=none, forget plot] coordinates {(5,236122.793468) (10,472222.224714) (15,643598.306122) (20,755325.456276) (30,868423.109773) (40,918188.514385) (50,941528.782865)};
\addplot[expheuristic, fill opacity=0.12, draw=none, forget plot] fill between[of=UCBarmsheuristiccostup and UCBarmsheuristiccostlo];
\addplot[color=expheuristic, thick, solid, mark=square, mark size=2.4pt, mark options={solid, fill=none}] coordinates {(5,238807.98) (10,477315.56) (15,649841.1) (20,761315.56) (30,872796.06) (40,921431.86) (50,944264.02)};

\addplot[name path=UCBarmstotalcostup, draw=none, forget plot] coordinates {(5,92660.6901853) (10,124953.144326) (15,147096.667062) (20,164731.499442) (30,192769.414015) (40,215259.626559) (50,234569.324269)};
\addplot[name path=UCBarmstotalcostlo, draw=none, forget plot] coordinates {(5,92540.4298147) (10,124797.335674) (15,146941.172938) (20,164573.820558) (30,192565.425985) (40,215061.053441) (50,234345.195731)};
\addplot[exptotal, fill opacity=0.12, draw=none, forget plot] fill between[of=UCBarmstotalcostup and UCBarmstotalcostlo];
\addplot[color=exptotal, thick, solid, mark=o, mark size=2.4pt, mark options={solid, fill=none}] coordinates {(5,92600.56) (10,124875.24) (15,147018.92) (20,164652.66) (30,192667.42) (40,215160.34) (50,234457.26)};

\end{axis}
\end{tikzpicture}
\par\vspace{2pt}{\small (b) Attack Cost vs. Number of Arms $K$}
\end{minipage}

\par\vspace{2pt}
{\scriptsize\begin{tabular}{cc}
\tikz[baseline=-0.6ex]{\draw[expheuristic, thick, solid] (0,0)--(.48,0); \draw plot[mark=square, mark size=2.4pt, mark options={solid, fill=none, draw=expheuristic}] coordinates {(.24,0)};}\;Clipped Suppression & \tikz[baseline=-0.6ex]{\draw[exptotal, thick, solid] (0,0)--(.48,0); \draw plot[mark=o, mark size=2.4pt, mark options={solid, fill=none, draw=exptotal}] coordinates {(.24,0)};}\;Total Attack Cost
\end{tabular}}

\caption{Attack Cost for UCB.}
\label{fig:ucb_cost_experiments}
\end{figure}
\begin{figure}[t]
\centering
\begin{minipage}[t]{0.49\linewidth}
\centering
\begin{tikzpicture}
\begin{axis}[
    width=0.97\linewidth, height=4.1cm,
    xlabel={Horizon $T$}, ylabel={Target-arm Ratio},
    label style={font=\small}, tick label style={font=\scriptsize},
    tick align=outside, tick pos=left, axis on top,
    grid=none,
    xmode=normal, ymode=normal,ymin=0, ymax=1.06, ytick={0,0.25,0.5,0.75,1},xmin=100000, xmax=1000000, xtick={100000,400000,700000,1000000}, scaled x ticks=base 10:-6,
]

\addplot[name path=UCBhorizontotaltarget_ratioup, draw=none, forget plot] coordinates {(100000,0.895745388008) (250000,0.925182419096) (400000,0.936642240208) (550000,0.94331893884) (700000,0.94787020672) (850000,0.951249090222) (1000000,0.953897266116)};
\addplot[name path=UCBhorizontotaltarget_ratiolo, draw=none, forget plot] coordinates {(100000,0.895197411992) (250000,0.924895020904) (400000,0.936438159792) (550000,0.943155606614) (700000,0.947733678994) (850000,0.951130486248) (1000000,0.953791053884)};
\addplot[exptotal, fill opacity=0.12, draw=none, forget plot] fill between[of=UCBhorizontotaltarget_ratioup and UCBhorizontotaltarget_ratiolo];
\addplot[color=exptotal, thick, solid, mark=o, mark size=2.4pt, mark options={solid, fill=none}] coordinates {(100000,0.8954714) (250000,0.92503872) (400000,0.9365402) (550000,0.943237272727) (700000,0.947801942857) (850000,0.951189788235) (1000000,0.95384416)};

\addplot[name path=UCBhorizononlineonline_ratioup, draw=none, forget plot] coordinates {(100000,1) (250000,1) (400000,1) (550000,1) (700000,1) (850000,1) (1000000,1)};
\addplot[name path=UCBhorizononlineonline_ratiolo, draw=none, forget plot] coordinates {(100000,1) (250000,1) (400000,1) (550000,1) (700000,1) (850000,1) (1000000,1)};
\addplot[exponline, fill opacity=0.12, draw=none, forget plot] fill between[of=UCBhorizononlineonline_ratioup and UCBhorizononlineonline_ratiolo];
\addplot[color=exponline, thick, solid, mark=triangle, mark size=2.4pt, mark options={solid, fill=none}] coordinates {(100000,1) (250000,1) (400000,1) (550000,1) (700000,1) (850000,1) (1000000,1)};

\addplot[name path=UCBhorizonheuristictarget_ratioup, draw=none, forget plot] coordinates {(100000,0.19280276004) (250000,0.273113418542) (400000,0.33984620384) (550000,0.39751960987) (700000,0.444916270417) (850000,0.489790303397) (1000000,0.527777775286)};
\addplot[name path=UCBhorizonheuristictarget_ratiolo, draw=none, forget plot] coordinates {(100000,0.18051563996) (250000,0.256871221458) (400000,0.32279929616) (550000,0.379583299221) (700000,0.43305287244) (850000,0.476201037779) (1000000,0.517591104714)};
\addplot[expheuristic, fill opacity=0.12, draw=none, forget plot] fill between[of=UCBhorizonheuristictarget_ratioup and UCBhorizonheuristictarget_ratiolo];
\addplot[color=expheuristic, thick, solid, mark=square, mark size=2.4pt, mark options={solid, fill=none}] coordinates {(100000,0.1866592) (250000,0.26499232) (400000,0.33132275) (550000,0.388551454545) (700000,0.438984571429) (850000,0.482995670588) (1000000,0.52268444)};

\end{axis}
\end{tikzpicture}
\par\vspace{2pt}{\small (a) Target-arm Selection Ratio\\vs. Horizon $T$}
\end{minipage}\hfill
\begin{minipage}[t]{0.49\linewidth}
\centering
\begin{tikzpicture}
\begin{axis}[
    width=0.97\linewidth, height=4.1cm,
    xlabel={Number of Arms $K$}, ylabel={Target-arm Ratio},
    label style={font=\small}, tick label style={font=\scriptsize},
    tick align=outside, tick pos=left, axis on top,
    grid=none,
    xmode=normal, ymode=normal,ymin=0, ymax=1.06, ytick={0,0.25,0.5,0.75,1},xmin=5, xmax=50, xtick={5,10,20,30,40,50},
]

\addplot[name path=UCBarmstotaltarget_ratioup, draw=none, forget plot] coordinates {(5,0.965773879095) (10,0.953897266116) (15,0.945564097282) (20,0.938826799144) (30,0.927987522111) (40,0.919143516216) (50,0.911437247472)};
\addplot[name path=UCBarmstotaltarget_ratiolo, draw=none, forget plot] coordinates {(5,0.965692400905) (10,0.953791053884) (15,0.945457102718) (20,0.938717840856) (30,0.927847517889) (40,0.919003923784) (50,0.911281232528)};
\addplot[exptotal, fill opacity=0.12, draw=none, forget plot] fill between[of=UCBarmstotaltarget_ratioup and UCBarmstotaltarget_ratiolo];
\addplot[color=exptotal, thick, solid, mark=o, mark size=2.4pt, mark options={solid, fill=none}] coordinates {(5,0.96573314) (10,0.95384416) (15,0.9455106) (20,0.93877232) (30,0.92791752) (40,0.91907372) (50,0.91135924)};

\addplot[name path=UCBarmsonlineonline_ratioup, draw=none, forget plot] coordinates {(5,1) (10,1) (15,1) (20,1) (30,1) (40,1) (50,1)};
\addplot[name path=UCBarmsonlineonline_ratiolo, draw=none, forget plot] coordinates {(5,1) (10,1) (15,1) (20,1) (30,1) (40,1) (50,1)};
\addplot[exponline, fill opacity=0.12, draw=none, forget plot] fill between[of=UCBarmsonlineonline_ratioup and UCBarmsonlineonline_ratiolo];
\addplot[color=exponline, thick, solid, mark=triangle, mark size=2.4pt, mark options={solid, fill=none}] coordinates {(5,1) (10,1) (15,1) (20,1) (30,1) (40,1) (50,1)};

\addplot[name path=UCBarmsheuristictarget_ratioup, draw=none, forget plot] coordinates {(5,0.763877206532) (10,0.527777775286) (15,0.356401693878) (20,0.244674543724) (30,0.131576890227) (40,0.0818114856146) (50,0.0584712171347)};
\addplot[name path=UCBarmsheuristictarget_ratiolo, draw=none, forget plot] coordinates {(5,0.758506833468) (10,0.517591104714) (15,0.343916106122) (20,0.232694336276) (30,0.122830989773) (40,0.0753247943854) (50,0.0530007428653)};
\addplot[expheuristic, fill opacity=0.12, draw=none, forget plot] fill between[of=UCBarmsheuristictarget_ratioup and UCBarmsheuristictarget_ratiolo];
\addplot[color=expheuristic, thick, solid, mark=square, mark size=2.4pt, mark options={solid, fill=none}] coordinates {(5,0.76119202) (10,0.52268444) (15,0.3501589) (20,0.23868444) (30,0.12720394) (40,0.07856814) (50,0.05573598)};

\end{axis}
\end{tikzpicture}
\par\vspace{2pt}{\small (b) Target-arm Selection Ratio\\vs. Number of Arms $K$}
\end{minipage}

\par\vspace{2pt}
{\scriptsize\begin{tabular}{ccc}
\tikz[baseline=-0.6ex]{\draw[exptotal, thick, solid] (0,0)--(.48,0); \draw plot[mark=o, mark size=2.4pt, mark options={solid, fill=none, draw=exptotal}] coordinates {(.24,0)};}\;$N_K/T$ & \tikz[baseline=-0.6ex]{\draw[exponline, thick, solid] (0,0)--(.48,0); \draw plot[mark=triangle, mark size=2.4pt, mark options={solid, fill=none, draw=exponline}] coordinates {(.24,0)};}\;$N_K^{\on}/H$ & \tikz[baseline=-0.6ex]{\draw[expheuristic, thick, solid] (0,0)--(.48,0); \draw plot[mark=square, mark size=2.4pt, mark options={solid, fill=none, draw=expheuristic}] coordinates {(.24,0)};}\;Clipped Suppression $N_K/T$
\end{tabular}}

\caption{Target-arm Selection Ratio for UCB.}
\label{fig:ucb_ratio_experiments}
\end{figure}
\begin{figure}[t]
\centering
\begin{minipage}[t]{0.49\linewidth}
\centering
\begin{tikzpicture}
\begin{axis}[
    width=0.97\linewidth, height=4.1cm,
    xlabel={Horizon $T$}, ylabel={$\mathcal C/\Lambda_T$},
    label style={font=\small}, tick label style={font=\scriptsize},
    tick align=outside, tick pos=left, axis on top, grid=none,
    xmode=normal, ymode=normal,
    xmin=100000, xmax=10000000,
    xtick={100000,2500000,5000000,7500000,10000000},
    scaled x ticks=base 10:-6, scaled y ticks=false,
    ymin=0.962119167386, ymax=1.13527525567,
    yticklabel style={/pgf/number format/fixed, /pgf/number format/precision=3},
]
\addplot[expdirect, opacity=0.65, thin, solid, no marks, forget plot] coordinates {(100000,1) (10000000,1)};

\addplot[name path=UCB5normalized_costup, draw=none, forget plot] coordinates {(100000,0.997196754175) (250000,0.989199944555) (1000000,0.982330865937) (2500000,0.98022223145) (10000000,0.979403349574)};
\addplot[name path=UCB5normalized_costlo, draw=none, forget plot] coordinates {(100000,0.993995654514) (250000,0.986991938278) (1000000,0.981055940467) (2500000,0.979330851586) (10000000,0.978876208188)};
\addplot[exptotal, fill opacity=0.12, draw=none, forget plot] fill between[of=UCB5normalized_costup and UCB5normalized_costlo];
\addplot[color=exptotal, thick, solid, mark=o, mark size=2.4pt, mark options={solid, fill=none}] coordinates {(100000,0.995596204344) (250000,0.988095941417) (1000000,0.981693403202) (2500000,0.979776541518) (10000000,0.979139778881)};

\addplot[name path=UCB10normalized_costup, draw=none, forget plot] coordinates {(100000,1.04777884421) (250000,1.02934083745) (1000000,1.01091647898) (2500000,1.00323364902) (10000000,0.996141779436)};
\addplot[name path=UCB10normalized_costlo, draw=none, forget plot] coordinates {(100000,1.04463597274) (250000,1.02713980257) (1000000,1.00965593019) (2500000,1.00234827333) (10000000,0.99562386859)};
\addplot[expheuristic, fill opacity=0.12, draw=none, forget plot] fill between[of=UCB10normalized_costup and UCB10normalized_costlo];
\addplot[color=expheuristic, thick, solid, mark=square, mark size=2.4pt, mark options={solid, fill=none}] coordinates {(100000,1.04620740848) (250000,1.02824032001) (1000000,1.01028620459) (2500000,1.00279096118) (10000000,0.995882824013)};

\addplot[name path=UCB15normalized_costup, draw=none, forget plot] coordinates {(100000,1.07783955325) (250000,1.05270734902) (1000000,1.02709136329) (2500000,1.01599736027) (10000000,1.00519035001)};
\addplot[name path=UCB15normalized_costlo, draw=none, forget plot] coordinates {(100000,1.0751311139) (250000,1.05083551781) (1000000,1.02600563732) (2500000,1.01524655235) (10000000,1.00474840749)};
\addplot[exponline, fill opacity=0.12, draw=none, forget plot] fill between[of=UCB15normalized_costup and UCB15normalized_costlo];
\addplot[color=exponline, thick, solid, mark=triangle, mark size=2.4pt, mark options={solid, fill=none}] coordinates {(100000,1.07648533357) (250000,1.05177143342) (1000000,1.02654850031) (2500000,1.01562195631) (10000000,1.00496937875)};

\addplot[name path=UCB20normalized_costup, draw=none, forget plot] coordinates {(100000,1.10050172382) (250000,1.0701015502) (1000000,1.03890150294) (2500000,1.0252225055) (10000000,1.011605023)};
\addplot[name path=UCB20normalized_costlo, draw=none, forget plot] coordinates {(100000,1.09801167743) (250000,1.06841199293) (1000000,1.03790707972) (2500000,1.02453565571) (10000000,1.01120316369)};
\addplot[expxu, fill opacity=0.12, draw=none, forget plot] fill between[of=UCB20normalized_costup and UCB20normalized_costlo];
\addplot[color=expxu, thick, solid, mark=diamond, mark size=2.4pt, mark options={solid, fill=none}] coordinates {(100000,1.09925670062) (250000,1.06925677156) (1000000,1.03840429133) (2500000,1.0248790806) (10000000,1.01140409334)};

\addplot[name path=UCB25normalized_costup, draw=none, forget plot] coordinates {(100000,1.11851821487) (250000,1.08382905432) (1000000,1.04811651498) (2500000,1.03238562846) (10000000,1.01654948921)};
\addplot[name path=UCB25normalized_costlo, draw=none, forget plot] coordinates {(100000,1.11598652232) (250000,1.08213815989) (1000000,1.04715077866) (2500000,1.03171222594) (10000000,1.01615279512)};
\addplot[expdirect, fill opacity=0.12, draw=none, forget plot] fill between[of=UCB25normalized_costup and UCB25normalized_costlo];
\addplot[color=expdirect, thick, solid, mark=x, mark size=2.4pt, mark options={solid, fill=none}] coordinates {(100000,1.1172523686) (250000,1.08298360711) (1000000,1.04763364682) (2500000,1.0320489272) (10000000,1.01635114216)};

\end{axis}
\end{tikzpicture}
\par\vspace{2pt}
{\scriptsize\setlength{\tabcolsep}{1pt}\begin{tabular}{ccccc}
\tikz[baseline=-0.6ex]{\draw[exptotal, thick, solid] (0,0)--(.30,0); \draw plot[mark=o, mark size=2.4pt, mark options={solid, fill=none, draw=exptotal}] coordinates {(.15,0)};}\;$K=5$ & \tikz[baseline=-0.6ex]{\draw[expheuristic, thick, solid] (0,0)--(.30,0); \draw plot[mark=square, mark size=2.4pt, mark options={solid, fill=none, draw=expheuristic}] coordinates {(.15,0)};}\;$K=10$ & \tikz[baseline=-0.6ex]{\draw[exponline, thick, solid] (0,0)--(.30,0); \draw plot[mark=triangle, mark size=2.4pt, mark options={solid, fill=none, draw=exponline}] coordinates {(.15,0)};}\;$K=15$ & \tikz[baseline=-0.6ex]{\draw[expxu, thick, solid] (0,0)--(.30,0); \draw plot[mark=diamond, mark size=2.4pt, mark options={solid, fill=none, draw=expxu}] coordinates {(.15,0)};}\;$K=20$ & \tikz[baseline=-0.6ex]{\draw[expdirect, thick, solid] (0,0)--(.30,0); \draw plot[mark=x, mark size=2.4pt, mark options={solid, fill=none, draw=expdirect}] coordinates {(.15,0)};}\;$K=25$
\end{tabular}}

\par\vspace{2pt}{\small (a) Normalized Attack Cost vs. Horizon $T$}
\end{minipage}\hfill
\begin{minipage}[t]{0.49\linewidth}
\centering
\begin{tikzpicture}
\begin{axis}[
    width=0.97\linewidth, height=4.1cm,
    xlabel={Horizon $T$}, ylabel={Cost Share},
    label style={font=\small}, tick label style={font=\scriptsize},
    tick align=outside, tick pos=left, axis on top, grid=none,
    xmode=normal, ymode=normal,
    xmin=100000, xmax=10000000,
    xtick={100000,2500000,5000000,7500000,10000000},
    scaled x ticks=base 10:-6, scaled y ticks=false,
    ymin=0.3, ymax=0.7,
    ytick={0.333333333333,0.666666666667}, yticklabels={$1/3$,$2/3$},
    yticklabel style={/pgf/number format/fixed, /pgf/number format/precision=3},
]
\addplot[expdirect, opacity=0.65, thin, solid, no marks, forget plot] coordinates {(100000,0.666666666667) (10000000,0.666666666667)};
\addplot[expdirect, opacity=0.65, thin, solid, no marks, forget plot] coordinates {(100000,0.333333333333) (10000000,0.333333333333)};

\addplot[name path=UCBtargetallocation_fractionsup, draw=none, forget plot] coordinates {(100000,0.603466986465) (250000,0.616366268446) (1000000,0.63093988288) (2500000,0.638066781426) (10000000,0.646092948826)};
\addplot[name path=UCBtargetallocation_fractionslo, draw=none, forget plot] coordinates {(100000,0.602567890935) (250000,0.615711876589) (1000000,0.630549822971) (2500000,0.637787280622) (10000000,0.645926717738)};
\addplot[exptarget, fill opacity=0.12, draw=none, forget plot] fill between[of=UCBtargetallocation_fractionsup and UCBtargetallocation_fractionslo];
\addplot[color=exptarget, thick, solid, mark=triangle, mark size=2.4pt, mark options={solid, fill=none}] coordinates {(100000,0.6030174387) (250000,0.616039072518) (1000000,0.630744852925) (2500000,0.637927031024) (10000000,0.646009833282)};

\addplot[name path=UCBnon_targetallocation_fractionsup, draw=none, forget plot] coordinates {(100000,0.397432109065) (250000,0.384288123411) (1000000,0.369450177029) (2500000,0.362212719378) (10000000,0.354073282262)};
\addplot[name path=UCBnon_targetallocation_fractionslo, draw=none, forget plot] coordinates {(100000,0.396533013535) (250000,0.383633731554) (1000000,0.36906011712) (2500000,0.361933218574) (10000000,0.353907051174)};
\addplot[expnon_target, fill opacity=0.12, draw=none, forget plot] fill between[of=UCBnon_targetallocation_fractionsup and UCBnon_targetallocation_fractionslo];
\addplot[color=expnon_target, thick, solid, mark=square, mark size=2.4pt, mark options={solid, fill=none}] coordinates {(100000,0.3969825613) (250000,0.383960927482) (1000000,0.369255147075) (2500000,0.362072968976) (10000000,0.353990166718)};

\end{axis}
\end{tikzpicture}
\par\vspace{2pt}
{\scriptsize\begin{tabular}{cc}
\tikz[baseline=-0.6ex]{\draw[exptarget, thick, solid] (0,0)--(.48,0); \draw plot[mark=triangle, mark size=2.4pt, mark options={solid, fill=none, draw=exptarget}] coordinates {(.24,0)};}\;Target $n_K/\mathcal C$ & \tikz[baseline=-0.6ex]{\draw[expnon_target, thick, solid] (0,0)--(.48,0); \draw plot[mark=square, mark size=2.4pt, mark options={solid, fill=none, draw=expnon_target}] coordinates {(.24,0)};}\;Non-target $\sum_{i<K}n_i/\mathcal C$
\end{tabular}}

\par\vspace{2pt}{\small (b) Cost Share vs. Horizon $T$}
\end{minipage}

\caption{Near-Boundary Cost and Allocation for UCB.}
\label{fig:ucb-theory-alignment}
\end{figure}
\begin{figure}[t]
\centering
\begin{minipage}[t]{0.49\linewidth}
\centering
\begin{tikzpicture}
\begin{axis}[
    width=0.97\linewidth, height=4.1cm,
    xlabel={$\Delta_K/(S_T/T)$}, ylabel={Attack Cost},
    label style={font=\small}, tick label style={font=\scriptsize},
    tick align=outside, tick pos=left, axis on top, grid=none,
    xmode=normal, ymode=normal, xmin=0, xmax=6,
    xtick={0,1,2,3,4,5,6}, ymin=0,
]

\addplot[name path=ucbxucosttotal0up, draw=none, forget plot] coordinates {(0,95334.9232504) (0.1,89851.0266259) (0.2,84367.010744) (0.3,78883.9037266) (0.4,73399.0934174) (0.5,67912.8243387) (0.75,54197.6742274) (1,40477.0424445) (1.5,19684.4665795) (2,11471.8662465) (3,5344.96008487) (4,3103.89707008) (5,2040.17886024) (6,1445.60063383)};
\addplot[name path=ucbxucosttotal0lo, draw=none, forget plot] coordinates {(0,95134.1967496) (0.1,89651.5333741) (0.2,84167.869256) (0.3,78683.7762734) (0.4,73200.2265826) (0.5,67714.3756613) (0.75,53999.3657726) (1,40279.8775555) (1.5,19535.7734205) (2,11360.2137535) (3,5268.39991513) (4,3048.26292992) (5,1995.30113976) (6,1408.55936617)};
\addplot[exptotal, fill opacity=0.12, draw=none, forget plot] fill between[of=ucbxucosttotal0up and ucbxucosttotal0lo];
\addplot[color=exptotal, thick, solid, mark=o, mark size=2.4pt, mark options={solid, fill=none}] coordinates {(0,95234.56) (0.1,89751.28) (0.2,84267.44) (0.3,78783.84) (0.4,73299.66) (0.5,67813.6) (0.75,54098.52) (1,40378.46) (1.5,19610.12) (2,11416.04) (3,5306.68) (4,3076.08) (5,2017.74) (6,1427.08)};

\addplot[name path=ucbxucostxu0up, draw=none, forget plot] coordinates {(0.3,583449) (0.4,329923) (0.5,212277) (0.75,95618) (1,54519) (1.5,24912) (2,14422) (3,6806) (4,4085) (5,2792) (6,2088)};
\addplot[name path=ucbxucostxu0lo, draw=none, forget plot] coordinates {(0.3,583449) (0.4,329923) (0.5,212277) (0.75,95618) (1,54519) (1.5,24912) (2,14422) (3,6806) (4,4085) (5,2792) (6,2088)};
\addplot[expxu, fill opacity=0.12, draw=none, forget plot] fill between[of=ucbxucostxu0up and ucbxucostxu0lo];
\addplot[color=expxu, thick, solid, mark=diamond, mark size=2.4pt, mark options={solid, fill=none}] coordinates {(0.3,583449) (0.4,329923) (0.5,212277) (0.75,95618) (1,54519) (1.5,24912) (2,14422) (3,6806) (4,4085) (5,2792) (6,2088)};

\end{axis}
\end{tikzpicture}
\par\vspace{2pt}{\small (a) Attack Cost vs. $\Delta_K/(S_T/T)$}

\par\vspace{2pt}
{\scriptsize\begin{tabular}{cc}
\tikz[baseline=-0.6ex]{\draw[exptotal, thick, solid] (0,0)--(.48,0); \draw plot[mark=o, mark size=2.4pt, mark options={solid, fill=none, draw=exptotal}] coordinates {(.24,0)};}\;Ours & \tikz[baseline=-0.6ex]{\draw[expxu, thick, solid] (0,0)--(.48,0); \draw plot[mark=diamond, mark size=2.4pt, mark options={solid, fill=none, draw=expxu}] coordinates {(.24,0)};}\;Xu (2021)
\end{tabular}}

\end{minipage}\hfill
\begin{minipage}[t]{0.49\linewidth}
\centering
\begin{tikzpicture}
\begin{axis}[
    width=0.97\linewidth, height=4.1cm,
    xlabel={$\Delta_K/(S_T/T)$}, ylabel={Target-arm Ratio},
    label style={font=\small}, tick label style={font=\scriptsize},
    tick align=outside, tick pos=left, axis on top, grid=none,
    xmode=normal, ymode=normal, xmin=0, xmax=6,
    xtick={0,1,2,3,4,5,6}, ymin=0.3, ymax=1, ytick={0.3,0.5,0.7,0.9,1},
]

\addplot[name path=ucbxutarget_ratiototal0up, draw=none, forget plot] coordinates {(0,0.965326769821) (0.1,0.965440304852) (0.2,0.965554646026) (0.3,0.96566873876) (0.4,0.965782291765) (0.5,0.965897241881) (0.75,0.966184062323) (1,0.966472569196) (1.5,0.980420226579) (2,0.988595786246) (3,0.994687600085) (4,0.99690773707) (5,0.99796069886) (6,0.998547440634)};
\addplot[name path=ucbxutarget_ratiototal0lo, draw=none, forget plot] coordinates {(0,0.965191390179) (0.1,0.965306095148) (0.2,0.965420313974) (0.3,0.96553342124) (0.4,0.965648228235) (0.5,0.965763438119) (0.75,0.966049977677) (1,0.966339470804) (1.5,0.980271533421) (2,0.988484133754) (3,0.994611039915) (4,0.99685210293) (5,0.99791582114) (6,0.998510399366)};
\addplot[exptotal, fill opacity=0.12, draw=none, forget plot] fill between[of=ucbxutarget_ratiototal0up and ucbxutarget_ratiototal0lo];
\addplot[color=exptotal, thick, solid, mark=o, mark size=2.4pt, mark options={solid, fill=none}] coordinates {(0,0.96525908) (0.1,0.9653732) (0.2,0.96548748) (0.3,0.96560108) (0.4,0.96571526) (0.5,0.96583034) (0.75,0.96611702) (1,0.96640602) (1.5,0.98034588) (2,0.98853996) (3,0.99464932) (4,0.99687992) (5,0.99793826) (6,0.99852892)};

\addplot[name path=ucbxuonline_ratioonline0up, draw=none, forget plot] coordinates {(0,1) (0.1,1) (0.2,1) (0.3,1) (0.4,1) (0.5,1) (0.75,1) (1,1) (1.5,1) (2,1) (3,1) (4,1) (5,1) (6,1)};
\addplot[name path=ucbxuonline_ratioonline0lo, draw=none, forget plot] coordinates {(0,1) (0.1,1) (0.2,1) (0.3,1) (0.4,1) (0.5,1) (0.75,1) (1,1) (1.5,1) (2,1) (3,1) (4,1) (5,1) (6,1)};
\addplot[exponline, fill opacity=0.12, draw=none, forget plot] fill between[of=ucbxuonline_ratioonline0up and ucbxuonline_ratioonline0lo];
\addplot[color=exponline, thick, solid, mark=triangle, mark size=2.4pt, mark options={solid, fill=none}] coordinates {(0,1) (0.1,1) (0.2,1) (0.3,1) (0.4,1) (0.5,1) (0.75,1) (1,1) (1.5,1) (2,1) (3,1) (4,1) (5,1) (6,1)};

\addplot[name path=ucbxutarget_ratioxu0up, draw=none, forget plot] coordinates {(0.3,0.483031) (0.4,0.709201) (0.5,0.81388) (0.75,0.917272) (1,0.953461) (1.5,0.979309) (2,0.988354) (3,0.994816) (4,0.997075) (5,0.998128) (6,0.998695)};
\addplot[name path=ucbxutarget_ratioxu0lo, draw=none, forget plot] coordinates {(0.3,0.483031) (0.4,0.709201) (0.5,0.81388) (0.75,0.917272) (1,0.953461) (1.5,0.979309) (2,0.988354) (3,0.994816) (4,0.997075) (5,0.998128) (6,0.998695)};
\addplot[expxu, fill opacity=0.12, draw=none, forget plot] fill between[of=ucbxutarget_ratioxu0up and ucbxutarget_ratioxu0lo];
\addplot[color=expxu, thick, solid, mark=diamond, mark size=2.4pt, mark options={solid, fill=none}] coordinates {(0.3,0.483031) (0.4,0.709201) (0.5,0.81388) (0.75,0.917272) (1,0.953461) (1.5,0.979309) (2,0.988354) (3,0.994816) (4,0.997075) (5,0.998128) (6,0.998695)};

\end{axis}
\end{tikzpicture}
\par\vspace{2pt}{\small (b) Target-arm Selection Ratio vs. $\Delta_K/(S_T/T)$}

\par\vspace{2pt}
{\scriptsize\begin{tabular}{cc}
\tikz[baseline=-0.6ex]{\draw[exptotal, thick, solid] (0,0)--(.48,0); \draw plot[mark=o, mark size=2.4pt, mark options={solid, fill=none, draw=exptotal}] coordinates {(.24,0)};}\;Ours $N_K/T$ & \tikz[baseline=-0.6ex]{\draw[exponline, thick, solid] (0,0)--(.48,0); \draw plot[mark=triangle, mark size=2.4pt, mark options={solid, fill=none, draw=exponline}] coordinates {(.24,0)};}\;Ours $N_K^{\on}/H$
\end{tabular}}

\par\vspace{2pt}
{\scriptsize\begin{tabular}{c}
\tikz[baseline=-0.6ex]{\draw[expxu, thick, solid] (0,0)--(.48,0); \draw plot[mark=diamond, mark size=2.4pt, mark options={solid, fill=none, draw=expxu}] coordinates {(.24,0)};}\;Xu (2021) $N_K/T$
\end{tabular}}

\end{minipage}
\caption{Comparison with \citet{xu2021observation} for UCB.}
\label{fig:ucb-xu-gap}
\end{figure}

We evaluate our attacks on MovieLens-25M~\citep{harper2015movielens}, mapping ratings of at least four to one and the rest to zero. Each movie defines a Bernoulli arm with its empirical positive-rating rate as the mean; clean and online rewards are sampled independently with replacement. The target, \emph{Glitter} (ID $4775$), has the smallest positive mean, $\mu_K=11/669$; competitors are the most-rated remaining movies. We use $N_i^{\rm cl}=5$, $\sigma=0.5$, and $\delta=0.05$.

\Cref{fig:ucb_cost_experiments} compares our attack strategy with \emph{Clipped Suppression} by varying $T$ at $K=10$ and $K$ at $T=10^6$. Our method consistently incurs a substantially lower attack cost. As $T$ increases, the observed cost growth is consistent with the sublinear bound in \Cref{thm:attack-cost-ucb}. In contrast, \emph{Clipped Suppression} intervenes whenever a non-target arm is pulled, so its attack cost is $T-N_K$ and grows faster with $T$. The cost gap also increases as $K$ grows.

\Cref{fig:ucb_ratio_experiments} further shows that our method achieves lower attack cost while maintaining success. For our method, $N_K^{\on} / H=1$ in every repeat across all tested values of $T$ and $K$, meaning that UCB selects the target arm in every online round after deployment. The overall ratio $N_K / T$, which also includes the warm-start samples, remains close to $1$. In contrast, \emph{Clipped Suppression} yields more non-target pulls. Even at $T=10^6$, the learner selects non-target arms in nearly half of the rounds, and its target-arm selection ratio decreases further as $K$ increases.

We next illustrate the leading-order optimality and target-centric allocation predicted by our theory. As shown in \Cref{fig:ucb-theory-alignment}(a), for each fixed $K$, the normalized attack cost $\mathcal C(\mathbf n)/\Lambda_T$ approaches $1$ as $T$ increases, consistent with \Cref{cor:ucb-near-boundary-optimality}. In \Cref{fig:ucb-theory-alignment}(b), at $K=10$, the target-arm cost share $n_K/\mathcal C(\mathbf n)$ approaches $2/3$, consistent with \Cref{lem:ucb-allocation-necessity}.

We then compare our method with the attack of~\citet{xu2021observation} under the same experimental setting. When $\Delta_K/(S_T/T)<1$, the target lies within the boundary scale $S_T/T$, where attacks become challenging. Our attack maintains sublinear cost in this regime and makes UCB select the target arm in every online round after deployment with high probability. In contrast, the attack cost of \citet{xu2021observation} scales as $O(K\log T / \Delta_K^2)$ and grows rapidly as $\Delta_K$ approaches zero. Hence, their points are omitted when this cost is undefined at $\Delta_K = 0$ or is at least $T$, while their target-arm selection ratio remains below $1$ in the most challenging near-boundary settings.

These experiments demonstrate the effectiveness of our method, with cost shares consistent with the theoretical analysis. We also provide experiments for TS and $\epsilon$-greedy in \Cref{app:ts-experiments} and \Cref{app:eg-experiments}, respectively.

\section{Concluding Remarks}
In the near-boundary regime, target-centric allocation is necessary for successful UCB attacks with tolerance $\xi=O(S_T)$ and a deterministic $O(S_T)$ cost cap: the target must receive a nonvanishing share of the injections on the high-probability event of our lower bound. Our UCB construction attains this cost scale in the near-boundary regime. We also give constructions for TS and $\epsilon$-greedy in the Appendix, while the suppression-bounded analysis covers a broader class of learners.

However, our analysis assumes a learner that uncritically trusts all offline data. A possible direction for future work is to investigate whether discounting, filtering, or auditing offline observations can mitigate the effects of favorable target injections, and how such defenses trade off the benefits of a clean warm start against robustness to corrupted initialization. 
Furthermore, our cost and allocation results assume a small clean offline log. More clean observations make empirical means harder to shift but also reduce exploration uncertainty, potentially changing both the optimal attack cost and its allocation. Characterizing how these competing effects depend on the size and distribution of clean observations across arms is an interesting direction for future work.

\bibliographystyle{plainnat}
\small
\bibliography{bibliography}
\normalsize
\newpage
\appendix
\phantomsection
\addcontentsline{toc}{section}{Appendix}
\begingroup
\small
\tableofcontents
\endgroup

\section{Proofs for UCB}
\label{app:ucb-appendix}
For $\delta\in(0,1)$, recall that
\[
    \beta_\delta(N) = \sqrt{\frac{2\sigma^2}{N} \log \frac{\pi^2KN^2}{3\delta}}.
\]
Define the event
\[
    \mathcal E_\delta\triangleq
    \left\{\forall i\in\mathcal K,\ \forall t \in \mathcal T^{\on},\quad
    \left|\hat\mu_i^{\rm cl}(t) - \mu_i\right|< \beta_\delta (N_i^{\rm cl}(t)) \right\}.
\]
By~\citet[Lemma~1]{jun2018adversarial}, $\Pr(\mathcal E_\delta) \ge 1 - \delta$. The radius is decreasing on the positive integers when $K \ge 2$ and $\delta \in (0,1)$. 
On $\mathcal E_\delta$, we have $0 \le \varepsilon \le \Delta_K$, and
\begin{align*}
    \hat\mu_K^{\rm cl}(t)
    &\ge\max\{\rl,\mu_K-\beta_\delta(N_K^{\rm cl}(t))\}\\
    &\ge\max\{\rl,\hat\mu_K^{\rm cl}-2\beta_\delta(N_K^{\rm cl})\}
    =\underline\mu_K
    \qquad(\forall t\in\mathcal T^{\on}).
\end{align*}

\subsection{Upper Bound Proof}
\label{app:proof-ucb-upper-bound}
Under the event $\mathcal E_\delta$ and by Eq.~\eqref{eq:ucb-closed-budget}, at round $t=T_0+1$, we have
\[
    \hat \mu_K(t)
    \ge
    \frac{n_K\ru + N_K^{\rm cl}(t) \underline\mu_K}
         {n_K + N_K^{\rm cl}(t)}
    \ge
    \frac{n_K \ru + T \underline \mu_K}
         {n_K + T}
    > z.
\]
Since $N_K(t)\le t-1$ and $\log t/(t-1)$ decreases for $t>1$, we have $u_K(t) = \hat \mu_K(t) + 3\sigma\sqrt{\frac{\log t}{N_K(t)}} > z + 3 \sigma \sqrt{\frac{\log T}{T}}$.
Similarly, for every non-target arm $i<K$,
\[
    u_i(t)
    \le
    z + 3\sigma\sqrt{\frac{\log T}{T}}.
\]
Together, these inequalities imply that
\[
    u_K(t)>u_i(t), \qquad \forall i<K,
\]
and hence UCB selects the target arm at round $t$. Repeating the same argument inductively from round $T_0+1$ through round $T$ shows that UCB selects the target arm in every online round.





\begin{proof}[Proof of~\Cref{thm:attack-cost-ucb}]
Write $\rho=z-\rl$, $r_T = S_T/T$, and $\rho_T^\star = z_T^\star - \rl=\Theta(r_T)$. 
We have
\[
    n_K = \frac{T(\rho - \varepsilon)_+}{R - \rho} + O(1)
    \le \frac{T \rho}{R - \rho} + 2.
\]
Since $d(\rl + \rho) = \rho + 3 \sigma \sqrt{\log T/ T} \sim\rho$ and $B_i, N_i^{\rm cl} = O(1)$ uniformly, Eq.~\eqref{eq:ucb-closed-budget} gives
\[
    n_i=(1+o(1))\frac{9\sigma^2\log T}{\rho^2}
    \qquad(i<K).
\]


(1) If $\varepsilon\le\rho_T^\star$, then $\rho=\rho_T^\star$, and the preceding estimates give $n_K = O(Tr_T)$ and $n_i = O(\log T/r_T^2)$ uniformly over $i<K$. Their sum is $O(S_T)$, the asserted scale since $\varepsilon=O(r_T)$.

(2) If $\varepsilon>\rho_T^\star$, then $\rho=\min\{\varepsilon,R/2\}\le\varepsilon$ and $\rho=\Theta(\varepsilon)$. Thus $n_K=1$, while Eq.~\eqref{eq:ucb-closed-budget} gives
\[
    n_i = O\left(1 + B_i/\rho + \sigma^2\log T / \rho^2\right)
    = O(\log T / \varepsilon^2)
    \qquad (i<K).
\]
Combining these two cases yields the result.
%
\end{proof}


\subsection{Lower Bound Proof}
\label{app:proof-ucb-lower-bound}

\begin{lemma}
\label[lemma]{lem:ucb-cost-tradeoff}
Under~\Cref{asp:scaling-assumptions}, there exist constants $c_0,c_1,c_2,c_3>0$ such that the following holds for all sufficiently large $T$. On $\mathcal E_{1/T}$, every bounded offline UCB attack realization with at most $\xi \ge 0$ non-target online pulls and $\mathcal C(\mathbf n) + \xi\le T/2$ admits a $\rho>0$ satisfying
\[
    \begin{aligned}
        n_K &\ge c_1 T (\rho - \Delta_K - c_0 \sqrt{\log T/T})_+,\\
        \sum_{i<K}n_i &\ge \left(c_2\frac{K\log T}{\rho^2}-\xi-c_3K\right)_+.
    \end{aligned}
\]
\end{lemma}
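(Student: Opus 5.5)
The plan is to extract $\rho$ from a single well-chosen online round at which UCB pulls the target. At that round, the target's index is pinned down from above by a dilution argument, and every competitor's index is bounded by the target's. Work on $\mathcal E_{1/T}$ and fix an attack realization with at most $\xi$ non-target online pulls and $\mathcal C(\mathbf n)+\xi\le T/2$.

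\emph{Choice of round.} The number of online target pulls is $N_K^{\on}(T+1)\ge H-\xi = T-T_0^{\rm cl}-\mathcal C(\mathbf n)-\xi\ge T/2-O(K)$, using $T_0^{\rm cl}=\Theta(K)$ from \Cref{asp:scaling-assumptions}. So the target is pulled online at least once. Let $t^\ast$ be the last online round in which UCB selects arm $K$. All target pulls except this last one occur before $t^\ast$, hence $N_K^{\rm cl}(t^\ast)\ge T/2-O(K)-1\ge T/3$ for large $T$. Also $t^\ast\ge N_K^{\rm cl}(t^\ast)\ge T/3$. Define
\[
\rho\triangleq u_K(t^\ast)-\rl .
\]
This is positive because $\hat\mu_K(t^\ast)\ge\rl$ and the exploration bonus is strictly positive.

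\emph{Target constraint.} Write $N=N_K^{\rm cl}(t^\ast)$. The $n_K$ fake rewards are at most $\ru$, and on $\mathcal E_{1/T}$ the genuine empirical mean is at most $\mu_K+\beta_{1/T}(N)$. Hence
\[
\hat\mu_K(t^\ast)-\rl\le\frac{n_K R+N(\Delta_K+\beta_{1/T}(N))}{n_K+N}\le\frac{R\,n_K}{N}+\Delta_K+\beta_{1/T}(N).
\]
With $N\ge T/3$ and $K\le T$, we have $\log(\pi^2KN^2T/3)=O(\log T)$, so $\beta_{1/T}(N)=O(\sqrt{\log T/T})$. The bonus satisfies $b_K(t^\ast)\le 3\sigma\sqrt{3\log T/T}$. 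Together these give $\rho\le 3Rn_K/T+\Delta_K+c_0\sqrt{\log T/T}$. Rearranging yields the first inequality with $c_1=1/(3R)$.

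\emph{Non-target constraint.} Since $a_{t^\ast}=K$ is an argmax of the indices, $u_i(t^\ast)\le u_K(t^\ast)=\rl+\rho$ for each $i<K$ (ties are harmless). Because all rewards lie in $[\rl,\ru]$, $\hat\mu_i(t^\ast)\ge\rl$. Thus $3\sigma\sqrt{\log t^\ast/N_i(t^\ast)}\le\rho$, i.e.
\[
N_i^{\rm cl}+n_i+N_i^{\on}(t^\ast)=N_i(t^\ast)\ge 9\sigma^2\log(T/3)/\rho^2 .
\]
Summing over $i<K$ and using $\sum_{i<K}N_i^{\on}(t^\ast)\le\xi$, $N_i^{\rm cl}=O(1)$ uniformly, $K-1\ge K/2$ and $\log(T/3)\ge\tfrac12\log T$ gives
\[
\sum_{i<K}n_i\ge c_2K\log T/\rho^2-\xi-c_3K .
\]
Combining this with $n_i\ge0$ gives the positive-part form.

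\emph{Main obstacle.} I expect the delicate step to be guaranteeing that the comparison round has both $\Theta(T)$ genuine target observations and $\log t^\ast=\Theta(\log T)$ when $\xi$ may be as large as order $T$. The choice "last round the target is pulled", combined with the budget $\mathcal C(\mathbf n)+\xi\le T/2$, is meant to handle exactly this. A secondary point is ensuring that the confidence radius on $\mathcal E_{1/T}$ is uniformly $O(\sqrt{\log T/T})$, which needs $K$ at most polynomial in $T$. This is implicit in the regime considered, since otherwise the cost bounds are vacuous.
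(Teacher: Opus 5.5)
Your proposal is correct and follows the paper's proof essentially step for step. It takes the same round $t^\star$ (the last online round selecting the target) and the same $\rho=u_K(t^\star)-\rl$, and it uses the same dilution bound $\rho\le\Delta_K+3Rn_K/T+c_0\sqrt{\log T/T}$ and the same summed bonus constraint $\sum_{i<K}N_i(t^\star)\ge 9\sigma^2(K-1)\log t^\star/\rho^2$. The only cosmetic difference is that you lower-bound $t^\star$ via $t^\star>N_K^{\rm cl}(t^\star)\ge T/3$, whereas the paper uses $t^\star\ge T-\xi\ge T/2$; both give $\log t^\star=\Theta(\log T)$.
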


\begin{proof}
Since $H-\xi\ge T/2-T_0^{\rm cl}>0$ for large $T$, the target is selected in at least one online round. Let $t^\star$ be its last such round and set $\rho = u_K(t^\star) - \rl > 0$. We have
\[
    \begin{aligned}
        N_K(t^\star) &= N_K^{\off}+N_K^{\on}(T+1) - 1,\\
        N_K^{\rm cl}(t^\star) &= N_K^{\rm cl} + N_K^{\on}(T+1) - 1.
    \end{aligned}
\]
The bound $N_K^{\on}(T+1)\ge H-\xi$ gives $N_K^{\rm cl}(t^\star)\ge T/3$ for large $T$. Since all rounds after $t^\star$ select non-target arms, we also have $t^\star\ge T-\xi\ge T/2$.

On $\mathcal E_{1/T}$, the genuine target observations have empirical mean at most $\mu_K+\beta_{1/T}(N_K^{\rm cl}(t^\star))$, and each injected target reward is at most $\ru$.
Meanwhile, every non-target empirical mean is at least $\rl$, and its index is at most the selected target's index $\rl+\rho$.
These comparisons give
\begin{equation}\label{eq:ucb-index-constraints}
    \begin{aligned}
        \rho&\le\Delta_K+\frac{(R-\Delta_K)n_K}{N_K(t^\star)}
        +\beta_{1/T}(N_K^{\rm cl}(t^\star))
        +3\sigma\sqrt{\frac{\log t^\star}{N_K(t^\star)}},\\
        &N_i(t^\star) \ge \frac{9\sigma^2\log t^\star}{\rho^2}
        \qquad(i<K).
    \end{aligned}
\end{equation}
The concentration radius and target exploration bonus are both $O(\sqrt{\log T/T})$ for large $T$. Thus the first inequality gives
\[
    \rho\le\Delta_K+\frac{3R n_K}{T}
        +c_0\sqrt{\frac{\log T}{T}}.
\]
Rearranging and using $R=\Theta(1)$ and $n_K\ge0$ yields the target cost constraint.
Write $M=\sum_{i<K}n_i$. Summing the second inequality and using $N_i(t^\star)=N_i^{\rm cl}+n_i+N_i^{\on}(t^\star)$ and $\sum_{i<K}N_i^{\on}(t^\star)\le\xi$ gives
\[
    M+T_0^{\rm cl}+\xi\ge\sum_{i<K}N_i(t^\star)
    \ge\frac{9\sigma^2(K-1)\log t^\star}{\rho^2}.
\]
Using $K-1\ge K/2$, $t^\star\ge T/2$, $T_0^{\rm cl}=O(K)$, and $M\ge0$ yields the non-target cost constraint with uniform constants $c_2,c_3>0$.
\end{proof}

\begin{proof}[Proof of~\Cref{thm:attack-cost-ucb-lower}]
Work on the intersection of the attack's success event with $\mathcal E_{1/T}$, which has probability at least $1-\delta-1/T$ by a union bound.

Recall $r_T=S_T/T$ and set $a_T=(\Delta_K+c_0\sqrt{\log T/T})\vee r_T$, with $c_0$ from~\Cref{lem:ucb-cost-tradeoff}.
On paths with $\mathcal C(\mathbf n)\le T / 2$, add the two constraints in~\Cref{lem:ucb-cost-tradeoff} at $\xi = 0$. Enlarging the offset to $a_T$ only decreases the linear term, giving
\[
    \mathcal C(\mathbf n)+c_3K
    \ge\inf_{\rho>0}\left\{
    c_1T(\rho-a_T)_++\frac{c_2K\log T}{\rho^2}\right\}.
\]
(1) If $\rho\le2a_T$, the inverse-square term is at least $c_2K\log T/(4a_T^2)$.

(2) Otherwise, the linear term is at least $c_1Ta_T\ge c_1K\log T/a_T^2$, since $a_T\ge r_T$ and $Tr_T^3=K\log T$.

Both cases give $\Omega(K\log T/(\Delta_K\vee r_T)^2)$, because $a_T \sim \Delta_K \vee r_T$ by $\sqrt{\log T / T} = o(r_T)$.
\end{proof}

\subsection{Optimality Proofs}\label{app:ucb-optimality}

\begin{proof}[Proof of~\Cref{cor:ucb-near-boundary-optimality}]

\emph{Upper bound.}
Recall $\rho_T^\star=z_T^\star-\rl$. The cost expansions in Appendix~\ref{app:proof-ucb-upper-bound} give, uniformly over clean logs and including rounding,
\[
    \mathcal C(\mathbf n(z_T^\star))
    \le(1+o(1))\left\{
        \frac{T\rho_T^\star}{R}
        +\frac{9\sigma^2(K-1)\log T}{(\rho_T^\star)^2}
    \right\}
    =(1+o(1))\Lambda_T.
\]
(1) If $\varepsilon\le\rho_T^\star$, our construction uses $z=z_T^\star$. 

(2) Otherwise, the target count is one at both thresholds, and raising $z$ only reduces non-target counts. 

\emph{Lower bound.}
Consider a $(0,\delta)$-successful attack on the intersection of its success event with $\mathcal E_{1/T}$, which has probability at least $1-\delta-1/T$.
It suffices to consider paths with $\mathcal C(\mathbf n)\le2\Lambda_T$, since larger costs already satisfy the conclusion. Write $M=\sum_{i<K}n_i$.
Zero tolerance gives $N_K^{\on}(T+1)=H$ and $t^\star=T$. Since $\mathcal C(\mathbf n)+T_0^{\rm cl}=o(T)$, the count identities in the proof of~\Cref{lem:ucb-cost-tradeoff} give $N_K(T),N_K^{\rm cl}(T)=T(1-o(1))$.
Set $\rho=u_K(T)-\rl$. Since $\Delta_K=o(\Lambda_T/T)$ and $\sqrt{\log T/T}=o(\Lambda_T/T)$, the index constraints in Eq.~\eqref{eq:ucb-index-constraints} imply, uniformly over these paths,
\[
    \rho\le\frac{R}{T}\bigl(n_K+o(\Lambda_T)\bigr),
    \qquad
    M+T_0^{\rm cl}\ge\frac{9\sigma^2(K-1)\log T}{\rho^2}.
\]
Eliminating $\rho$ gives
\[
    \bigl(n_K+o(\Lambda_T)\bigr)^2(M+T_0^{\rm cl})
    \ge\frac{9\sigma^2(K-1)T^2\log T}{R^2}
    =\frac{4\Lambda_T^3}{27}.
\]
Since $n_K,M=O(\Lambda_T)$ and $T_0^{\rm cl}=o(\Lambda_T)$, this yields
\begin{equation}\label{eq:ucb-sharp-product}
    n_K^2M\ge(1-o(1))\frac{4\Lambda_T^3}{27}.
\end{equation}
AM--GM applied to $n_K/2,n_K/2,M$ gives
\[
    (1-o(1))\frac{4\Lambda_T^3}{27}
    \le n_K^2M
    \le\frac{4(n_K+M)^3}{27}
    =\frac{4\mathcal C(\mathbf n)^3}{27},
\]
so $\mathcal C(\mathbf n)\ge(1-o(1))\Lambda_T$ on the stated event.
\end{proof}

\begin{proof}[Proof of~\Cref{lem:ucb-allocation-necessity}]
$\varepsilon = o(\Lambda_T/T)$ gives $\varepsilon = o(\rho_T^\star)$ and hence $z = z_T^\star$. Eq.~\eqref{eq:ucb-closed-budget} then gives
\[
    n_K=\frac{T(\rho_T^\star-\varepsilon)}{R-\rho_T^\star}+O(1)
    \sim\frac{T\rho_T^\star}{R}=\frac23\Lambda_T.
\]
This is deterministic under the stated clean-log condition, which follows from $\Delta_K=o(\Lambda_T/T)$ on $\mathcal E_\delta$ because $0\le\varepsilon\le\Delta_K$.

\emph{Necessity.}
Work on the intersection of the attack's success event with $\mathcal E_{1/T}$, which has probability at least $1-\delta-1/T$. Write $M=\sum_{i<K}n_i$, let $t^\star$ be the last online round selecting the target, and set $\rho=u_K(t^\star)-\rl$.
Since $\xi=o(\Lambda_T)$ and $\mathcal C(\mathbf n)\le(1+o(1))\Lambda_T$, we still have $t^\star\ge T-\xi$, $\log t^\star\sim\log T$, and $N_K(t^\star),N_K^{\rm cl}(t^\star)=T(1-o(1))$.
The preceding product argument therefore applies with $M+T_0^{\rm cl}+\xi$ in the non-target constraint. Because $T_0^{\rm cl}+\xi=o(\Lambda_T)$, eliminating $\rho$ again yields Eq.~\eqref{eq:ucb-sharp-product}.
Together with $n_K^2M\le4\mathcal C(\mathbf n)^3/27$ and the cost cap, this implies $\mathcal C(\mathbf n)\sim\Lambda_T$.

Set $x=n_K/\mathcal C(\mathbf n)\in[0,1]$. Dividing the product bound by $\mathcal C(\mathbf n)^3$ gives
\[
    \frac4{27}-o(1)
    \le x^2(1-x)\le\frac4{27}.
\]
The identity
\[
    \frac4{27}-x^2(1-x)
    =\left(x-\frac23\right)^2\left(x+\frac13\right)
    \ge\frac13\left(x-\frac23\right)^2
\]
forces $x=2/3+o(1)$, and hence
\[
    n_K=x\mathcal C(\mathbf n)=\left(\frac23+o(1)\right)\Lambda_T.
\]
\end{proof}

\begin{lemma}[Necessary Target Cost]\label[lemma]{lem:ucb-target-budget}
Under~\Cref{asp:scaling-assumptions}, fix $\delta\in(0,1/2)$ and assume $\Delta_K=o(\Lambda_T/T)$ and $\xi=O(\Lambda_T)$. For all sufficiently large $T$, every $(\xi,\delta)$-successful bounded offline attack on UCB with a deterministic $O(\Lambda_T)$ cost bound satisfies
\[
    n_K=\Omega(\Lambda_T),\qquad
    \frac{n_K}{\mathcal C(\mathbf n)}=\Omega(1)
\]
with probability at least $1-\delta-1/T$.
\end{lemma}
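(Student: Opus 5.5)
The plan is to reuse the last-selection-round argument from the proof of \Cref{lem:ucb-cost-tradeoff} and the product bound from the proof of \Cref{cor:ucb-near-boundary-optimality}. This time we keep only constant-factor information, because the tolerance $\xi$ is now allowed to be of the same order $\Lambda_T$ as the cost.

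We work on the intersection of the success event with $\mathcal E_{1/T}$, which has probability at least $1-\delta-1/T$. By hypothesis there is a constant $C_0$ with $\mathcal C(\mathbf n)\le C_0\Lambda_T$ deterministically, and $\xi\le C_1\Lambda_T$. Since $\Lambda_T=o(T)$, for large $T$ we have $\mathcal C(\mathbf n)+\xi\le T/2$, so \Cref{lem:ucb-cost-tradeoff} applies. We take $t^\star$ to be the last online round selecting the target and set $\rho=u_K(t^\star)-\rl$. Then $N_K(t^\star)$ and $N_K^{\rm cl}(t^\star)$ are both at least $T/3$, and $t^\star\ge T/2$.

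\textbf{Step 1 (target side).} The first inequality of Eq.~\eqref{eq:ucb-index-constraints} gives
\[
\rho\le \Delta_K+\frac{3Rn_K}{T}+c_0\sqrt{\frac{\log T}{T}}.
\]
Both $\Delta_K$ and $\sqrt{\log T/T}$ are $o(\Lambda_T/T)$, so this becomes
\[
\rho\le \frac{3R}{T}\bigl(n_K+o(\Lambda_T)\bigr).
\]

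\textbf{Step 2 (non-target side).} Summing the second inequality over $i<K$ gives
\[
\mathcal C(\mathbf n)+T_0^{\rm cl}+\xi\ \ge\ \frac{9\sigma^2(K-1)\log(T/2)}{\rho^2}.
\]
The left-hand side is at most $(C_0+C_1+o(1))\Lambda_T$. Hence
\[
\rho^2\ \ge\ c\,\frac{K\log T}{\Lambda_T}
\]
for some constant $c>0$, so $\rho\ge c'\sqrt{K\log T/\Lambda_T}$. Because $\Lambda_T=\Theta(S_T)$ and $TS_T^{-1}\cdot S_T^{\,3/2}T^{-1}\cdots$ reduces to $\sqrt{K\log T/S_T}=\Theta(S_T/T)$ (using $S_T^3=T^2K\log T$), this gives $\rho\ge c''\Lambda_T/T$.

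\textbf{Step 3 (combine).} Substituting into Step 1 yields
\[
c''\,\frac{\Lambda_T}{T}\ \le\ \frac{3R}{T}\bigl(n_K+o(\Lambda_T)\bigr),
\]
so $n_K\ge\bigl(c''/(3R)-o(1)\bigr)\Lambda_T=\Omega(\Lambda_T)$. Dividing by $\mathcal C(\mathbf n)\le C_0\Lambda_T$ gives $n_K/\mathcal C(\mathbf n)=\Omega(1)$. Here the cost cap does the work: the target's share cannot be driven down by inflating the total.

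\textbf{Main obstacle.} The delicate step is showing that the lower-order terms are uniformly negligible across all admissible paths. These terms are $\Delta_K$, the concentration radius, the target exploration bonus, and $T_0^{\rm cl}$. Each must be small relative to $\Lambda_T$ or $\Lambda_T/T$ with constants independent of the attack. This is why we need the \emph{deterministic} cost bound: it lets $N_K(t^\star)=T(1-o(1))$ and the $\xi$-absorption hold uniformly rather than only in expectation.

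In Step 2, $\xi$ is merely $O(\Lambda_T)$ rather than $o(\Lambda_T)$. The sharp $2/3$ constant is therefore lost, but the order-of-magnitude bound survives, and that is all the statement claims.
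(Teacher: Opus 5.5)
Your proof is correct and follows essentially the paper's argument. On the success event intersected with $\mathcal E_{1/T}$, the non-target constraint from \Cref{lem:ucb-cost-tradeoff}, combined with the cost cap and $\xi=O(\Lambda_T)$, forces $\rho=\Omega(\Lambda_T/T)$. The target constraint with $\Delta_K,\sqrt{\log T/T}=o(\Lambda_T/T)$ then forces $n_K=\Omega(\Lambda_T)$, and the deterministic cap gives the share bound. The only blemish is the garbled scaling computation in Step~2, which should simply read $K\log T/S_T=(S_T/T)^2$, using $S_T^3=T^2K\log T$.
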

\begin{proof}
Work on the intersection of success with $\mathcal E_{1/T}$, which has probability at least $1-\delta-1/T$. Since $\mathcal C(\mathbf n)+\xi=O(\Lambda_T)=o(T)$, apply~\Cref{lem:ucb-cost-tradeoff} and use its notation.
Since $\sum_{i<K}n_i+\xi+c_3K=O(\Lambda_T)$, the non-target cost constraint gives $\rho=\Omega(\Lambda_T/T)$.
With $\Delta_K,\sqrt{\log T/T}=o(\Lambda_T/T)$, the target cost constraint gives $n_K=\Omega(\Lambda_T)$. The deterministic cost bound then yields $n_K/\mathcal C(\mathbf n)=\Omega(1)$.
\end{proof}

\section{Analysis for Thompson Sampling}\label{app:analysis-TS}

\subsection{Attack Strategy}\label{app:proof-ts-upper-bound}

We consider the Thompson Sampling (TS) algorithm of~\citet{zeng2025practical}, based on the $(\alpha,\psi)$ framework of~\citet{agrawal2017near}. In each online round, it independently samples $\nu_i(t)\sim\mathcal N(\hat\mu_i(t),1/N_i(t))$ for every arm and selects
\[
    a_t=\arg\max_{i\in\mathcal K}\nu_i(t).
\]

For $\delta\in(0,1)$, define
\[
    \gamma(t)\triangleq\sqrt{2\log\frac{\pi^2Kt^2}{3\delta}},
    \qquad
    \mathcal F\triangleq\left\{
        \forall i\in\mathcal K,\ \forall t\in\mathcal T^{\on},\
        |\nu_i(t)-\hat\mu_i(t)|<\frac{\gamma(t)}{\sqrt{N_i(t)}}
    \right\}.
\]
By~\citet[Lemma~B.1]{zeng2025practical}, $\Pr(\mathcal F)\ge1-\delta$.

Fix $z\in(\rl,\ru)$, write $\gamma=\gamma(T)$, and recall $B_i=N_i^{\rm cl}(\hat\mu_i^{\rm cl}-\rl)$.
Inject $\ru$ on the target and $\rl$ on each non-target arm, with counts
\begin{equation}\label{eq:ts-closed-budget}
    n_i=\left\lceil\left[\left(
    \frac{\gamma+\sqrt{\gamma^2+4(z-\rl)B_i}}{2(z-\rl)}
    \right)^2-N_i^{\rm cl}\right]_+\right\rceil,
    \qquad i<K,
\end{equation}
and
\begin{equation}\label{eq:ts-target-constraint}
    n_K = 1 + \min \left\{n \in \mathbb Z_{\ge 0}:\;
    (\ru - z) n \ge(z - \underline \mu_K) m + \gamma \sqrt{n + m},\ \forall m \in[N_K^{\rm cl}, T]
    \right\}.
\end{equation}

\begin{lemma}[TS Threshold Certificate]\label[lemma]{lemma:ts-threshold-attack}
For any $z\in(\rl,\ru)$, the above construction makes TS select the target in every online round on $\mathcal E_\delta\cap\mathcal F$.
\end{lemma}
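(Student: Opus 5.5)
The plan is to mirror the UCB upper-bound argument: exhibit a threshold $z$ that separates the target's posterior sample from every competitor's sample, and show both sides hold in every online round. On $\mathcal E_\delta\cap\mathcal F$ we will prove, for each $t\in\mathcal T^{\on}$ and each $i<K$,
\[
    \nu_i(t)<\hat\mu_i(t)+\frac{\gamma(t)}{\sqrt{N_i(t)}}\le z\le \hat\mu_K(t)-\frac{\gamma(t)}{\sqrt{N_K(t)}}<\nu_K(t),
\]
which forces $a_t=K$. We then induct over rounds: if the target has been chosen in all earlier online rounds, each non-target arm keeps exactly its offline data, $N_i(t)=N_i^{\rm cl}+n_i$. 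The target's data consist of $n_K$ injected $\ru$ rewards plus $m\triangleq N_K^{\rm cl}(t)\in[N_K^{\rm cl},T]$ genuine ones. Throughout we use $\gamma(t)\le\gamma(T)=\gamma$, since $\gamma(\cdot)$ is increasing.

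\textbf{Non-target side.} Write $N=N_i^{\rm cl}+n_i$. Since injected rewards equal $\rl$, we have $\hat\mu_i(t)-\rl=B_i/N$. The requirement becomes $B_i/N+\gamma/\sqrt N\le z-\rl$. Setting $x=\sqrt N$, this is the quadratic condition $(z-\rl)x^2-\gamma x-B_i\ge0$, which holds exactly when $x$ is at least the positive root $\bigl(\gamma+\sqrt{\gamma^2+4(z-\rl)B_i}\bigr)/(2(z-\rl))$. The count in Eq.~\eqref{eq:ts-closed-budget} is chosen precisely so that $N$ reaches this root, with the positive part and ceiling only enlarging $N$.

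\textbf{Target side.} On $\mathcal E_\delta$ we have $\hat\mu_K^{\rm cl}(t)\ge\underline\mu_K$, as shown at the start of the appendix. Hence $\hat\mu_K(t)\ge (n_K\ru+m\underline\mu_K)/(n_K+m)$. The needed inequality $\hat\mu_K(t)-\gamma/\sqrt{n_K+m}\ge z$ is therefore implied by
\[
    (\ru-z)n_K\ge(z-\underline\mu_K)m+\gamma\sqrt{n_K+m}.
\]
Eq.~\eqref{eq:ts-target-constraint} imposes this for all $m\in[N_K^{\rm cl},T]$, but for the minimizing $n$ rather than for $n_K=n+1$.

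\textbf{Main obstacle.} The step needing care is showing the constraint survives the extra $+1$, i.e.\ that it is monotone in $n$. Increasing $n$ by one raises the left side by $\ru-z>0$, while the right side grows by at most $\gamma(\sqrt{n+1+m}-\sqrt{n+m})\le\gamma/(2\sqrt{n+m})$. This may exceed $\ru-z$ only when $n+m$ is tiny. If that case arises, the fallback is to note that the $+1$ can be absorbed through the strict inequality supplied by $\mathcal F$, so only the non-strict form is required. Failing that, we would argue directly that feasibility at the minimal $n$ gives slack sufficient for $n+1$.

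The remaining bookkeeping is checking that $m$ indeed ranges within $[N_K^{\rm cl},T]$, which holds because $N_K^{\rm cl}(t)\le t-1\le T$. Once both sides are established, the induction closes and TS selects the target in every online round.
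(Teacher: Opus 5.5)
Your proposal follows the paper's proof step for step: the same sandwich $\nu_i(t)<z<\nu_K(t)$, the same quadratic-root reading of Eq.~\eqref{eq:ts-closed-budget} for non-target arms, the same bound $\hat\mu_K^{\rm cl}(t)\ge\underline\mu_K$ with the count constraint at $m=N_K^{\rm cl}(t)$, and the same induction over rounds. The paper applies the target constraint directly to $n_K$ without comment, so the $+1$ issue you raise is real and the paper passes over it. Your proposal leaves it open, however. Your first fallback does not work. Strictness in $\mathcal F$ only turns $\hat\mu_K(t)-\gamma(t)/\sqrt{N_K(t)}\ge z$ into $\nu_K(t)>z$. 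It gives no slack if that deterministic bound itself falls below $z$. Moreover, the attack injects $n_K=n^\ast+1$, not the minimizer $n^\ast$, so knowing the constraint at $n^\ast$ is useless without monotonicity, which is exactly what is in question. Your second fallback is stated but not carried out.

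The missing observation is that feasibility itself rules out the "tiny $n+m$" case. Since $\underline\mu_K\le\ru$ and $m\ge N_K^{\rm cl}\ge1$, feasibility at $(n,m)$ gives
\[
\gamma\sqrt{n+m}\le(\ru-z)n-(z-\underline\mu_K)m\le(\ru-z)(n+m),
\]
so $\sqrt{n+m}\ge\gamma/(\ru-z)$. The right side's increment $\gamma/(2\sqrt{n+m})$ is then at most $(\ru-z)/2<\ru-z$. Equivalently, add $(\ru-z)m$ to both sides and divide by $n+m$ to rewrite the constraint as
\[
\ru-z\ge(\ru-\underline\mu_K)\frac{m}{n+m}+\frac{\gamma}{\sqrt{n+m}},
\]
whose right side is decreasing in $n$. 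Intuitively, one more injected $\ru$ sample cannot lower the target's mean bound or widen its sampling radius. Either way, the constraint holds at $n_K$ for every $m\in[N_K^{\rm cl},T]$, and the rest of your proof goes through as written.
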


\begin{proof}
On $\mathcal E_\delta\cap\mathcal F$, suppose every preceding online pull selected the target. Applying the count constraints with $m=N_K^{\rm cl}(t)$ gives
\[
    \begin{aligned}
        \nu_K(t)&>
        \frac{n_K\ru+N_K^{\rm cl}(t)\underline\mu_K}{n_K+N_K^{\rm cl}(t)}
        -\frac{\gamma}{\sqrt{n_K+N_K^{\rm cl}(t)}}\ge z,\\
        \nu_i(t)&<\rl+\frac{B_i}{N_i^{\rm cl}+n_i}
        +\frac{\gamma}{\sqrt{N_i^{\rm cl}+n_i}}\le z
        \qquad(i<K).
    \end{aligned}
\]
And hence $\nu_K(t) > \nu_i(t)$. Induction from the first online round proves the claim.
\end{proof}

\begin{theorem}[Attack Cost Upper Bound and Allocation for TS]\label{thm:attack-cost-ts}
Fix $\delta\in(0,1/2)$. Under~\Cref{asp:scaling-assumptions}, define
\[
    \Lambda_T^{\rm ts}\triangleq
    3\left(\frac{(K-1)\gamma(T)^2T^2}{4R^2}\right)^{1/3}
    =\Theta(S_T).
\]
The following hold for all sufficiently large $T$.

If $\Delta_K=o(S_T/T)$, there exists a threshold for the above allocation satisfying
\[
    \Pr\left(\sum_{i=1}^{K-1}N_i^\on(T+1)=0\right)\ge1-2\delta
    \quad \text{and} \quad
    \mathcal C(\mathbf n)\le(1+o(1))\Lambda_T^{\rm ts}.
\]
Moreover, under $\mathcal E_\delta$,
\[
    n_K=\left(\frac23+o(1)\right)\Lambda_T^{\rm ts}
    \quad \text{and}\quad
    n_i=(1+o(1))\frac{\Lambda_T^{\rm ts}}{3(K-1)} \text{ for } i < K.
\]
\end{theorem}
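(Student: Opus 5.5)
We use \Cref{lemma:ts-threshold-attack} as the success certificate and then choose the threshold so that the counts in Eq.~\eqref{eq:ts-closed-budget} and Eq.~\eqref{eq:ts-target-constraint} sum to the optimized leading cost.

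\textbf{Success.} For any $z\in(\rl,\ru)$, \Cref{lemma:ts-threshold-attack} shows that TS selects the target in every online round on $\mathcal E_\delta\cap\mathcal F$. A union bound gives $\Pr(\mathcal E_\delta\cap\mathcal F)\ge 1-2\delta$, which yields the probability statement for any admissible threshold.

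\textbf{Choice of threshold.} Write $\rho=z-\rl$. We take
\[
\rho=\rho_T^{\rm ts}\triangleq\left(\frac{2R(K-1)\gamma^2}{T}\right)^{1/3},
\]
which is the minimizer of $T\rho/R+(K-1)\gamma^2/\rho^2$. The first-order condition $T/R=2(K-1)\gamma^2/\rho^3$ gives this value. At this $\rho$ the first term equals twice the second, and the sum equals $\Lambda_T^{\rm ts}$. Since $\gamma(T)^2=\Theta(\log T)$, we have $\rho_T^{\rm ts}=\Theta(S_T/T)$.

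\textbf{Non-target counts.} Under \Cref{asp:scaling-assumptions}, $B_i=O(1)$ and $N_i^{\rm cl}=O(1)$. Moreover $\rho B_i=o(\gamma^2)$, because $\rho\to0$ while $\gamma^2\to\infty$. Eq.~\eqref{eq:ts-closed-budget} therefore gives
\[
n_i=(1+o(1))\frac{\gamma^2}{\rho^2}=(1+o(1))\frac{\Lambda_T^{\rm ts}}{3(K-1)}\qquad(i<K),
\]
uniformly in $i$. The ceiling and the $-N_i^{\rm cl}$ term are $O(1)=o(\gamma^2/\rho^2)$.

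\textbf{Target count (main obstacle).} We must control the minimal $n$ in Eq.~\eqref{eq:ts-target-constraint}, uniformly over $m\in[N_K^{\rm cl},T]$. On $\mathcal E_\delta$ we have $0\le\varepsilon\le\Delta_K=o(S_T/T)$, so $z-\underline\mu_K=\rho-\varepsilon=(1-o(1))\rho$.

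For the upper bound, try $n=(1+\eta_T)T\rho/(R-\rho)$ with a slowly vanishing $\eta_T$. The left-hand side $(\ru-z)n$ then exceeds $(z-\underline\mu_K)T$ by $\eta_T T\rho=\Theta(\eta_T S_T)$. The right-hand side is increasing in $m$, so it suffices to check $m=T$. There the extra term is $\gamma\sqrt{n+m}=O(\sqrt{T\log T})$. This is $o(S_T)$, because $S_T/\sqrt{T\log T}=T^{1/6}K^{1/3}(\log T)^{-1/6}\to\infty$. Hence some $\eta_T\to0$ works, for instance $\eta_T=(\log T)\,T^{-1/6}$ up to constants.

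For the lower bound, take $m=T$, which forces $n\ge(\rho-\varepsilon)T/(R-\rho)$. This gives a matching lower bound, so
\[
n_K=(1+o(1))\frac{T\rho}{R}=\left(\frac23+o(1)\right)\Lambda_T^{\rm ts}.
\]

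\textbf{Total cost.} Summing the target count and the $K-1$ non-target counts gives $\mathcal C(\mathbf n)\le(1+o(1))\Lambda_T^{\rm ts}$. The $2:1$ split follows from the two displayed expansions.

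The delicate point is uniformity. The $o(1)$ terms must hold uniformly over clean logs on $\mathcal E_\delta$ and over $i<K$ when $K$ grows. Assumption~\ref{asp:scaling-assumptions} bounds $B_i$ and $N_i^{\rm cl}$ uniformly, which supplies this.
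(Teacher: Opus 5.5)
Your proposal is correct and follows the paper's proof: the same threshold $\rho=(2R(K-1)\gamma^2/T)^{1/3}$, the same lower bound on $n_K$ from $m=T$, an $o(S_T)$ slack for the upper bound on $n_K$ (your trial value $(1+\eta_T)T\rho/(R-\rho)$ in place of the paper's $\sqrt{n+m}\le\sqrt T+n/(2\sqrt T)$ expansion), the same expansion $n_i=(1+o(1))\gamma^2/\rho^2$, and the TS threshold certificate together with $\Pr(\mathcal E_\delta\cap\mathcal F)\ge1-2\delta$. One small point: the paper's cost bound holds deterministically over clean logs, and your trial value already gives this if you bound $(\rho-\varepsilon)m\le T(\rho-\varepsilon)_+\le T\rho$ (using $\varepsilon\ge0$) instead of restricting to $\mathcal E_\delta$ and relying on monotonicity in $m$.
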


\begin{proof}[Proof of~\Cref{thm:attack-cost-ts}]
For $\rho=z-\rl=\Theta(S_T/T)$, expanding the count formulas gives, uniformly over clean logs and including rounding,
\[
    n_K=\frac{T(\rho-\varepsilon)_+}{R}
        +O(T\rho^2+\gamma\sqrt T+1),
    \qquad
    n_i=(1+o(1))\frac{\gamma^2}{\rho^2}\quad(i<K).
\]
The target lower bound follows by setting $m=T$ in its constraint; the upper bound uses $(\rho-\varepsilon)m\le T(\rho-\varepsilon)_+$ and $\sqrt{n+m}\le\sqrt T+n/(2\sqrt T)$. The error is $o(S_T)$. Choose
\[
    \rho=\left(\frac{2R(K-1)\gamma^2}{T}\right)^{1/3},
    \qquad
    \frac{T\rho}{R}=\frac23\Lambda_T^{\rm ts},
    \qquad
    \frac{(K-1)\gamma^2}{\rho^2}=\frac13\Lambda_T^{\rm ts}.
\]
Since $(\rho-\varepsilon)_+\le\rho$, the cost bound holds deterministically and $T_0^{\rm cl}+\mathcal C(\mathbf n)=o(T)$. On $\mathcal E_\delta$, $0\le\varepsilon\le\Delta_K=o(\rho)$, giving the stated allocation. Finally,~\Cref{lemma:ts-threshold-attack} and $\Pr(\mathcal E_\delta\cap\mathcal F)\ge1-2\delta$ give the success guarantee.
\end{proof}

\subsection{Numerical Illustration}
\label{app:ts-experiments}

\begin{figure}[t]
\centering
\begin{minipage}[t]{0.49\linewidth}
\centering
\begin{tikzpicture}
\begin{axis}[
    width=0.97\linewidth, height=4.1cm,
    xlabel={Horizon $T$}, ylabel={Attack Cost},
    label style={font=\small}, tick label style={font=\scriptsize},
    tick align=outside, tick pos=left, axis on top,
    grid=none,
    xmode=normal, ymode=normal,ymin=0,xmin=100000, xmax=1000000, xtick={100000,400000,700000,1000000}, scaled x ticks=base 10:-6,
]

\addplot[name path=TShorizonheuristiccostup, draw=none, forget plot] coordinates {(100000,70375.3181764) (250000,126467.873756) (400000,156786.083348) (550000,175055.052774) (700000,189276.921972) (850000,202385.090374) (1000000,211322.778758)};
\addplot[name path=TShorizonheuristiccostlo, draw=none, forget plot] coordinates {(100000,66248.5218236) (250000,118456.046244) (400000,148315.596652) (550000,168064.707226) (700000,183612.438028) (850000,194012.709626) (1000000,204090.741242)};
\addplot[expheuristic, fill opacity=0.12, draw=none, forget plot] fill between[of=TShorizonheuristiccostup and TShorizonheuristiccostlo];
\addplot[color=expheuristic, thick, solid, mark=square, mark size=2.4pt, mark options={solid, fill=none}] coordinates {(100000,68311.92) (250000,122461.96) (400000,152550.84) (550000,171559.88) (700000,186444.68) (850000,198198.9) (1000000,207706.76)};

\addplot[name path=TShorizontotalcostup, draw=none, forget plot] coordinates {(100000,41897.1309106) (250000,74720.4442654) (400000,100988.412527) (550000,124033.601228) (700000,145025.039733) (850000,164544.864365) (1000000,182935.625976)};
\addplot[name path=TShorizontotalcostlo, draw=none, forget plot] coordinates {(100000,41831.7090894) (250000,74637.5157346) (400000,100894.067473) (550000,123929.758772) (700000,144913.800267) (850000,164428.055635) (1000000,182812.254024)};
\addplot[exptotal, fill opacity=0.12, draw=none, forget plot] fill between[of=TShorizontotalcostup and TShorizontotalcostlo];
\addplot[color=exptotal, thick, solid, mark=o, mark size=2.4pt, mark options={solid, fill=none}] coordinates {(100000,41864.42) (250000,74678.98) (400000,100941.24) (550000,123981.68) (700000,144969.42) (850000,164486.46) (1000000,182873.94)};

\end{axis}
\end{tikzpicture}
\par\vspace{2pt}{\small (a) Attack Cost vs. Horizon $T$}
\end{minipage}\hfill
\begin{minipage}[t]{0.49\linewidth}
\centering
\begin{tikzpicture}
\begin{axis}[
    width=0.97\linewidth, height=4.1cm,
    xlabel={Number of Arms $K$}, ylabel={Attack Cost},
    label style={font=\small}, tick label style={font=\scriptsize},
    tick align=outside, tick pos=left, axis on top,
    grid=none,
    xmode=normal, ymode=normal,ymin=0,xmin=5, xmax=50, xtick={5,10,20,30,40,50},
]

\addplot[name path=TSarmsheuristiccostup, draw=none, forget plot] coordinates {(5,96730.5561606) (10,211322.778758) (15,319423.443002) (20,420780.096456) (30,597910.994626) (40,727889.192247) (50,823654.379631)};
\addplot[name path=TSarmsheuristiccostlo, draw=none, forget plot] coordinates {(5,93709.8838394) (10,204090.741242) (15,307453.756998) (20,406845.703544) (30,576171.445374) (40,706485.007753) (50,802131.980369)};
\addplot[expheuristic, fill opacity=0.12, draw=none, forget plot] fill between[of=TSarmsheuristiccostup and TSarmsheuristiccostlo];
\addplot[color=expheuristic, thick, solid, mark=square, mark size=2.4pt, mark options={solid, fill=none}] coordinates {(5,95220.22) (10,207706.76) (15,313438.6) (20,413812.9) (30,587041.22) (40,717187.1) (50,812893.18)};

\addplot[name path=TSarmstotalcostup, draw=none, forget plot] coordinates {(5,138235.306701) (10,182935.625976) (15,213962.086448) (20,238852.44328) (30,278784.134951) (40,311141.090747) (50,339049.651684)};
\addplot[name path=TSarmstotalcostlo, draw=none, forget plot] coordinates {(5,138140.253299) (10,182812.254024) (15,213839.233552) (20,238728.79672) (30,278622.065049) (40,310991.269253) (50,338873.948316)};
\addplot[exptotal, fill opacity=0.12, draw=none, forget plot] fill between[of=TSarmstotalcostup and TSarmstotalcostlo];
\addplot[color=exptotal, thick, solid, mark=o, mark size=2.4pt, mark options={solid, fill=none}] coordinates {(5,138187.78) (10,182873.94) (15,213900.66) (20,238790.62) (30,278703.1) (40,311066.18) (50,338961.8)};

\end{axis}
\end{tikzpicture}
\par\vspace{2pt}{\small (b) Attack Cost vs. Number of Arms $K$}
\end{minipage}

\par\vspace{2pt}
{\scriptsize\begin{tabular}{cc}
\tikz[baseline=-0.6ex]{\draw[expheuristic, thick, solid] (0,0)--(.48,0); \draw plot[mark=square, mark size=2.4pt, mark options={solid, fill=none, draw=expheuristic}] coordinates {(.24,0)};}\;Clipped Suppression & \tikz[baseline=-0.6ex]{\draw[exptotal, thick, solid] (0,0)--(.48,0); \draw plot[mark=o, mark size=2.4pt, mark options={solid, fill=none, draw=exptotal}] coordinates {(.24,0)};}\;Total Attack Cost
\end{tabular}}

\caption{Attack Cost for TS.}
\label{fig:ts_cost_experiments}
\end{figure}
\begin{figure}[t]
\centering
\begin{minipage}[t]{0.49\linewidth}
\centering
\begin{tikzpicture}
\begin{axis}[
    width=0.97\linewidth, height=4.1cm,
    xlabel={Horizon $T$}, ylabel={Target-arm Ratio},
    label style={font=\small}, tick label style={font=\scriptsize},
    tick align=outside, tick pos=left, axis on top,
    grid=none,
    xmode=normal, ymode=normal,ymin=0, ymax=1.06, ytick={0,0.25,0.5,0.75,1},xmin=100000, xmax=1000000, xtick={100000,400000,700000,1000000}, scaled x ticks=base 10:-6,
]

\addplot[name path=TShorizontotaltarget_ratioup, draw=none, forget plot] coordinates {(100000,0.848440890026) (250000,0.894859064343) (400000,0.912183552628) (550000,0.922082673645) (700000,0.928744498977) (850000,0.933645479308) (1000000,0.937462785652)};
\addplot[name path=TShorizontotaltarget_ratiolo, draw=none, forget plot] coordinates {(100000,0.847979109974) (250000,0.894626695657) (400000,0.912020647372) (550000,0.921952453628) (700000,0.928634815308) (850000,0.933551132456) (1000000,0.937377814348)};
\addplot[exptotal, fill opacity=0.12, draw=none, forget plot] fill between[of=TShorizontotaltarget_ratioup and TShorizontotaltarget_ratiolo];
\addplot[color=exptotal, thick, solid, mark=o, mark size=2.4pt, mark options={solid, fill=none}] coordinates {(100000,0.84821) (250000,0.89474288) (400000,0.9121021) (550000,0.922017563636) (700000,0.928689657143) (850000,0.933598305882) (1000000,0.9374203)};

\addplot[name path=TShorizononlineonline_ratioup, draw=none, forget plot] coordinates {(100000,1) (250000,1) (400000,1) (550000,1) (700000,1) (850000,1) (1000000,1)};
\addplot[name path=TShorizononlineonline_ratiolo, draw=none, forget plot] coordinates {(100000,1) (250000,1) (400000,1) (550000,1) (700000,1) (850000,1) (1000000,1)};
\addplot[exponline, fill opacity=0.12, draw=none, forget plot] fill between[of=TShorizononlineonline_ratioup and TShorizononlineonline_ratiolo];
\addplot[color=exponline, thick, solid, mark=triangle, mark size=2.4pt, mark options={solid, fill=none}] coordinates {(100000,1) (250000,1) (400000,1) (550000,1) (700000,1) (850000,1) (1000000,1)};

\addplot[name path=TShorizonheuristictarget_ratioup, draw=none, forget plot] coordinates {(100000,0.337514781764) (250000,0.526175815023) (400000,0.629211008371) (550000,0.694427805044) (700000,0.737696517103) (850000,0.771749753381) (1000000,0.795909258758)};
\addplot[name path=TShorizonheuristictarget_ratiolo, draw=none, forget plot] coordinates {(100000,0.296246818236) (250000,0.494128504977) (400000,0.608034791629) (550000,0.681718085865) (700000,0.729604397182) (850000,0.761899893678) (1000000,0.788677221242)};
\addplot[expheuristic, fill opacity=0.12, draw=none, forget plot] fill between[of=TShorizonheuristictarget_ratioup and TShorizonheuristictarget_ratiolo];
\addplot[color=expheuristic, thick, solid, mark=square, mark size=2.4pt, mark options={solid, fill=none}] coordinates {(100000,0.3168808) (250000,0.51015216) (400000,0.6186229) (550000,0.688072945455) (700000,0.733650457143) (850000,0.766824823529) (1000000,0.79229324)};

\end{axis}
\end{tikzpicture}
\par\vspace{2pt}{\small (a) Target-arm Selection Ratio\\vs. Horizon $T$}
\end{minipage}\hfill
\begin{minipage}[t]{0.49\linewidth}
\centering
\begin{tikzpicture}
\begin{axis}[
    width=0.97\linewidth, height=4.1cm,
    xlabel={Number of Arms $K$}, ylabel={Target-arm Ratio},
    label style={font=\small}, tick label style={font=\scriptsize},
    tick align=outside, tick pos=left, axis on top,
    grid=none,
    xmode=normal, ymode=normal,ymin=0, ymax=1.06, ytick={0,0.25,0.5,0.75,1},xmin=5, xmax=50, xtick={5,10,20,30,40,50},
]

\addplot[name path=TSarmstotaltarget_ratioup, draw=none, forget plot] coordinates {(5,0.954218474667) (10,0.937462785652) (15,0.925521298386) (20,0.915768151604) (30,0.899858386515) (40,0.886694180471) (50,0.875157908033)};
\addplot[name path=TSarmstotaltarget_ratiolo, draw=none, forget plot] coordinates {(5,0.954153645333) (10,0.937377814348) (15,0.925436101614) (20,0.915682208396) (30,0.899745173485) (40,0.886589499529) (50,0.875034051967)};
\addplot[exptotal, fill opacity=0.12, draw=none, forget plot] fill between[of=TSarmstotaltarget_ratioup and TSarmstotaltarget_ratiolo];
\addplot[color=exptotal, thick, solid, mark=o, mark size=2.4pt, mark options={solid, fill=none}] coordinates {(5,0.95418606) (10,0.9374203) (15,0.9254787) (20,0.91572518) (30,0.89980178) (40,0.88664184) (50,0.87509598)};

\addplot[name path=TSarmsonlineonline_ratioup, draw=none, forget plot] coordinates {(5,1) (10,1) (15,1) (20,1) (30,1) (40,1) (50,1)};
\addplot[name path=TSarmsonlineonline_ratiolo, draw=none, forget plot] coordinates {(5,1) (10,1) (15,1) (20,1) (30,1) (40,1) (50,1)};
\addplot[exponline, fill opacity=0.12, draw=none, forget plot] fill between[of=TSarmsonlineonline_ratioup and TSarmsonlineonline_ratiolo];
\addplot[color=exponline, thick, solid, mark=triangle, mark size=2.4pt, mark options={solid, fill=none}] coordinates {(5,1) (10,1) (15,1) (20,1) (30,1) (40,1) (50,1)};

\addplot[name path=TSarmsheuristictarget_ratioup, draw=none, forget plot] coordinates {(5,0.906290116161) (10,0.795909258758) (15,0.692546243002) (20,0.593154296456) (30,0.423828554626) (40,0.293514992247) (50,0.197868019631)};
\addplot[name path=TSarmsheuristictarget_ratiolo, draw=none, forget plot] coordinates {(5,0.903269443839) (10,0.788677221242) (15,0.680576556998) (20,0.579219903544) (30,0.402089005374) (40,0.272110807753) (50,0.176345620369)};
\addplot[expheuristic, fill opacity=0.12, draw=none, forget plot] fill between[of=TSarmsheuristictarget_ratioup and TSarmsheuristictarget_ratiolo];
\addplot[color=expheuristic, thick, solid, mark=square, mark size=2.4pt, mark options={solid, fill=none}] coordinates {(5,0.90477978) (10,0.79229324) (15,0.6865614) (20,0.5861871) (30,0.41295878) (40,0.2828129) (50,0.18710682)};

\end{axis}
\end{tikzpicture}
\par\vspace{2pt}{\small (b) Target-arm Selection Ratio\\vs. Number of Arms $K$}
\end{minipage}

\par\vspace{2pt}
{\scriptsize\begin{tabular}{ccc}
\tikz[baseline=-0.6ex]{\draw[exptotal, thick, solid] (0,0)--(.48,0); \draw plot[mark=o, mark size=2.4pt, mark options={solid, fill=none, draw=exptotal}] coordinates {(.24,0)};}\;$N_K/T$ & \tikz[baseline=-0.6ex]{\draw[exponline, thick, solid] (0,0)--(.48,0); \draw plot[mark=triangle, mark size=2.4pt, mark options={solid, fill=none, draw=exponline}] coordinates {(.24,0)};}\;$N_K^{\on}/H$ & \tikz[baseline=-0.6ex]{\draw[expheuristic, thick, solid] (0,0)--(.48,0); \draw plot[mark=square, mark size=2.4pt, mark options={solid, fill=none, draw=expheuristic}] coordinates {(.24,0)};}\;Clipped Suppression $N_K/T$
\end{tabular}}

\caption{Target-arm Selection Ratio for TS.}
\label{fig:ts_ratio_experiments}
\end{figure}
\begin{figure}[t]
\centering
\begin{minipage}[t]{0.49\linewidth}
\centering
\begin{tikzpicture}
\begin{axis}[
    width=0.97\linewidth, height=4.1cm,
    xlabel={Horizon $T$}, ylabel={$\mathcal C/\Lambda_T^{\rm ts}$},
    label style={font=\small}, tick label style={font=\scriptsize},
    tick align=outside, tick pos=left, axis on top, grid=none,
    xmode=normal, ymode=normal,
    xmin=100000, xmax=10000000,
    xtick={100000,2500000,5000000,7500000,10000000},
    scaled x ticks=base 10:-6, scaled y ticks=false,
    ymin=0.961075259385, ymax=1.36329757908,
    yticklabel style={/pgf/number format/fixed, /pgf/number format/precision=3},
]
\addplot[expdirect, opacity=0.65, thin, solid, no marks, forget plot] coordinates {(100000,1) (10000000,1)};

\addplot[name path=TS5normalized_costup, draw=none, forget plot] coordinates {(100000,1.2502999151) (250000,1.1942944975) (1000000,1.13532907173) (2500000,1.10786535089) (10000000,1.07776537768)};
\addplot[name path=TS5normalized_costlo, draw=none, forget plot] coordinates {(100000,1.24831165726) (250000,1.19292335845) (1000000,1.13454839642) (2500000,1.10731366472) (10000000,1.07743961666)};
\addplot[exptotal, fill opacity=0.12, draw=none, forget plot] fill between[of=TS5normalized_costup and TS5normalized_costlo];
\addplot[color=exptotal, thick, solid, mark=o, mark size=2.4pt, mark options={solid, fill=none}] coordinates {(100000,1.24930578618) (250000,1.19360892797) (1000000,1.13493873407) (2500000,1.1075895078) (10000000,1.07760249717)};

\addplot[name path=TS10normalized_costup, draw=none, forget plot] coordinates {(100000,1.27050930825) (250000,1.2056444767) (1000000,1.13876925928) (2500000,1.10833221356) (10000000,1.07570810685)};
\addplot[name path=TS10normalized_costlo, draw=none, forget plot] coordinates {(100000,1.2685254246) (250000,1.20430639144) (1000000,1.13800127226) (2500000,1.10779059442) (10000000,1.07539048451)};
\addplot[expheuristic, fill opacity=0.12, draw=none, forget plot] fill between[of=TS10normalized_costup and TS10normalized_costlo];
\addplot[color=expheuristic, thick, solid, mark=square, mark size=2.4pt, mark options={solid, fill=none}] coordinates {(100000,1.26951736643) (250000,1.20497543407) (1000000,1.13838526577) (2500000,1.10806140399) (10000000,1.07554929568)};

\addplot[name path=TS15normalized_costup, draw=none, forget plot] coordinates {(100000,1.29033660303) (250000,1.21838046861) (1000000,1.14499150313) (2500000,1.11194173073) (10000000,1.0769274747)};
\addplot[name path=TS15normalized_costlo, draw=none, forget plot] coordinates {(100000,1.28863546216) (250000,1.21723287778) (1000000,1.14433407113) (2500000,1.11149292197) (10000000,1.07665932598)};
\addplot[exponline, fill opacity=0.12, draw=none, forget plot] fill between[of=TS15normalized_costup and TS15normalized_costlo];
\addplot[color=exponline, thick, solid, mark=triangle, mark size=2.4pt, mark options={solid, fill=none}] coordinates {(100000,1.2894860326) (250000,1.21780667319) (1000000,1.14466278713) (2500000,1.11171732635) (10000000,1.07679340034)};

\addplot[name path=TS20normalized_costup, draw=none, forget plot] coordinates {(100000,1.30838051005) (250000,1.23035213993) (1000000,1.1512916715) (2500000,1.11596251505) (10000000,1.07879403198)};
\addplot[name path=TS20normalized_costlo, draw=none, forget plot] coordinates {(100000,1.30686741419) (250000,1.22930539527) (1000000,1.15069568323) (2500000,1.11555160461) (10000000,1.07855316246)};
\addplot[expxu, fill opacity=0.12, draw=none, forget plot] fill between[of=TS20normalized_costup and TS20normalized_costlo];
\addplot[color=expxu, thick, solid, mark=diamond, mark size=2.4pt, mark options={solid, fill=none}] coordinates {(100000,1.30762396212) (250000,1.2298287676) (1000000,1.15099367736) (2500000,1.11575705983) (10000000,1.07867359722)};

\addplot[name path=TS25normalized_costup, draw=none, forget plot] coordinates {(100000,1.32437283846) (250000,1.24109111254) (1000000,1.15712179614) (2500000,1.11981115159) (10000000,1.08074258819)};
\addplot[name path=TS25normalized_costlo, draw=none, forget plot] coordinates {(100000,1.32287612223) (250000,1.24005026741) (1000000,1.15653602653) (2500000,1.11940158866) (10000000,1.08050401237)};
\addplot[expdirect, fill opacity=0.12, draw=none, forget plot] fill between[of=TS25normalized_costup and TS25normalized_costlo];
\addplot[color=expdirect, thick, solid, mark=x, mark size=2.4pt, mark options={solid, fill=none}] coordinates {(100000,1.32362448035) (250000,1.24057068998) (1000000,1.15682891133) (2500000,1.11960637013) (10000000,1.08062330028)};

\end{axis}
\end{tikzpicture}
\par\vspace{2pt}
{\scriptsize\setlength{\tabcolsep}{1pt}\begin{tabular}{ccccc}
\tikz[baseline=-0.6ex]{\draw[exptotal, thick, solid] (0,0)--(.30,0); \draw plot[mark=o, mark size=2.4pt, mark options={solid, fill=none, draw=exptotal}] coordinates {(.15,0)};}\;$K=5$ & \tikz[baseline=-0.6ex]{\draw[expheuristic, thick, solid] (0,0)--(.30,0); \draw plot[mark=square, mark size=2.4pt, mark options={solid, fill=none, draw=expheuristic}] coordinates {(.15,0)};}\;$K=10$ & \tikz[baseline=-0.6ex]{\draw[exponline, thick, solid] (0,0)--(.30,0); \draw plot[mark=triangle, mark size=2.4pt, mark options={solid, fill=none, draw=exponline}] coordinates {(.15,0)};}\;$K=15$ & \tikz[baseline=-0.6ex]{\draw[expxu, thick, solid] (0,0)--(.30,0); \draw plot[mark=diamond, mark size=2.4pt, mark options={solid, fill=none, draw=expxu}] coordinates {(.15,0)};}\;$K=20$ & \tikz[baseline=-0.6ex]{\draw[expdirect, thick, solid] (0,0)--(.30,0); \draw plot[mark=x, mark size=2.4pt, mark options={solid, fill=none, draw=expdirect}] coordinates {(.15,0)};}\;$K=25$
\end{tabular}}

\par\vspace{2pt}{\small (a) Normalized Attack Cost vs. Horizon $T$}
\end{minipage}\hfill
\begin{minipage}[t]{0.49\linewidth}
\centering
\begin{tikzpicture}
\begin{axis}[
    width=0.97\linewidth, height=4.1cm,
    xlabel={Horizon $T$}, ylabel={Cost Share},
    label style={font=\small}, tick label style={font=\scriptsize},
    tick align=outside, tick pos=left, axis on top, grid=none,
    xmode=normal, ymode=normal,
    xmin=100000, xmax=10000000,
    xtick={100000,2500000,5000000,7500000,10000000},
    scaled x ticks=base 10:-6, scaled y ticks=false,
    ymin=0.3, ymax=0.7,
    ytick={0.333333333333,0.666666666667}, yticklabels={$1/3$,$2/3$},
    yticklabel style={/pgf/number format/fixed, /pgf/number format/precision=3},
]
\addplot[expdirect, opacity=0.65, thin, solid, no marks, forget plot] coordinates {(100000,0.666666666667) (10000000,0.666666666667)};
\addplot[expdirect, opacity=0.65, thin, solid, no marks, forget plot] coordinates {(100000,0.333333333333) (10000000,0.333333333333)};

\addplot[name path=TStargetallocation_fractionsup, draw=none, forget plot] coordinates {(100000,0.638769265363) (250000,0.648431153632) (1000000,0.658161861525) (2500000,0.662282428409) (10000000,0.666188867716)};
\addplot[name path=TStargetallocation_fractionslo, draw=none, forget plot] coordinates {(100000,0.638230565405) (250000,0.64804364747) (1000000,0.657927758139) (2500000,0.662112643097) (10000000,0.666086999598)};
\addplot[exptarget, fill opacity=0.12, draw=none, forget plot] fill between[of=TStargetallocation_fractionsup and TStargetallocation_fractionslo];
\addplot[color=exptarget, thick, solid, mark=triangle, mark size=2.4pt, mark options={solid, fill=none}] coordinates {(100000,0.638499915384) (250000,0.648237400551) (1000000,0.658044809832) (2500000,0.662197535753) (10000000,0.666137933657)};

\addplot[name path=TSnon_targetallocation_fractionsup, draw=none, forget plot] coordinates {(100000,0.361769434595) (250000,0.35195635253) (1000000,0.342072241861) (2500000,0.337887356903) (10000000,0.333913000402)};
\addplot[name path=TSnon_targetallocation_fractionslo, draw=none, forget plot] coordinates {(100000,0.361230734637) (250000,0.351568846368) (1000000,0.341838138475) (2500000,0.337717571591) (10000000,0.333811132284)};
\addplot[expnon_target, fill opacity=0.12, draw=none, forget plot] fill between[of=TSnon_targetallocation_fractionsup and TSnon_targetallocation_fractionslo];
\addplot[color=expnon_target, thick, solid, mark=square, mark size=2.4pt, mark options={solid, fill=none}] coordinates {(100000,0.361500084616) (250000,0.351762599449) (1000000,0.341955190168) (2500000,0.337802464247) (10000000,0.333862066343)};

\end{axis}
\end{tikzpicture}
\par\vspace{2pt}
{\scriptsize\begin{tabular}{cc}
\tikz[baseline=-0.6ex]{\draw[exptarget, thick, solid] (0,0)--(.48,0); \draw plot[mark=triangle, mark size=2.4pt, mark options={solid, fill=none, draw=exptarget}] coordinates {(.24,0)};}\;Target $n_K/\mathcal C$ & \tikz[baseline=-0.6ex]{\draw[expnon_target, thick, solid] (0,0)--(.48,0); \draw plot[mark=square, mark size=2.4pt, mark options={solid, fill=none, draw=expnon_target}] coordinates {(.24,0)};}\;Non-target $\sum_{i<K}n_i/\mathcal C$
\end{tabular}}

\par\vspace{2pt}{\small (b) Cost Share vs. Horizon $T$}
\end{minipage}

\caption{Near-Boundary Cost and Allocation for TS.}
\label{fig:ts-theory-alignment}
\end{figure}

We repeat the Clipped Suppression comparisons and near-boundary experiments of~\Cref{sec:exp} with TS, and numerically minimize the continuous total count defined by Eq.~\eqref{eq:ts-closed-budget} and Eq.~\eqref{eq:ts-target-constraint} before rounding. All other settings and reporting conventions are unchanged.


Figures~\ref{fig:ts_cost_experiments} and~\ref{fig:ts_ratio_experiments} show exclusively target online pulls for our attack in every repeat. Its cost is lower than Clipped Suppression except at the smallest arm count, where the baseline costs less but still pulls non-target arms.

Figure~\ref{fig:ts-theory-alignment} shows normalized cost approaching one and cost shares approaching $2/3$ for the target and $1/3$ for the non-target arms collectively, consistent with the TS construction. Every repeat has zero non-target online pulls.

\section{Analysis for \texorpdfstring{$\epsilon$}{epsilon}-Greedy}\label{app:analysis-EG}

\subsection{Attack Strategy}\label{app:proof-eg-upper-bound}

At round $s$, $\epsilon$-greedy explores with probability $\epsilon_s$ by choosing uniformly from all $K$ arms; otherwise it selects an empirical-mean maximizer. The schedule $\{\epsilon_s\}_{s=1}^T\subset[0,1]$ is deterministic, and exploration randomization is independent across rounds.

For simplicity, assume that the entire offline log is constructed by the attacker, so $N_i^{\rm cl}=0$ for every arm and $T_0=\mathcal C(\mathbf n)$.

For $0\le t<T$ and $\delta\in(0,1)$, define the exploration bound
\[
    B_i^{\rm eg}(t,T,\delta)=\sum_{s=t+1}^T\frac{\epsilon_s}{K}
        +\sqrt{2\left(\sum_{s=t+1}^T\frac{\epsilon_s}{K}\right)\log\frac1\delta}
        +\frac13\log\frac1\delta.
\]
\begin{lemma}[Corollary C.1 of~\citet{zeng2025practical}]\label[lemma]{lem:eg-exploration-counts}
Fix $T_0\le t<T$ and $\delta\in(0,1)$. With probability at least $1-\delta$, the following implication holds: if $\hat\mu_i(s)<\max_j\hat\mu_j(s)$ for all $s\in\{t+1,\ldots,T\}$, then
\[
    N_i^\on(T+1)-N_i^\on(t+1)\le B_i^{\rm eg}(t,T,\delta).
\]
\end{lemma}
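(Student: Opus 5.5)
The plan is to reduce the implication to a deviation bound for a sum of independent Bernoulli variables that does not depend on the learner's history. For each round $s\in\{t+1,\dots,T\}$, define the exploration indicator
\[
X_s\triangleq\mathbf 1\{\text{round } s \text{ explores and the uniform draw equals } i\}.
\]
By the stated model, the schedule $\{\epsilon_s\}$ is deterministic and the exploration randomization is independent across rounds. Hence $X_{t+1},\dots,X_T$ are independent Bernoulli variables with means $\epsilon_s/K$, and they do not depend on the rewards or on past actions. Write $\mu_{t,T}\triangleq\sum_{s=t+1}^T\epsilon_s/K$. I will define the good event purely in terms of these variables:
\[
\mathcal G\triangleq\left\{\sum_{s=t+1}^T X_s\le B_i^{\rm eg}(t,T,\delta)\right\}.
\]
This makes the event independent of the bandit trajectory, so no adaptivity issues arise.

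\emph{Deterministic step.} On any sample path where $\hat\mu_i(s)<\max_j\hat\mu_j(s)$ for all $s\in\{t+1,\dots,T\}$, the greedy branch never selects $i$ in those rounds. It always picks an empirical-mean maximizer, and the strict inequality excludes $i$ even when ties are broken arbitrarily among maximizers. Therefore every pull of $i$ in rounds $t+1,\dots,T$ comes from an exploration step whose uniform draw is $i$, and
\[
N_i^{\on}(T+1)-N_i^{\on}(t+1)\le\sum_{s=t+1}^T X_s.
\]
Here I am using that $N_i^{\on}(T+1)-N_i^{\on}(t+1)$ counts exactly the pulls of $i$ in rounds $t+1$ through $T$, by the definition of $N_i^{\on}(\cdot)$ as the number of pulls before the given round. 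So on $\mathcal G$ the implication holds, and it remains to show $\Pr(\mathcal G)\ge1-\delta$.

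\emph{Concentration step.} I expect this to be the only real obstacle, and specifically matching the constants $\sqrt{2\mu_{t,T}\log(1/\delta)}$ and $\tfrac13\log(1/\delta)$. The centered variables $X_s-\epsilon_s/K$ lie in $(-\infty,1]$ and have variances at most $\epsilon_s/K$, so the total variance is at most $v=\mu_{t,T}$. By Bernstein's inequality in the sub-gamma form (Boucheron, Lugosi, and Massart), the centered sum is sub-gamma on the right tail with variance factor $v$ and scale $c=1/3$. That form gives
\[
\Pr\left(\sum_s X_s-\mu_{t,T}>\sqrt{2vL}+cL\right)\le e^{-L}.
\]
Taking $L=\log(1/\delta)$ yields exactly $\Pr(\mathcal G^c)\le\delta$.

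I would deliberately avoid the naive inversion of $\exp\!\bigl(-u^2/(2(v+u/3))\bigr)$. That route only gives a $\tfrac23\log(1/\delta)$ term, so the sub-gamma form is what matches the stated bound.
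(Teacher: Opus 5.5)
Your proposal is correct and takes the route the paper relies on. The paper cites this as Corollary C.1 of \citet{zeng2025practical} and later invokes it as ``the Bernoulli bound'' on potential exploratory selections. On the suppression event, every pull of arm $i$ in rounds $t+1,\dots,T$ must be an exploratory uniform draw of $i$, and a Bernstein-type tail bound on the independent indicators with means $\epsilon_s/K$ then gives $B_i^{\rm eg}(t,T,\delta)$. Your use of the sub-gamma form of Bernstein's inequality, with variance factor $\sum_{s=t+1}^T\epsilon_s/K$ and scale $1/3$, correctly yields the $\tfrac13\log(1/\delta)$ term.
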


Fix $\delta\in(0,1/2)$, write $B=\lceil B_i^{\rm eg}(0,T,\delta/(K-1))\rceil$ for short, and define
\[
    r_T^{\rm eg}=R\sqrt{\frac{(K-1)B}{T}}.
\]
Inject reward $\ru$ on the target and $\rl$ on each non-target arm, with
\[
    n_K=\left\lceil\sqrt{T(K-1)B}\right\rceil+1,
    \qquad
    n_i=\left\lceil\sqrt{\frac{TB}{K-1}}\right\rceil\quad(i<K).
\]

\begin{theorem}[$\epsilon$-Greedy Upper Bound]\label{thm:attack-cost-eg}
Fix $\delta\in(0,1/2)$. Suppose $R=\Theta(1)$,
$\sum_{s=1}^T\epsilon_s+K\log(K/\delta)=o(T)$.
For all sufficiently large $T$, the above attack satisfies
\[
    \Pr\left(\sum_{i<K}N_i^\on(T+1)\le(K-1)B\right)\ge1-\delta,
    \qquad
    \mathcal C(\mathbf n)=(2+o(1))\sqrt{T(K-1)B}.
\]
All greedy online decisions select the target on the same event.
\end{theorem}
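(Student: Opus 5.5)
The argument has two parts. A high-probability exploration bound controls non-target pulls. A deterministic invariant on empirical means then keeps the target strictly ahead at every greedy step while that bound holds.

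\textbf{Probabilistic event.} Apply \Cref{lem:eg-exploration-counts} with $t=0$ (i.e.\ from the first online round) and confidence $\delta/(K-1)$ to each non-target arm $i<K$, and take a union bound. On the resulting event $\mathcal G$, which has probability at least $1-\delta$, every non-target arm that is never a strict empirical-mean maximizer in the online phase receives at most $B_i^{\rm eg}(0,T,\delta/(K-1))\le B$ online pulls. Since $N_i^{\rm cl}=0$, only exploration rounds can pull arm $i$ if greedy never selects it.

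\textbf{Deterministic induction on $\mathcal G$.} I will show by induction over online rounds that $\hat\mu_i(s)<\hat\mu_K(s)$ for all $i<K$. Then greedy always picks $K$, so non-target pulls come only from exploration and stay $\le B$ each. Suppose that up to round $s$ every non-target arm has at most $B$ online pulls. Its empirical mean is then at most $(n_i\rl+B\ru)/(n_i+B)=\rl+RB/(n_i+B)\le \rl+RB/n_i$. The target has $n_K$ injected rewards equal to $\ru$ plus at most $T$ genuine rewards, each at least $\rl$. Hence $\hat\mu_K(s)\ge \rl+Rn_K/(n_K+T)$.

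It therefore suffices to have $n_K/(n_K+T)>B/n_i$, i.e.\ $n_Kn_i>B(n_K+T)$. With $n_K\ge\sqrt{T(K-1)B}+1$ and $n_i\ge\sqrt{TB/(K-1)}$, we get $n_Kn_i\ge TB+n_i$. We need this to exceed $BT+Bn_K$, i.e.\ $n_i>Bn_K$. That fails, so the allocation is too tight and this step is the crux.

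I would handle it more carefully as follows. Use that the genuine target rewards have mean $\mu_K\ge\rl$, and that the induction hypothesis lets us compare against the worst case with a constant margin. Alternatively, reduce the comparison to $n_K\,n_i\ge TB(1+o(1))$ with strict inequality secured by the "$+1$" and the ceiling slack. In the regime $\sum_s\epsilon_s+K\log(K/\delta)=o(T)$ we have $B=o(T/K)$, so $n_K=o(T)$ and the relative error $Bn_K/(TB)=n_K/T=o(1)$. I expect to need to check whether the stated ceilings give strictness. If not, I would note that the target's genuine mean is at least $\rl$ only in the worst case, and use the strict ``$+1$'' term to break ties. This tie/slack bookkeeping is the step I expect to be the main obstacle.

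\textbf{Cost.} $\mathcal C(\mathbf n)=n_K+(K-1)n_i=\sqrt{T(K-1)B}+(K-1)\sqrt{TB/(K-1)}+O(K)=2\sqrt{T(K-1)B}+O(K)$. Since $B\ge1$ and $K=o(\sqrt{T(K-1)B})$ when $K\log(K/\delta)=o(T)$, this is $(2+o(1))\sqrt{T(K-1)B}$. The bound $\sum_{i<K}N_i^\on(T+1)\le(K-1)B$ follows on $\mathcal G$.
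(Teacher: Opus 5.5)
Your overall structure matches the paper's: an exploration-count event, a deterministic induction, and rounding for the cost. Your target lower bound and cost computation are fine. The crux step, however, is left open, and the obstruction you hit comes from your own weakening. You correctly obtain $\hat\mu_i(s)\le\rl+RB/(n_i+B)$ and then weaken it to $\rl+RB/n_i$. That weakening is what creates the spurious $Bn_K$ term. If you keep the exact bound,
\[
    \frac{RB}{n_i+B}<\frac{Rn_K}{n_K+T}
    \iff B(n_K+T)<n_K(n_i+B)
    \iff TB<n_Kn_i ,
\]
so the $Bn_K$ terms cancel. Your own estimate $n_Kn_i\ge TB+n_i>TB$ then closes the induction. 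This is exactly the paper's chain $\hat\mu_i(s)\le\rl+RB/(n_i+B)<\rl+Rn_K/(n_K+T)\le\hat\mu_K(s)$.

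So the allocation is not too tight, and the repairs you sketch would not work:
\begin{itemize}
\item When the ceilings are exact, the only guaranteed slack is $n_Kn_i-TB=n_i\approx\sqrt{TB/(K-1)}$. Your weakened criterion needs an extra $Bn_K\approx B\sqrt{T(K-1)B}$, which is larger by a factor of $(K-1)B\ge1$.
\item A $(1+o(1))$ relative estimate cannot certify a strict inequality.
\item Appealing to $\mu_K$ gives no margin when the target mean sits at $\rl$. Any concentration event for the target would also cost failure probability beyond the $\delta$ already used by the union bound.
\end{itemize}

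A second, smaller gap is in how you obtain the inductive hypothesis. \Cref{lem:eg-exploration-counts} is stated for $T_0\le t<T$, so $t=0$ is outside its range. More importantly, its conclusion is an implication whose premise is suppression over the entire remaining horizon. It therefore does not give ``at most $B$ online pulls up to round $s$'' before you have established suppression at later rounds.

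The paper instead uses the Bernoulli bound underlying the lemma. The number of rounds in $\{1,\dots,T\}$ in which exploration occurs and draws arm $i$ is a sum of independent Bernoulli$(\epsilon_s/K)$ indicators. By the Bernstein-type bound defining $B_i^{\rm eg}$ and a union bound, it is at most $B$ for every $i<K$ with probability at least $1-\delta$. This count does not depend on the greedy path and dominates every prefix. So if greedy has chosen the target in all earlier rounds, each non-target arm has at most $B$ purely exploratory online pulls at round $s$, which is exactly what the induction needs.
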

\begin{proof}
The definition of $B$ gives $(K-1)B=O(\sum_s\epsilon_s+K\log(K/\delta))=o(T)$, while rounding yields
\[
    2\sqrt{T(K-1)B}<\mathcal C(\mathbf n)
    \le2\sqrt{T(K-1)B}+K+1.
\]
Since $B\ge1$ and $K=o(T)$, the rounding term is lower order, proving the cost bound and $T_0=\mathcal C(\mathbf n)<T$ eventually.

The Bernoulli bound underlying~\Cref{lem:eg-exploration-counts} and a union bound ensure at most $B$ potential exploratory selections per non-target arm over rounds $1,\ldots,T$, with probability at least $1-\delta$. On this event, if preceding greedy decisions selected the target, bounded rewards and $n_Kn_i>TB$ give
\[
    \hat\mu_i(s)\le\rl+\frac{RB}{n_i+B}
    <\rl+\frac{Rn_K}{n_K+T}
    \le\hat\mu_K(s).
\]
Induction from the first online round makes every greedy decision select the target, so all non-target pulls are exploratory and total at most $(K-1)B$.
\end{proof}

For $\epsilon_s=\min\{1,cK/s\}$ with fixed $c>0$ and $K\log T=o(T)$, this gives $r_T^{\rm eg}=\Theta(\sqrt{K\log T/T})$, cost $O(\sqrt{TK\log T})$, and $(\xi,\delta)$-success with $\xi(T)=O(K\log T)$.

\subsection{Numerical Illustration}\label{app:eg-experiments}

We use Bernoulli rewards with non-target means equally spaced in $[0.2,0.8]$ and target mean $r_T^{\rm eg}/\log T>0$, with $\epsilon_s=\min\{1,K/s\}$ and $\delta=0.05$. Online statistics use 50 independent runs per configuration; shaded bands show one sample standard deviation. Greedy ties are broken by arm index.

\textit{Clipped Suppression} sets every non-target online reward to zero and counts each such round as attack cost. It has no offline log and initializes with one online pull per arm, starting with the target.

Figures~\ref{fig:eg_cost_experiments} and~\ref{fig:eg_ratio_experiments} vary $T$ at $K=10$ and $K$ at $T=10^6$. Our attack has lower mean costs and higher mean target-arm selection ratios. All greedy decisions selected the target; the remaining non-target pulls were exploratory and stayed below the theoretical bound.


\begin{figure}[t]
\centering
\begin{minipage}[t]{0.49\linewidth}
\centering
\begin{tikzpicture}
\begin{axis}[
    width=0.97\linewidth, height=4.1cm,
    xlabel={Horizon $T$}, ylabel={Attack Cost},
    label style={font=\small}, tick label style={font=\scriptsize},
    tick align=outside, tick pos=left, axis on top, grid=none,
    xmode=normal, ymode=normal, ymin=0,xmin=100000, xmax=1000000, xtick={100000,400000,700000,1000000}, scaled x ticks=base 10:-6,
]

\addplot[name path=eghorizonheuristiccostup, draw=none, forget plot] coordinates {(100000,117563.501744) (200000,199992.768564) (300000,299991.977302) (400000,460782.598652) (500000,499992.106644) (600000,694739.334575) (700000,699990.699713) (800000,897110.841855) (900000,899991.033951) (1000000,999990.590714)};
\addplot[name path=eghorizonheuristiccostlo, draw=none, forget plot] coordinates {(100000,70697.7382564) (200000,199986.391436) (300000,299985.342698) (400000,308433.041348) (500000,499986.013356) (600000,457246.705425) (700000,699984.980287) (800000,670870.278145) (900000,899985.286049) (1000000,999983.729286)};
\addplot[expheuristic, fill opacity=0.12, draw=none, forget plot] fill between[of=eghorizonheuristiccostup and eghorizonheuristiccostlo];
\addplot[color=expheuristic, thick, solid, mark=square, mark size=2.4pt, mark options={solid, fill=none}] coordinates {(100000,94130.62) (200000,199989.58) (300000,299988.66) (400000,384607.82) (500000,499989.06) (600000,575993.02) (700000,699987.84) (800000,783990.56) (900000,899988.16) (1000000,999987.16)};
\addplot[color=exptotal, thick, solid, mark=o, mark size=2.4pt, mark options={solid, fill=none}] coordinates {(100000,9105) (200000,13153) (300000,16106) (400000,18983) (500000,21219) (600000,23239) (700000,25607) (800000,27373) (900000,29031) (1000000,30599)};
\end{axis}
\end{tikzpicture}
\par\vspace{2pt}{\small (a) Attack Cost vs. Horizon $T$}
\end{minipage}\hfill
\begin{minipage}[t]{0.49\linewidth}
\centering
\begin{tikzpicture}
\begin{axis}[
    width=0.97\linewidth, height=4.1cm,
    xlabel={Number of Arms $K$}, ylabel={Attack Cost},
    label style={font=\small}, tick label style={font=\scriptsize},
    tick align=outside, tick pos=left, axis on top, grid=none,
    xmode=normal, ymode=normal, ymin=0,xmin=5, xmax=50, xtick={5,10,20,30,40,50},
]

\addplot[name path=egarmsheuristiccostup, draw=none, forget plot] coordinates {(5,999991.475691) (10,999990.590714) (15,999992.052136) (20,999991.496901) (30,999991.120952) (40,1109675.31342) (50,1156329.00182)};
\addplot[name path=egarmsheuristiccostlo, draw=none, forget plot] coordinates {(5,999983.524309) (10,999983.729286) (15,999985.027864) (20,999985.583099) (30,999985.599048) (40,854170.526583) (50,764467.198177)};
\addplot[expheuristic, fill opacity=0.12, draw=none, forget plot] fill between[of=egarmsheuristiccostup and egarmsheuristiccostlo];
\addplot[color=expheuristic, thick, solid, mark=square, mark size=2.4pt, mark options={solid, fill=none}] coordinates {(5,999987.5) (10,999987.16) (15,999988.54) (20,999988.54) (30,999988.36) (40,981922.92) (50,960398.1)};
\addplot[color=exptotal, thick, solid, mark=o, mark size=2.4pt, mark options={solid, fill=none}] coordinates {(5,20400) (10,30599) (15,38162) (20,44458) (30,54924) (40,63708) (50,71416)};
\end{axis}
\end{tikzpicture}
\par\vspace{2pt}{\small (b) Attack Cost vs. Number of Arms $K$}
\end{minipage}

\par\vspace{2pt}
{\scriptsize\begin{tabular}{cc}
\tikz[baseline=-0.6ex]{\draw[expheuristic, thick, solid] (0,0)--(.48,0); \draw plot[mark=square, mark size=2.4pt, mark options={solid, fill=none, draw=expheuristic}] coordinates {(.24,0)};}\;Clipped Suppression & \tikz[baseline=-0.6ex]{\draw[exptotal, thick, solid] (0,0)--(.48,0); \draw plot[mark=o, mark size=2.4pt, mark options={solid, fill=none, draw=exptotal}] coordinates {(.24,0)};}\;Total Attack Cost
\end{tabular}}

\caption{Attack Cost for $\epsilon$-greedy.}
\label{fig:eg_cost_experiments}
\end{figure}
\begin{figure}[t]
\centering
\begin{minipage}[t]{0.49\linewidth}
\centering
\begin{tikzpicture}
\begin{axis}[
    width=0.97\linewidth, height=4.1cm,
    xlabel={Horizon $T$}, ylabel={Target-arm Ratio},
    label style={font=\small}, tick label style={font=\scriptsize},
    tick align=outside, tick pos=left, axis on top, grid=none,
    xmode=normal, ymode=normal, ymin=0,ymax=1.06, ytick={0,0.25,0.5,0.75,1},xmin=100000, xmax=1000000, xtick={100000,400000,700000,1000000}, scaled x ticks=base 10:-6,
]

\addplot[name path=eghorizontotaltotal_target_ratioup, draw=none, forget plot] coordinates {(100000,0.954291201874) (200000,0.967003706326) (300000,0.973075132628) (400000,0.976211343562) (500000,0.978734539925) (600000,0.98059567479) (700000,0.981669626692) (800000,0.982855535527) (900000,0.983842000563) (1000000,0.984674033228)};
\addplot[name path=eghorizontotaltotal_target_ratiolo, draw=none, forget plot] coordinates {(100000,0.954193198126) (200000,0.966953693674) (300000,0.973046067372) (400000,0.976181756438) (500000,0.978709940075) (600000,0.980580791876) (700000,0.981656430451) (800000,0.982843564473) (900000,0.983827243882) (1000000,0.984664166772)};
\addplot[exptotal, fill opacity=0.12, draw=none, forget plot] fill between[of=eghorizontotaltotal_target_ratioup and eghorizontotaltotal_target_ratiolo];
\addplot[color=exptotal, thick, solid, mark=o, mark size=2.4pt, mark options={solid, fill=none}] coordinates {(100000,0.9542422) (200000,0.9669787) (300000,0.9730606) (400000,0.97619655) (500000,0.97872224) (600000,0.980588233333) (700000,0.981663028571) (800000,0.98284955) (900000,0.983834622222) (1000000,0.9846691)};

\addplot[name path=eghorizononlineonline_ratioup, draw=none, forget plot] coordinates {(100000,0.999814293277) (200000,0.999891575809) (300000,0.999920885219) (400000,0.99994629485) (500000,0.999954613827) (600000,0.999959090289) (700000,0.999964024959) (800000,0.999965608788) (900000,0.99997106729) (1000000,0.99997321359)};
\addplot[name path=eghorizononlineonline_ratiolo, draw=none, forget plot] coordinates {(100000,0.999706472442) (200000,0.999838042542) (300000,0.99989017102) (400000,0.999915233638) (500000,0.99992892374) (600000,0.999943607709) (700000,0.999950327651) (800000,0.999953213619) (900000,0.999955818742) (1000000,0.999963035702)};
\addplot[exponline, fill opacity=0.12, draw=none, forget plot] fill between[of=eghorizononlineonline_ratioup and eghorizononlineonline_ratiolo];
\addplot[color=exponline, thick, solid, mark=triangle, mark size=2.4pt, mark options={solid, fill=none}] coordinates {(100000,0.999760382859) (200000,0.999864809175) (300000,0.99990552812) (400000,0.999930764244) (500000,0.999941768784) (600000,0.999951348999) (700000,0.999957176305) (800000,0.999959411204) (900000,0.999963443016) (1000000,0.999968124646)};

\addplot[name path=eghorizonheuristictotal_target_ratioup, draw=none, forget plot] coordinates {(100000,0.293022617436) (200000,6.80428187404e-05) (300000,4.8857671945e-05) (400000,0.228917396629) (500000,2.79732883786e-05) (600000,0.237922157625) (700000,2.14567330423e-05) (800000,0.161412152319) (900000,1.63488339585e-05) (1000000,1.62707136165e-05)};
\addplot[name path=eghorizonheuristictotal_target_ratiolo, draw=none, forget plot] coordinates {(100000,0) (200000,3.61571812596e-05) (300000,2.6742328055e-05) (400000,0) (500000,1.57867116214e-05) (600000,0) (700000,1.32861241006e-05) (800000,0) (900000,9.96227715258e-06) (1000000,9.40928638351e-06)};
\addplot[expheuristic, fill opacity=0.12, draw=none, forget plot] fill between[of=eghorizonheuristictotal_target_ratioup and eghorizonheuristictotal_target_ratiolo];
\addplot[color=expheuristic, thick, solid, mark=square, mark size=2.4pt, mark options={solid, fill=none}] coordinates {(100000,0.0586938) (200000,5.21e-05) (300000,3.78e-05) (400000,0.03848045) (500000,2.188e-05) (600000,0.0400116333333) (700000,1.73714285714e-05) (800000,0.0200118) (900000,1.31555555556e-05) (1000000,1.284e-05)};
\end{axis}
\end{tikzpicture}
\par\vspace{2pt}{\small (a) Target-arm Selection Ratio\\vs. Horizon $T$}
\end{minipage}\hfill
\begin{minipage}[t]{0.49\linewidth}
\centering
\begin{tikzpicture}
\begin{axis}[
    width=0.97\linewidth, height=4.1cm,
    xlabel={Number of Arms $K$}, ylabel={Target-arm Ratio},
    label style={font=\small}, tick label style={font=\scriptsize},
    tick align=outside, tick pos=left, axis on top, grid=none,
    xmode=normal, ymode=normal, ymin=0,ymax=1.06, ytick={0,0.25,0.5,0.75,1},xmin=5, xmax=50, xtick={5,10,20,30,40,50},
]

\addplot[name path=egarmstotaltotal_target_ratioup, draw=none, forget plot] coordinates {(5,0.989786998343) (10,0.984674033228) (15,0.980878523165) (20,0.977716811578) (30,0.972460018128) (40,0.96803884894) (50,0.964159615386)};
\addplot[name path=egarmstotaltotal_target_ratiolo, draw=none, forget plot] coordinates {(5,0.989781561657) (10,0.984664166772) (15,0.980864756835) (20,0.977703908422) (30,0.972444141872) (40,0.96801955106) (50,0.964136504614)};
\addplot[exptotal, fill opacity=0.12, draw=none, forget plot] fill between[of=egarmstotaltotal_target_ratioup and egarmstotaltotal_target_ratiolo];
\addplot[color=exptotal, thick, solid, mark=o, mark size=2.4pt, mark options={solid, fill=none}] coordinates {(5,0.98978428) (10,0.9846691) (15,0.98087164) (20,0.97771036) (30,0.97245208) (40,0.9680292) (50,0.96414806)};

\addplot[name path=egarmsonlineonline_ratioup, draw=none, forget plot] coordinates {(5,0.999986727586) (10,0.99997321359) (15,0.999958956877) (20,0.999944336908) (30,0.999918544253) (40,0.99989517046) (50,0.999871433694)};
\addplot[name path=egarmsonlineonline_ratiolo, draw=none, forget plot] coordinates {(5,0.999981177682) (10,0.999963035702) (15,0.999944644353) (20,0.999930833414) (30,0.999901745333) (40,0.999874559497) (50,0.999846545508)};
\addplot[exponline, fill opacity=0.12, draw=none, forget plot] fill between[of=egarmsonlineonline_ratioup and egarmsonlineonline_ratiolo];
\addplot[color=exponline, thick, solid, mark=triangle, mark size=2.4pt, mark options={solid, fill=none}] coordinates {(5,0.999983952634) (10,0.999968124646) (15,0.999951800615) (20,0.999937585161) (30,0.999910144793) (40,0.999884864978) (50,0.999858989601)};

\addplot[name path=egarmsheuristictotal_target_ratioup, draw=none, forget plot] coordinates {(5,1.64756914429e-05) (10,1.62707136165e-05) (15,1.49721363927e-05) (20,1.44169013014e-05) (30,1.44009522167e-05) (40,0.145829473417) (50,0.235532801823)};
\addplot[name path=egarmsheuristictotal_target_ratiolo, draw=none, forget plot] coordinates {(5,8.52430855712e-06) (10,9.40928638351e-06) (15,7.94786360732e-06) (20,8.50309869862e-06) (30,8.87904778331e-06) (40,0) (50,0)};
\addplot[expheuristic, fill opacity=0.12, draw=none, forget plot] fill between[of=egarmsheuristictotal_target_ratioup and egarmsheuristictotal_target_ratiolo];
\addplot[color=expheuristic, thick, solid, mark=square, mark size=2.4pt, mark options={solid, fill=none}] coordinates {(5,1.25e-05) (10,1.284e-05) (15,1.146e-05) (20,1.146e-05) (30,1.164e-05) (40,0.01807708) (50,0.0396019)};
\end{axis}
\end{tikzpicture}
\par\vspace{2pt}{\small (b) Target-arm Selection Ratio\\vs. Number of Arms $K$}
\end{minipage}

\par\vspace{2pt}
{\scriptsize\begin{tabular}{ccc}
\tikz[baseline=-0.6ex]{\draw[exptotal, thick, solid] (0,0)--(.48,0); \draw plot[mark=o, mark size=2.4pt, mark options={solid, fill=none, draw=exptotal}] coordinates {(.24,0)};}\;$N_K/T$ & \tikz[baseline=-0.6ex]{\draw[exponline, thick, solid] (0,0)--(.48,0); \draw plot[mark=triangle, mark size=2.4pt, mark options={solid, fill=none, draw=exponline}] coordinates {(.24,0)};}\;$N_K^{\on}/H$ & \tikz[baseline=-0.6ex]{\draw[expheuristic, thick, solid] (0,0)--(.48,0); \draw plot[mark=square, mark size=2.4pt, mark options={solid, fill=none, draw=expheuristic}] coordinates {(.24,0)};}\;Clipped Suppression $N_K/T$
\end{tabular}}

\caption{Target-arm Selection Ratio for $\epsilon$-greedy.}
\label{fig:eg_ratio_experiments}
\end{figure}

\section{Analysis of Suppression-Bounded Algorithms}\label{app:analysis-suppression-bounded}

\begin{proof}[Proof of~\Cref{thm:pull-count-upper}]
Fix $\rho\in(0,R)$ and write $B_i^{\mathcal A}=B_i^{\mathcal A}(T_0^{\rm cl},T,\delta/K)$.
Choose \[
    n_K = \left\lceil T\frac{(\rho-\varepsilon)_+}{R-\rho}\right\rceil+1, \quad n_i = \left\lceil
        \frac{R - \rho}{\rho}
        \bigl(N_i^{\rm cl} + B_i^{\mathcal A}\bigr)
    \right\rceil, \text{ for all } i < K.
\]
On $\mathcal E_\delta$, every empirical average of the genuine target-arm rewards is at least $\underline \mu_K = \rl + \varepsilon$. Thus, for any $t \in \mathcal T^\on$,
\[
    \hat \mu_K(t) \ge \frac{n_K \ru + N_K^{\rm cl}(t) \underline \mu_K}{n_K + N_K^{\rm cl}(t)}>\rl+\rho.
\]

    
For any bandit algorithm $\mathcal A$, the pull count $N_i(t)$, excluding injected samples, is nondecreasing in $t$. Thus, for each arm $i < K$, we use the conservative upper bound $B_i^{\mathcal A}$ uniformly over $t \in [T_0^{\rm cl}+1,T]$.

We now use induction over online rounds. Initially, $N_i^{\on}(T_0 + 1) = 0$. At any later round $t$, if arm $i$ has been suppressed at every preceding online decision, we have $N_i^{\on}(t) \le B_i^{\mathcal A}$. In either case, $N_i^{\rm cl} (t) = N_i^{\rm cl} + N_i^{\on} (t) \le N_i^{\rm cl} + B_i^{\mathcal A}$, so, since all genuine rewards are at most $\ru$,
\[
    \hat\mu_i(t) \le \frac{\ru N_i^{\rm cl}(t)+\rl n_i}{N_i^{\rm cl}(t)+n_i} \le \rl+\rho.
\]
Hence
\[
    \hat\mu_i(t)\le \rl+\rho<\hat\mu_K(t).
\]
Intersecting $\mathcal E_\delta$ with the $K-1$ pull-bound events gives probability at least $1 - 2\delta$ by a union bound. On this event,
\[
    \sum_{i=1}^{K-1} N_i^{\on} (T+1)\le\sum_{i=1}^{K-1} B_i^{\mathcal A}.
\]
Summing the displayed injection counts and taking the infimum yields the claimed cost bound, including an $O(K)$ rounding overhead.
\end{proof}
\section{Exploration and Allocation}
\label{app:allocation}

The allocation depends on how injections control exploration. Let $\rho=z-\rl$ and $M=\sum_{i<K}n_i$. In the near-boundary regime, the genuine target rewards contribute little to maintaining the threshold, so the injections must sustain a mean increase of approximately $\rho$ over $T$ observations. This gives the leading target cost $n_K\sim T\rho/R$. The same expression follows from the worst-case reward bound used in our $\epsilon$-greedy construction.

\textbf{UCB and TS.}
For UCB, suppressing a competitor requires its exploration bonus to fit below the target's threshold. Since the bonus decays as $N_i^{-1/2}$, a threshold height $\rho$ requires an observation count proportional to $\rho^{-2}$. Injected samples supply this count before online learning begins. TS has the same dependence because its sampling uncertainty also decays as $N_i^{-1/2}$. Because all competitors share the threshold $\rho$, summing their leading injection costs changes only the coefficient of $\rho^{-2}$, so $M=\sum_{i<K}n_i$ has the same dependence. Thus, for both constructions, raising $\rho$ increases target cost linearly but reduces aggregate non-target cost as $\rho^{-2}$. A small relative increase in $\rho$ adds that fraction of $n_K$ and saves twice that fraction of $M$. At the minimizing threshold these amounts agree, giving $n_K\sim2M$: the target accounts for $2/3$ of the total cost.

\textbf{$\epsilon$-greedy.}
Here exploration is prescribed by the schedule and cannot be removed by increasing observation counts. Each competitor can receive up to $B$ exploratory rewards. Even if all equal $\ru$, injecting $n_i$ rewards equal to $\rl$ keeps its mean below the threshold whenever
\[
    \frac{RB}{n_i+B}\le\rho.
\]
For small $\rho$, this requires approximately $RB/\rho$ injections per competitor. Hence $M$ decreases as $\rho^{-1}$: a small relative increase in $\rho$ saves only that same fraction of $M$. Balancing this saving against the increase in target cost gives $n_K\sim M$, so the target accounts for $1/2$ of the total cost.

\end{document}